\newcommand{\cH}{\mathcal{H}}
\newcommand{\corref}[1]{Corollary~\ref{#1}}
\newcommand{\badh}{\cH_{\otimes}}
\newtheorem{cor}[theorem]{Corollary}
\renewcommand{\eqref}[1]{Equation~(\ref{#1})}
\newcommand{\figref}[1]{Figure~\ref{#1}}
\newcommand{\secref}[1]{Section~\ref{#1}}
\newcommand{\thmref}[1]{Theorem~\ref{#1}}
\newcommand{\lemref}[1]{Lemma~\ref{#1}}
\newcommand{\defref}[1]{Definition~\ref{#1}}
\newcommand{\propref}[1]{Proposition~\ref{#1}}
\newcommand{\appref}[1]{Appendix~\ref{#1}}
\renewcommand{\P}{\mathbb{P}}
\newcommand{\E}{\mathbb{E}}
\newcommand{\reals}{\mathbb{R}}
\newcommand{\diag}{\mathrm{diag}}
\newcommand{\conv}{\mathrm{conv}}
\newcommand{\half}{{\frac12}}
\newcommand{\lambdamin}{\lambda_{\min}}
\newcommand{\lambdamax}{\lambda_{\max}}
\newcommand{\cX}{\mathcal{X}}
\newcommand{\cF}{\mathcal{F}}
\newcommand{\cA}{\mathcal{A}}
\newcommand{\cD}{\mathcal{D}}
\newcommand{\cW}{\mathcal{W}}
\newcommand{\cN}{\mathcal{N}}
\newcommand{\cG}{\mathcal{G}}
\newcommand{\cC}{\mathcal{C}}
\newcommand{\cR}{\mathcal{R}}
\newcommand{\cZ}{\mathcal{Z}}
\DeclareMathOperator*{\argmin}{argmin}
\newcommand{\loss}{\ell}
\newcommand{\binm}{\{\pm 1\}^m}
\newcommand{\binlab}{\{\pm 1\}}
\newcommand{\sign}{\textrm{sign}}
\newcommand{\norm}[1]{\|#1\|}
\newcommand{\dotprod}[1]{\langle #1 \rangle}
\newcommand{\ceil}[1]{{\lceil #1 \rceil}}
\newcommand{\floor}[1]{\lfloor #1\rfloor}
\newcommand{\ball}{\mathbb{B}^d_1}
\DeclareMathOperator{\trace}{trace}
\newcommand{\fullkgname}{margin-adapted dimension}
\newcommand{\rmom}{\rho}
\newcommand{\ramp}{\mathrm{ramp}}
\newcommand{\rampf}{\textsc{ramp}}
\newcommand{\chop}[1]{\llbracket {#1} \rrbracket}
\newcommand{\dfamily}{\cD^\textrm{sg}} 
\newcommand{\mt}[1]{\mathbb{#1}}
\title{Distribution-Dependent Sample Complexity \\ of Large Margin Learning}
\author{\name{Sivan Sabato} \email{sivan.sabato@microsoft.com} \\
\addr Microsoft Research New England\\
1 Memorial Drive\\
Cambridge, MA 02142, USA
\AND
\name{Nathan Srebro} \email{nati@ttic.edu}\\
\addr{Toyota Technological Institute at Chicago\\
6045 S. Kenwood Ave.\\
Chicago, IL 60637, USA}
\AND
\name{Naftali Tishby} \email{tishby@cs.huji.ac.il} \\
\addr The Rachel and Selim Benin School of Computer Science and Engineering\\
The Hebrew University\\
Jerusalem 91904, Israel
}
\begin{document}
\maketitle

\begin{abstract}%
  We obtain a tight distribution-specific characterization of the sample
  complexity of large-margin classification with $L_2$ regularization: We
  introduce the \emph{\mbox{\fullkgname}}, which is a simple function of the
  second order statistics of the data distribution, and show
  distribution-specific upper {\em and} lower bounds on the sample complexity,
  both governed by the \fullkgname\ of the data distribution. The upper bounds
  are universal, and the lower bounds hold for the rich family of sub-Gaussian
  distributions with independent features. We conclude that this new quantity tightly characterizes the true sample complexity of large-margin classification.  To prove the lower bound,
  we develop several new tools of independent interest. These include new connections
  between shattering and hardness of learning, new properties of shattering with linear classifiers, 
  and a new lower bound on the smallest eigenvalue of a random Gram matrix generated by sub-Gaussian variables. Our results can be used to quantitatively compare large margin learning to other learning rules, and to improve the effectiveness of methods that use sample complexity bounds, such as active learning.
\end{abstract}

\begin{keywords}
  supervised learning, sample complexity, linear classifiers, distribution-dependence
\end{keywords}

\section{Introduction}

In this paper we pursue a tight characterization of the sample complexity of
learning a classifier, under a particular data distribution, and using a
particular learning rule. 

Most learning theory work focuses on providing sample-complexity upper bounds which hold for a large class of distributions. For instance,
standard distribution-free VC-dimension analysis shows that if one uses the Empirical Risk Minimization (ERM) learning rule, then the sample complexity of learning a
classifier from a hypothesis class with VC-dimension $d$ is at most $O\left(\frac{d}{\epsilon^2}\right)$, where  $\epsilon$ is the maximal excess classification error \citep{VapnikCh71,AnthonyBa99}. Such upper bounds can be useful for understanding the positive aspects of a
learning rule.  However, it is difficult to understand the deficiencies of a
learning rule, or to compare between different rules, based on upper bounds
alone. This is because it is possible, and is often the case, that the actual number of samples 
required to get a low error, for a given data distribution using a given learning rule, is much lower than the sample-complexity upper bound. As a simple example, suppose that the support of a given distribution is restricted to a subset of the domain. If the VC-dimension of the hypothesis class, when restricted to this subset, is smaller than $d$, then learning with respect to this distribution will require less examples than the upper bound predicts.

Of course, some sample complexity upper bounds are known to be tight or to have
an almost-matching lower bound. For instance, the VC-dimension upper bound is tight \citep{VapnikCh74}. This means that there exists
\emph{some} data distribution in the class covered by the upper bound, for
which this bound cannot be improved. Such a tightness result shows that there cannot be a better upper bound
that holds for this entire class of distributions. But it does not imply that
the upper bound characterizes the true sample complexity for every 
\emph{specific} distribution in the class.

The goal of this paper is to identify a simple quantity, which is a function of
the distribution, that {\em does} precisely characterize the sample complexity
of learning this distribution under a specific learning rule. We focus on the important hypothesis class of linear classifiers, and on the popular 
rule of \emph{margin-error-minimization} (MEM). Under this learning rule, a learner 
must always select a linear classifier that minimizes the margin-error on the input
sample.

The VC-dimension of the class of homogeneous linear classifiers in $\reals^d$ is $d$ \citep{Dudley78}. This implies a sample complexity upper bound of $O\left(\frac{d}{\epsilon^2}\right)$ using any MEM algorithm, where $\epsilon$ is the excess error relative
to the optimal margin error.\footnote{This upper bound can be derived analogously to the result for ERM algorithms with $\epsilon$ being the excess classification error. It can also be concluded from our analysis in \thmref{thm:upperbound} below.}
 We also have
that the sample complexity of any MEM algorithm is at most
$O\big(\frac{B^2}{\gamma^2\epsilon^2}\big)$, where $B^2$ is the average squared norm of the data and $\gamma$ is the size of the margin \citep{BartlettMe02}.
Both of these upper bounds are tight. For instance, there exists a distribution with 
an average squared norm of $B^2$ that requires as many as $C\cdot\frac{B^2}{\gamma^2\epsilon^2}$ examples to learn, for some universal constant $C$ \citep[see, e.g.,][]{AnthonyBa99}. However, the VC-dimension upper bound indicates, for instance, that if a distribution induces a large average norm but is supported by a low-dimensional sub-space, then the true number of examples required to reach a low error is much smaller. Thus, neither of 
these upper bounds fully describes the sample complexity of MEM for a \emph{specific} distribution.

We obtain
a tight distribution-specific characterization of the sample complexity of
large-margin learning for a rich class of distributions. 
We present a new quantity, termed the
\emph{\fullkgname}, and use it to provide a tighter distribution-dependent upper
bound, and a matching distribution-dependent lower bound for MEM. The upper bound is universal, 
and the lower bound holds for a rich class of distributions with independent features.

The \fullkgname\ refines both the dimension and the average norm of the data distribution, and
can be easily calculated from the covariance matrix and the mean of the distribution.
We denote this quantity, for a margin of $\gamma$, by $k_\gamma$. Our 
sample-complexity upper bound shows that
$\tilde{O}(\frac{k_\gamma}{\epsilon^2})$ examples suffice in order to learn any
distribution with a \fullkgname\ of $k_\gamma$ using a MEM algorithm with margin $\gamma$.  
We further show that for every distribution in a rich
family of `light tailed' distributions---specifically, product distributions of
sub-Gaussian random variables---the number of samples required for learning by
minimizing the margin error is at least $\Omega(k_{\gamma})$. 

Denote by $m(\epsilon,\gamma,D)$ the number of examples required to achieve an excess error of no more than $\epsilon$ relative to the best possible $\gamma$-margin error for a specific distribution $D$, using a MEM algorithm. 
Our main result shows the following matching distribution-specific upper and lower bounds on the sample complexity of MEM:
\begin{equation}\label{eq:mainresult}
 \Omega(k_\gamma(D)) \leq m(\epsilon,\gamma,D) \leq
 \tilde{O}\left(\frac{k_{\gamma}(D)}{\epsilon^2}\right).
\end{equation}

Our tight characterization, and in particular the distribution-specific lower
bound on the sample complexity that we establish, can be used to compare
large-margin ($L_2$ regularized) learning to other learning rules.  We provide
two such examples: we use our lower bound to rigorously establish a sample
complexity gap between $L_1$ and $L_2$ regularization previously studied in
\cite{Ng04}, and to show a large gap between discriminative and generative
learning on a Gaussian-mixture distribution. The tight bounds can also be used for active learning algorithms in which sample-complexity bounds are used to decide on the next label to query.

In this paper we focus only on large margin classification.  But
in order to obtain the distribution-specific lower bound, we develop
new tools that we believe can be useful for obtaining lower
bounds also for other learning rules. We provide several new results which we use to derive our main results. These include:
\begin{itemize}
\item Linking the fat-shattering of a sample with non-negligible probability to a difficulty of learning using MEM. 
\item Showing that for a convex hypothesis class, fat-shattering is equivalent to shattering with exact margins.
\item Linking the fat-shattering of a set of vectors with the eigenvalues of the Gram matrix of the vectors.
\item Providing a new lower bound for the smallest eigenvalue of a random Gram matrix generated by sub-Gaussian variables. This bound extends previous results in analysis of random matrices.
\end{itemize}

\subsection{Paper Structure} We discuss related work on sample-complexity upper bounds in \secref{sec:related}. We present the problem setting and notation in
\secref{sec:definitions}, and provide some necessary preliminaries in \secref{sec:prelim}. We then introduce the \fullkgname\ in
\secref{sec:marginadapted}.  The sample-complexity upper bound is proved in
\secref{sec:upper}. We prove the lower bound in \secref{sec:lower}. In
\secref{sec:limitations} we show that any non-trivial sample-complexity lower
bound for more general distributions must employ properties other than the covariance matrix of the distribution. We summarize and discuss implication in \secref{sec:conclusions}. Proofs omitted from the text are provided in \appref{app:proofs}

\section{Related Work} \label{sec:related}

As mentioned above, most work on ``sample complexity lower bounds'' is directed at proving
that under some set of assumptions, there exists a data distribution
for which one needs at least a certain number of examples to learn
with required error and confidence
\citep[for instance][]{AntosLu98,EhrenfeuchtHaKeVa88,GentileHe98}.  This type of a
lower bound does not, however, indicate much on the sample complexity
of other distributions under the same set of assumptions.

For distribution-specific lower bounds, the classical analysis of
\citet[Theorem 16.6]{Vapnik95} provides not only sufficient but
also necessary conditions for the learnability of a hypothesis class
with respect to a specific distribution.  The essential condition is
that the metric entropy of the hypothesis class with respect to
the distribution be sub-linear in the limit of an infinite sample
size.  In some sense, this criterion can be seen as providing a
``lower bound'' on learnability for a specific distribution.  However,
we are interested in finite-sample convergence rates, and would like
those to depend on simple properties of the distribution.  The
asymptotic arguments involved in Vapnik's general learnability claim
do not lend themselves easily to such analysis.

\citet{BenedekIt91} show that if the distribution is
known to the learner, a specific hypothesis class is learnable if and
only if there is a finite $\epsilon$-cover of this hypothesis class
with respect to the distribution. 
\citet{Ben-DavidLuPa08} consider a similar setting, and prove
sample complexity lower bounds for learning with any data
distribution, for some binary hypothesis classes on the real line. \citet{VayatisAz99} provide distribution-specific sample complexity
upper bounds for hypothesis classes with a limited VC-dimension, as a
function of how balanced the hypotheses are with respect to the
considered distributions. These bounds are not tight for all distributions, thus they also do not fully characterize the distribution-specific sample complexity.

As can be seen in \eqref{eq:mainresult}, we do not tightly characterize the dependence of the sample complexity on the
desired error \citep[as done, for example, in ][]{SteinwartSc07}, thus our bounds are not tight for asymptotically small error
levels.  Our results are most significant if the desired error level is a constant
well below chance but bounded away from zero.  This is in contrast to classical statistical
asymptotics that are also typically tight, but are valid only for
very small $\epsilon$.  As was recently shown by \citet{LiangSr10}, the sample
complexity for very small $\epsilon$ (in the classical
statistical asymptotic regime) depends on quantities that can be very different from those that control
the sample complexity for moderate error rates, which are more relevant for machine learning.

\section{Problem Setting and Definitions}\label{sec:definitions}
Consider a domain $\cX$, and let $D$ be a distribution over $\cX \times \{\pm 1\}$.  We denote by $D_X$
the marginal distribution of $D$ on $\cX$. 
The misclassification error of a classifier $h:\cX \rightarrow \reals$ on a distribution $D$ is
\[
\loss_0(h,D) \triangleq \P_{(X,Y)\sim D}[Y\cdot h(X) \leq 0].
\]  
The margin error of a classifier $w$ with respect to a margin
$\gamma > 0$ on $D$ is 
\[
\loss_\gamma(h,D) \triangleq \P_{(X,Y)\sim D}[Y\cdot h(X) \leq \gamma].
\]
For a given hypothesis class $\cH \subseteq \binlab^\cX$, the best achievable margin error on $D$ is 
\[
\loss^*_\gamma(\cH, D) \triangleq \inf_{h \in \cH}\loss_\gamma(h,D).
\]
We usually write simply $\loss^*_\gamma(D)$ since $\cH$ is clear from context.

A labeled sample is a (multi-)set $S = \{(x_i,y_i)\}_{i=1}^m \subseteq \cX \times \binlab$. Given $S$, we denote the set of its examples without their labels by $S_X \triangleq \{x_1,\ldots,x_m\}$. We use $S$ also to refer to the uniform distribution over the elements in $S$. Thus the misclassification error of $h:\cX \rightarrow \binlab$ on $S$ 
is 
\[
\loss(h,S) \triangleq \frac{1}{m}|\{i \mid y_i \cdot h(x_i) \leq 0\}|,
\]
and the $\gamma$-margin error on $S$ is \[
\loss_\gamma(h,S) \triangleq \frac{1}{m}|\{i \mid y_i \cdot h(x_i) \leq \gamma\}|.
\]

A learning algorithm is a function $\cA:\cup_{m=1}^\infty (\cX \times \binlab)^m \rightarrow \reals^\cX$, that receives a training set as input, and returns a function for classifying objects in $\cX$ into real values. The high-probability loss of an algorithm $\cA$ with respect to samples of size $m$, a distribution $D$ and a confidence parameter $\delta \in (0,1)$ is 
\[
\loss(\cA,D,m,\delta) = \inf\{\epsilon \geq 0 \mid \P_{S \sim D^m}[\loss(\cA(S), D) \geq \epsilon] \leq \delta\}.
\] 

In this work we investigate the sample complexity of learning using
margin-error minimization (MEM). The relevant class of algorithms is defined
as follows.
\begin{definition}
  An \emph{margin-error minimization (MEM) algorithm} $\cA$ maps a margin parameter $\gamma > 0$ to a learning algorithm $\cA_\gamma$, such that 
 \[
 \forall S \subseteq \cX \times \binlab, \quad\cA_\gamma(S) \in \argmin_{h \in \cH} \loss_\gamma(h,S).
 \]
\end{definition}

The distribution-specific sample complexity for MEM algorithms is the sample size required to guarantee low excess error for the given distribution. Formally, we have the following definition.
\begin{definition}[Distribution-specific sample complexity]
Fix a hypothesis class $\cH \subseteq \binlab^\cX$.
For $\gamma >0$, $\epsilon,\delta \in
[0,1]$, and a distribution $D$, the \emph{distribution-specific sample complexity}, denoted by  $m(\epsilon,\gamma,D,\delta)$, is the minimal sample size such that for any MEM
algorithm $\cA$, and for any $m \geq m(\epsilon,\gamma,D,\delta)$,
\[
\loss_0(\cA_\gamma,D,m,\delta) - \loss^*_\gamma(D) \leq \epsilon.
\]
\end{definition}
Note that we require that
\emph{all} possible MEM algorithms do well on the given distribution. This is because
we are interested in the MEM strategy in general, and thus we study the guarantees that can be provided regardless of any specific MEM implementation.
We sometimes omit $\delta$ and write simply $m(\epsilon, \gamma, D)$, to indicate that $\delta$ is assumed to be some fixed small constant.

In this work we focus on linear classifiers. For simplicity of notation, we assume a Euclidean space $\reals^d$ for some integer $d$, although the results can be easily extended to any separable Hilbert space.
For a real vector $x$, $\norm{x}$ stands for the Euclidean norm.
For a real matrix $\mt{X}$, $\norm{\mt{X}}$ stands for the Euclidean operator norm.

Denote the unit ball in $\reals^d$ by $\ball \triangleq \{w\in \reals^d \mid \norm{w} \leq 1\}$.
We consider the hypothesis class of homogeneous linear separators, $\cW = \{x \mapsto \dotprod{x,w} \mid w \in \ball\}$. We often slightly abuse notation by using $w$ to denote the mapping $x \mapsto \dotprod{x,w}$. 

We often represent sets of vectors in $\reals^d$ using matrices. We say that
$\mt{X} \in \reals^{m\times d}$ is the matrix of a set $\{x_1,\ldots,x_m\}\subseteq
\reals^d$ if the rows in the matrix are exactly the vectors in the set. For
uniqueness, one may assume that the rows of $\mt{X}$ are sorted according to an arbitrary fixed full
order on vectors in $\reals^d$. For a PSD matrix $\mt{X}$ denote the largest 
eigenvalue of $\mt{X}$ by $\lambdamax(\mt{X})$ and the smallest eigenvalue by $\lambdamin(\mt{X})$.

We use the $O$-notation as follows: $O(f(z))$ stands for $C_1+C_2f(z)$
for some constants $C_1,C_2\geq 0$. $\Omega(f(z))$ stands for $C_2f(z)-C_1$
for some constants $C_1,C_2\geq 0$. $\widetilde{O}(f(z))$ stands for $f(z)p(\ln(z)) + C$ for some polynomial $p(\cdot)$ and some constant $C > 0$.

\section{Preliminaries}\label{sec:prelim}

As mentioned above, for the hypothesis class of linear classifiers $\cW$, one can derive a sample-complexity upper bound of the form $O(B^2/\gamma^2 \epsilon^2)$, where $B^2 = \E_{X \sim D}[\norm{X}^2]$ and $\epsilon$ is the excess error relative to the $\gamma$-margin loss. This can be achieved as follows \citep{BartlettMe02}. 
Let $\cZ$ be some domain. The empirical Rademacher complexity of a class of functions $\cF \subseteq \reals^\cZ$  with respect to a set $S = \{z_i\}_{i\in[m]} \subseteq \cZ$
is 
\begin{equation*}
\cR(\cF, S) = \frac{1}{m}\E_\sigma[|\sup_{f \in \cF} \sum_{i\in[m]}\sigma_i f(z_i)|],
\end{equation*}
where $\sigma = (\sigma_1,\ldots,\sigma_m)$ are $m$ independent uniform $\{\pm1\}$-valued variables.
The average Rademacher complexity of $\cF$ with respect to a distribution $D$ over $\cZ$ and a sample size $m$ is 
\begin{equation*}
\cR_m(\cF, D) = \E_{S \sim D^m}[\cR(\cF, S)].
\end{equation*}

Assume a hypothesis class $\cH \subseteq \reals^\cX$ and a loss function $\loss:\binlab\times \reals \rightarrow \reals$. For a hypothesis $h \in \cH$, we introduce the function $h_\loss:\cX \times \binlab \rightarrow \reals$,
 defined by $h_\loss(x,y) = \loss(y,h(x))$. We further define the function class $\cH_\loss = \{ h_\loss \mid h \in \cH\} \subseteq \reals^{\cX \times \binlab}$.

Assume that the range of $\cH_\loss$ is in $[0,1]$. For any $\delta \in (0,1)$, with probability of $1-\delta$ over the draw of samples $S \subseteq \cX \times \binlab$ of size $m$ according to $D$, every $h \in \cH$ satisfies \citep{BartlettMe02}
\begin{equation}\label{eq:rademacherind}
\loss(h, D) \leq \loss(h, S) + 2\cR_m(\cH_\loss, D) + \sqrt{\frac{8\ln(2/\delta)}{m}}.
\end{equation}

To get the desired upper bound for linear classifiers we use the \emph{ramp loss}, which is defined as follows.
For a number $r$, denote $\chop{r} \triangleq \min(\max(r,0),1)$.
The $\gamma$-ramp-loss of a labeled example $(x,y) \in \reals^d \times \binlab$ 
with respect to a linear classifier $w \in \ball$
is $\ramp_\gamma(w,x,y) = \chop{1-y\dotprod{w,x}/\gamma}$. 
Let $\ramp_\gamma(w,D) = \E_{(X,Y)\sim D}[\ramp_\gamma(w,X,Y)]$,
and denote the class of ramp-loss functions by
\[
\rampf_\gamma = \{(x,y) \mapsto \ramp_\gamma(w,x,y) \mid w \in \ball\}.
\]
The ramp-loss is upper-bounded by the margin loss and lower-bounded by the misclassification error. Therefore, the following result can be shown. 
\begin{proposition}\label{prop:ramp}
For any MEM algorithm $\cA$, we have
\begin{equation}\label{eq:propstatement}
\loss_0(\cA_\gamma, D, m, \delta) \leq \loss^*_\gamma(\cH,D) + 2\cR_m(\rampf_\gamma, D) + \sqrt{\frac{14\ln(2/\delta)}{m}}.
\end{equation}
\end{proposition}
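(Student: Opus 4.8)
The plan is to use the $\gamma$-ramp loss as a bridge between the $0/1$ misclassification error (which we want to control) and the empirical $\gamma$-margin error (which an MEM algorithm actually minimizes). The only properties of $\ramp_\gamma$ needed are the two pointwise inequalities $\one[y\dotprod{w,x}\le 0]\le\ramp_\gamma(w,x,y)\le\one[y\dotprod{w,x}\le\gamma]$, which are immediate from $\ramp_\gamma(w,x,y)=\chop{1-y\dotprod{w,x}/\gamma}$: averaging the left one over $D$ gives $\loss_0(w,D)\le\ramp_\gamma(w,D)$, and averaging the right one over a sample $S$ gives $\ramp_\gamma(w,S)\le\loss_\gamma(w,S)$. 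Moreover $\rampf_\gamma$ has range in $[0,1]$, so \eqref{eq:rademacherind} applies to it directly.

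Fix $m$ and $\delta\in(0,1)$, let $\hat w\triangleq\cA_\gamma(S)$ be the output of an arbitrary MEM algorithm on $S\sim D^m$, and, for an arbitrarily small $\eta>0$, fix a \emph{deterministic} hypothesis $w_D$ with $\loss_\gamma(w_D,D)\le\loss^*_\gamma(\cH,D)+\eta$ (possible since $\loss^*_\gamma$ is an infimum). I would invoke two high-probability events over the draw of $S$. Event~A is uniform convergence of the ramp loss: by \eqref{eq:rademacherind} applied with the $\gamma$-ramp loss, with probability at least $1-\delta_1$ we have $\ramp_\gamma(w,D)\le\ramp_\gamma(w,S)+2\cR_m(\rampf_\gamma,D)+\sqrt{8\ln(2/\delta_1)/m}$ for every $w\in\ball$. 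Event~B is single-hypothesis concentration for the fixed $w_D$: since $\loss_\gamma(w_D,S)$ is an average of $m$ i.i.d.\ $\{0,1\}$-valued variables with mean $\loss_\gamma(w_D,D)$, Hoeffding's inequality gives $\loss_\gamma(w_D,S)\le\loss_\gamma(w_D,D)+\sqrt{\ln(1/\delta_2)/(2m)}$ with probability at least $1-\delta_2$. On $A\cap B$ (probability at least $1-\delta_1-\delta_2$) I would chain
\begin{align*}
\loss_0(\hat w,D)
&\le \ramp_\gamma(\hat w,D)
 \le \ramp_\gamma(\hat w,S)+2\cR_m(\rampf_\gamma,D)+\sqrt{\tfrac{8\ln(2/\delta_1)}{m}}\\
&\le \loss_\gamma(\hat w,S)+2\cR_m(\rampf_\gamma,D)+\sqrt{\tfrac{8\ln(2/\delta_1)}{m}}
 \le \loss_\gamma(w_D,S)+2\cR_m(\rampf_\gamma,D)+\sqrt{\tfrac{8\ln(2/\delta_1)}{m}}\\
&\le \loss^*_\gamma(\cH,D)+\eta+2\cR_m(\rampf_\gamma,D)+\sqrt{\tfrac{8\ln(2/\delta_1)}{m}}+\sqrt{\tfrac{\ln(1/\delta_2)}{2m}},
\end{align*}
where the third inequality uses $\ramp_\gamma(\hat w,S)\le\loss_\gamma(\hat w,S)$, the fourth is exactly the defining property of an MEM algorithm ($\hat w$ minimizes $\loss_\gamma(\cdot,S)$ over $\cH$ and $w_D\in\cH$), and the last uses Event~B.

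To finish, I would take $\delta_1+\delta_2=\delta$ and combine the two $\sqrt{\cdot/m}$ deviation terms into a single term $\sqrt{14\ln(2/\delta)/m}$ (routine bookkeeping, with an appropriate choice of the split), and then let $\eta\downarrow 0$ (the left-hand side does not depend on $\eta$, so its infimum over $\eta>0$ of the right-hand side gives $\loss_0(\hat w,D)\le\loss^*_\gamma(\cH,D)+2\cR_m(\rampf_\gamma,D)+\sqrt{14\ln(2/\delta)/m}$ with probability $\ge 1-\delta$); this is exactly \eqref{eq:propstatement}. I do not expect a genuine obstacle here — the argument is a routine application of \eqref{eq:rademacherind} once the ramp-loss sandwich is in hand. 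The one point that needs care is the comparator: $\loss^*_\gamma(\cH,D)$ enters only through the fixed hypothesis $w_D$, while the MEM property controls $\loss_\gamma(\hat w,S)$ only against $\loss_\gamma(w_D,S)$, so the extra one-hypothesis concentration step (Event~B) is unavoidable and is the source of both the second deviation term and the slightly enlarged constant. A cosmetic alternative that produces a bound of the same shape is to apply a single bounded-differences argument to $\sup_{w}\big(\ramp_\gamma(w,D)-\ramp_\gamma(w,S)\big)+\big(\loss_\gamma(w_D,S)-\loss_\gamma(w_D,D)\big)$, fusing Events~A and~B and absorbing both deviations into one $\sqrt{\cdot/m}$ term.
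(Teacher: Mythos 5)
Your proof is correct and follows essentially the same route as the paper's: the ramp-loss sandwich $\loss_0 \le \ramp_\gamma \le \loss_\gamma$, uniform convergence of the ramp loss via \eqref{eq:rademacherind}, the MEM property to pass from $\hat w$ to the comparator, and a single-hypothesis Hoeffding bound for the comparator, with $\delta$ split in half and the two deviation terms merged into $\sqrt{14\ln(2/\delta)/m}$. The only (small, and valid) refinement is that you use an $\eta$-approximate minimizer $w_D$ and let $\eta\downarrow 0$, whereas the paper directly fixes $h^*$ attaining the infimum; this handles the case where the infimum is not achieved but is otherwise the same argument.
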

We give the proof in \appref{app:radramp} for completeness.
Since the $\gamma$-ramp loss is $1/\gamma$ Lipschitz, it follows from \cite{BartlettMe02} that  
\[
\cR_m(\rampf_\gamma, D) \leq \sqrt{\frac{B^2}{\gamma^2 m}}.
\]
Combining this with Proposition~\ref{prop:ramp} we can conclude a sample complexity upper bound of $O(B^2/\gamma^2 \epsilon^2)$.

In addition to the Rademacher complexity, we will also use the classic notions of \emph{fat-shattering} \citep{KearnsSc94} and \emph{pseudo-shattering} \citep{Pollard84}, defined as follows.
\begin{definition}\label{def:shattered}
  Let $\cF$ be a set of functions $f:\cX \rightarrow \reals$, and let $\gamma >
  0$.  The set $\{x_1,\ldots,x_m\} \subseteq \cX$ is
  \emph{$\gamma$-shattered} by $\cF$ with the witness $r \in \reals^m$ if for all $y \in \binm$ there is an $f \in \cF$ such that
  $\forall i\in [m],\:y[i](f(x_i)-r[i]) \geq \gamma$.  
\end{definition}
The $\gamma$-shattering dimension of a hypothesis class is the size of the largest set that is $\gamma$-shattered by this class.
We say that a set is \emph{$\gamma$-shattered at the origin} if it is 
$\gamma$-shattered with the zero vector as a witness.

\begin{definition}\label{def:pseudo}
  Let $\cF$ be a set of functions $f:\cX \rightarrow \reals$, and let $\gamma >
  0$.  The set $\{x_1,\ldots,x_m\} \subseteq \cX$ is
  \emph{pseudo-shattered} by $\cF$ with the witness $r \in \reals^m$ if for all $y \in \binm$ there is an $f \in \cF$ such that
  $\forall i\in [m],\:y[i](f(x_i)-r[i]) > 0$.  
\end{definition}
The pseudo-dimension of a hypothesis class is the size of the largest set that is pseudo-shattered by this class.

\section{The Margin-Adapted Dimension}\label{sec:marginadapted}

When considering learning of linear classifiers using MEM, the dimension-based upper bound and the norm-based upper bound are both tight in the worst-case sense,
that is, they are the best bounds that rely only on the dimensionality or
only on the norm respectively. Nonetheless, neither is tight in a
distribution-specific sense: If the average norm is unbounded while the
dimension is small, then there can be an arbitrarily large gap between the true
distribution-dependent sample complexity and the bound that depends on the average norm. If the
converse holds, that is, the dimension is arbitrarily large while the
average-norm is bounded, then the dimensionality bound is loose.

Seeking a tight distribution-specific analysis, one simple approach to
 tighten these bounds is to consider their minimum, which is proportional to 
$\min(d,B^2/\gamma^2)$. Trivially, this is an upper
bound on the sample complexity as well.  However, this simple combination is also
not tight: Consider a distribution in which there are a few directions with
very high variance, but the combined variance in all other
directions is small (see \figref{fig:illustration}).  We will show that in such situations the sample
complexity is characterized not by the minimum of dimension and norm, but by the sum of the number of high-variance dimensions and the average squared norm in the other directions. This behavior is captured by the \emph{\fullkgname} which we presently define, using the following auxiliary definition.

\begin{figure}[b]
\centering
\includegraphics[width = 0.7\textwidth]{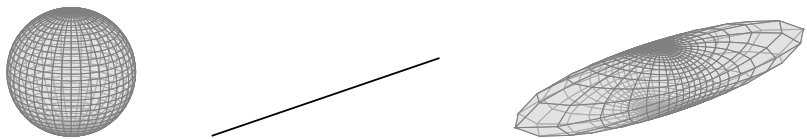}
\caption{Illustrating covariance matrix ellipsoids. left: norm bound is tight; middle: dimension bound is tight; right: neither bound is tight.}
\label{fig:illustration}
\end{figure}

\begin{definition}\label{def:limited}
Let $b>0$ and let $k$ be a positive integer. A distribution $D_X$ over $\reals^d$ is \emph{$(b,k)$-limited} if there exists a sub-space $V \subseteq \reals^d$ of dimension $d-k$ such that 
$
\E_{X\sim D_X}[\norm{\mt{O}_V\cdot X}^2] \leq b,
$
where $\mt{O}_V$ is an orthogonal projection onto $V$. 
\end{definition}

\begin{definition}[\fullkgname]
The \emph{\fullkgname} of a distribution $D_X$,
denoted by $k_\gamma(D_X)$, is the minimum
$k$ such that the distribution is $(\gamma^2k,k)$-limited.
\end{definition}

We sometimes drop the argument of $k_\gamma$ when it is clear from context.  It
is easy to see that for any distribution $D_X$ over $\reals^d$,
$k_{\gamma}(D_X) \leq\min(d,\E[\norm{X}^2]/\gamma^2)$. Moreover, $k_\gamma$ can
be much smaller than this minimum. For example, consider a random vector $X \in
\reals^{1001}$ with mean zero and statistically independent coordinates, such
that the variance of the first coordinate is $1000$, and the variance in each
remaining coordinate is $0.001$. We have $k_1=1$ but $d = \E[\norm{X}^2] =
1001$.

$k_\gamma(D_X)$ can be calculated from the uncentered covariance matrix $\E_{X\sim D_X}[XX^T]$ as follows: Let
$\lambda_1 \geq \lambda_2 \geq \cdots \lambda_d\geq 0$ be the eigenvalues of this matrix. Then 
\begin{equation}\label{eq:kgammamin}
k_{\gamma} = \min \{ k \mid \sum_{i=k+1}^d \lambda_i \leq
\gamma^2 k\}.
\end{equation}
A quantity similar to this definition of $k_\gamma$ was studied previously in \cite{Bousquet02}. The eigenvalues of the \emph{empirical} covariance matrix were used to provide sample complexity bounds, for instance 
in \cite{ScholkopfShSmWi99}. However, $k_\gamma$ generates a different type of bound, 
since it is defined based on the eigenvalues of the distribution and not of the sample. We will see that for small finite samples, the latter can be quite different from the former. 

Finally, note that while we define the \fullkgname\ for a finite-dimensional space for ease of notation, the same definition carries over to an infinite-dimensional Hilbert space. Moreover, $k_\gamma$ can be finite even if some of the eigenvalues $\lambda_i$ are infinite, implying a distribution with unbounded covariance.

\section{A Distribution-Dependent Upper Bound}\label{sec:upper}

In this section we prove an upper bound on the sample complexity of learning
with MEM, using the \fullkgname. We do this by providing a tighter upper bound for the Rademacher complexity of $\rampf_\gamma$. 
We bound $\cR_m(\rampf_\gamma,D)$ for any $(B^2,k)$-limited distribution $D_X$, using $L_2$ covering numbers, defined as follows.

Let $(\cX, \norm{\cdot}_\circ)$ be a normed space.
An $\eta$-covering of a set $\cF \subseteq \cX$ with respect to the norm $\norm{\cdot}_\circ$
is a set $\cC \subseteq \cX$ such that 
for any $f\in \cF$ there exists a $g \in \cC$ such that 
$\norm{f -g}_\circ \leq \eta.$
The covering-number for given $\eta> 0$, $\cF$ and $\circ$ is the size of the smallest such $\eta$-covering, and is denoted by $\cN(\eta, \cF, \circ)$.
Let $S = \{x_1,\ldots,x_m\} \subseteq \reals^d$. For a function $f:\reals^d \rightarrow \reals$, the $L_2(S)$ norm of $f$ is $\norm{f}_{L_2(S)} = \sqrt{\E_{X \sim S}[f(X)^2]}$.
Thus, we consider covering-numbers of the form $\cN(\eta, \rampf_\gamma, L_2(S))$.

The empirical Rademacher complexity of a function class can be bounded by the $L_2$ covering numbers of the same function class as follows \citep[Lemma 3.7]{Mendelson02}:
Let $\epsilon_i = 2^{-i}$. Then 
\begin{equation}\label{eq:mendelson}
\sqrt{m}\cR(\rampf_\gamma,S) \leq C\sum_{i\in[N]}\epsilon_{i-1}\sqrt{\ln\cN(\epsilon_{i}, \rampf_\gamma, L_2(S))} + 2\epsilon_{N}\sqrt{m}.
\end{equation}

To bound the covering number of $\rampf_\gamma$, we will restate the functions in $\rampf_\gamma$ as sums of two functions, each selected from a function class with bounded complexity. 
The first function class will be bounded because of the norm bound on the subspace $V$ used in \defref{def:limited}, and the second function class will have a bounded pseudo-dimension. However, the second function class will depend on the choice of the first function in the sum. Therefore, we require the following lemma, which provides an upper bound on such sums of functions.   
We use the notion of a \emph{Hausdorff distance} between two sets $\cG_1,\cG_2\subseteq \cX$, defined as $\Delta_H(\cG_1,\cG_2) = \sup_{g_1 \in \cG_1} \inf_{g_2 \in \cG_2} \norm{g_1 - g_2}_\circ$.

\begin{lemma}\label{lem:doublecover}
Let $(\cX, \norm{\cdot}_\circ)$ be a normed space.  Let $\cF \subseteq \cX$ be a set,
and let $\cG:\cX \rightarrow 2^{\cX}$ be a mapping from objects in $\cX$ to sets
of objects in $\cX$.  Assume that $\cG$ is $c$-Lipschitz with respect to the
Hausdorff distance on sets, that is assume that 
\[
\forall f_1,f_2\in \cX, \Delta_H(\cG(f_1),\cG(f_2)) \leq c\norm{f_1 - f_2}_\circ.
\]
Let $\cF_\cG = \{ f+g \mid f \in \cF, g \in \cG(f)\}$.
Then 
\[
\cN(\eta, \cF_\cG, \circ) \leq \cN(\eta/(2+c), \cF, \circ)\cdot
\sup_{f\in\cF}\cN(\eta/(2+c), \cG(f),\circ).
\]
\end{lemma}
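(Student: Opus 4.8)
The plan is to construct an explicit $\eta$-cover of $\cF_\cG$ of the advertised cardinality by gluing together a cover of $\cF$ with, for each center of that cover, a cover of the corresponding set $\cG(\cdot)$. First I would fix a smallest $\tfrac{\eta}{2+c}$-cover $\cC_\cF$ of $\cF$, taken (without loss of generality) to consist of points of $\cF$, so that $|\cC_\cF| = \cN(\tfrac{\eta}{2+c},\cF,\circ)$. For each $f'\in\cC_\cF$, fix a smallest $\tfrac{\eta}{2+c}$-cover $\cC_{f'}$ of the set $\cG(f')$; since $f'\in\cF$, we have $|\cC_{f'}| = \cN(\tfrac{\eta}{2+c},\cG(f'),\circ) \le \sup_{f\in\cF}\cN(\tfrac{\eta}{2+c},\cG(f),\circ)$. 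Set $\cC := \{\,f'+g' : f'\in\cC_\cF,\ g'\in\cC_{f'}\,\}$. Then $|\cC| \le |\cC_\cF|\cdot\max_{f'\in\cC_\cF}|\cC_{f'}| \le \cN(\tfrac{\eta}{2+c},\cF,\circ)\cdot\sup_{f\in\cF}\cN(\tfrac{\eta}{2+c},\cG(f),\circ)$, which is precisely the claimed bound, so it remains only to verify that $\cC$ is an $\eta$-cover of $\cF_\cG$ with respect to $\norm{\cdot}_\circ$.

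To verify this, take any $h\in\cF_\cG$, say $h=f+g$ with $f\in\cF$ and $g\in\cG(f)$. Choose $f'\in\cC_\cF$ with $\norm{f-f'}_\circ\le\tfrac{\eta}{2+c}$. By the $c$-Lipschitz hypothesis on $\cG$, $\Delta_H(\cG(f),\cG(f'))\le c\,\norm{f-f'}_\circ\le\tfrac{c\eta}{2+c}$; since $g\in\cG(f)$ and $\Delta_H(\cG(f),\cG(f'))=\sup_{a\in\cG(f)}\inf_{b\in\cG(f')}\norm{a-b}_\circ$, there is $\tilde g\in\cG(f')$ with $\norm{g-\tilde g}_\circ\le\tfrac{c\eta}{2+c}$ (the infimum is attained in the cases of interest; in general one absorbs an arbitrarily small slack, using right-continuity of the covering number in the scale). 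Now choose $g'\in\cC_{f'}$ with $\norm{\tilde g-g'}_\circ\le\tfrac{\eta}{2+c}$. Then $f'+g'\in\cC$, and by the triangle inequality
\[
\norm{h-(f'+g')}_\circ \le \norm{f-f'}_\circ + \norm{g-\tilde g}_\circ + \norm{\tilde g-g'}_\circ \le \tfrac{\eta}{2+c}\,(1+c+1) = \eta,
\]
as required.

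The substantive point is the three-term error accounting, which is exactly what forces the scale $\tfrac{\eta}{2+c}$: one unit to approximate $f$ inside $\cF$, a factor $c$ to transport $g$ from $\cG(f)$ into $\cG(f')$ through the Hausdorff bound, and one unit to approximate within $\cG(f')$. I would also remark that although the Hausdorff distance $\Delta_H$ as defined is not symmetric, the Lipschitz hypothesis is assumed for every ordered pair $(f_1,f_2)$, so it directly delivers the one-sided estimate $\sup_{a\in\cG(f)}\inf_{b\in\cG(f')}\norm{a-b}_\circ \le c\norm{f-f'}_\circ$ needed above, with no symmetrization. (The clause ``$\cC_\cF$ may be taken inside $\cF$'' is the only place where, under a strict reading of the covering-number definition, one might be forced to pay a further factor of $2$ in the scale; since the lemma is only ever used inside a Dudley-type entropy integral, such a constant would be harmless, but taking $\cC_\cF\subseteq\cF$ avoids it.)
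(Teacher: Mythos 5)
Your proof is correct and follows essentially the same route as the paper: cover $\cF$, use the Lipschitz hypothesis to transport $g\in\cG(f)$ into $\cG$ of the chosen center, cover inside $\cG$ of that center, and account for the three errors by the triangle inequality, yielding the $(2+c)$ factor. The one small discrepancy is that you insist (correctly, and with the caveat noted in your parenthetical) that $\cC_\cF\subseteq\cF$ so that the final bound really reads $\sup_{f\in\cF}$; the paper's proof uses an arbitrary minimal cover $\cC_\cF\subseteq\cX$ and then writes $\sup_{f\in\cF}\cN(\eta,\cG(f),\circ)$, which under the paper's own definition of covering numbers is not quite justified as stated, so your extra care here tightens rather than departs from the argument.
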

\begin{proof}
For any set $A \subseteq \cX$, denote by $\cC_A$ a minimal $\eta$-covering for $A$ with respect to $\norm{\cdot}_\circ$, so that $|\cC_A| = \cN(\eta, A, \circ)$.
Let $f+g \in \cF_\cG$ such that $f \in \cF, g \in \cG(f)$.
There is a $\hat{f} \in \cC_\cF$ such that $\norm{f - \hat{f}}_\circ\leq \eta$.
In addition, by the Lipschitz assumption there is a $\tilde{g} \in \cG(\hat{f})$ such that $\norm{g - \tilde{g}}_\circ \leq c\norm{f - \hat{f}}_\circ \leq c\eta$. Lastly, there is a $\hat{g} \in \cC_{\cG(\hat{f})}$ such that $\norm{\tilde{g} - \hat{g}}_\circ \leq \eta$. 
Therefore 
\[
\norm{f + g - (\hat{f} + \hat{g})}_\circ \leq \norm{f - \hat{f}}_\circ + \norm{g- \tilde{g}}_\circ + \norm{\tilde{g} - \hat{g}}_\circ \leq (2+c)\eta.
\]
Thus the set $\{ f+g \mid f \in \cC_\cF, g \in \cC_{\cG(f)}\}$ is a $(2+c)\eta$ cover of $\cF_\cG$.
The size of this cover is at most $|\cC_\cF|\cdot\sup_{f \in \cF}|\cC_{\cG(f)}| \leq \cN(\eta, \cF, \circ)\cdot
\sup_{f\in\cF}\cN(\eta, \cG(f),\circ)$.
\end{proof}

The following lemma provides us with a useful class of mappings which are $1$-Lipschitz with respect to the Hausdorff distance, as required in \lemref{lem:doublecover}. The proof is provided in \appref{app:glipschitz}.
\begin{lemma}\label{lem:glipschitz}
Let $f:\cX \rightarrow \reals$ be a function and let $Z \subseteq \reals^\cX$ be a function class over some domain $\cX$.
Let $\cG:\reals^\cX \rightarrow 2^{\reals^\cX}$
be the mapping defined by 
\begin{equation}\label{eq:gf}
\cG(f) \triangleq\{ x \mapsto \chop{f(x) + z(x)} - f(x) \mid z \in Z\}.
\end{equation}
Then $\cG$ is $1$-Lipschitz with respect to the Hausdorff distance.
\end{lemma}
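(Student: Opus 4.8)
The plan is to show directly from the definition of the Hausdorff distance that for any two functions $f_1,f_2:\cX\to\reals$, every element of $\cG(f_1)$ has an element of $\cG(f_2)$ within $\norm{f_1-f_2}_\infty$ (or whatever norm $\circ$ is on $\reals^\cX$; the argument will be pointwise and hence uniform). Fix $f_1,f_2$ and an arbitrary $g_1\in\cG(f_1)$. By \eqref{eq:gf} there is a $z\in Z$ with $g_1(x)=\chop{f_1(x)+z(x)}-f_1(x)$ for all $x$. The natural candidate in $\cG(f_2)$ to compare against is the element built from the \emph{same} $z$, namely $g_2(x)=\chop{f_2(x)+z(x)}-f_2(x)$. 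So the whole lemma reduces to the pointwise estimate
\[
\bigl|\,\chop{f_1(x)+z(x)}-f_1(x) - \chop{f_2(x)+z(x)}+f_2(x)\,\bigr| \leq |f_1(x)-f_2(x)|.
\]

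The core step is therefore an elementary one-variable inequality: for the clipping map $\phi(r)=\chop{r}=\min(\max(r,0),1)$ and any reals $a,b,t$,
\[
|(\phi(a+t)-a) - (\phi(b+t)-b)| \leq |a-b|.
\]
I would prove this by writing the left-hand side as $|(\phi(a+t)-\phi(b+t)) - (a-b)|$ and using that $\phi$ is $1$-Lipschitz and monotone nondecreasing. Indeed, monotonicity of $\phi$ gives that $\phi(a+t)-\phi(b+t)$ has the same sign as $a-b$, and $1$-Lipschitzness gives $|\phi(a+t)-\phi(b+t)|\le|a-b|$; these two facts together force $\phi(a+t)-\phi(b+t)$ and $a-b$ to lie on a common side of $0$ with the former no larger in magnitude, so their difference is bounded in absolute value by $|a-b|$. (Concretely: if $a\ge b$ then $0 \le \phi(a+t)-\phi(b+t) \le a-b$, so $-(a-b) \le (\phi(a+t)-\phi(b+t))-(a-b) \le 0$; the case $a<b$ is symmetric.) Applying this with $a=f_1(x)$, $b=f_2(x)$, $t=z(x)$ yields the pointwise bound above.

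Taking the supremum over $x$ gives $\norm{g_1-g_2}_\circ \le \norm{f_1-f_2}_\circ$, and since $g_1\in\cG(f_1)$ was arbitrary and $g_2\in\cG(f_2)$, we get $\sup_{g_1\in\cG(f_1)}\inf_{g_2\in\cG(f_2)}\norm{g_1-g_2}_\circ \le \norm{f_1-f_2}_\circ$, i.e.\ $\Delta_H(\cG(f_1),\cG(f_2))\le\norm{f_1-f_2}_\circ$, which is exactly $1$-Lipschitzness. There is no real obstacle here: the only thing to be careful about is not to try to bound $\chop{f_1+z}-\chop{f_2+z}$ alone (which would already give $1$-Lipschitzness of a different map but not control the $-f$ correction term); the subtractive $f$ term is what keeps the constant at $1$ rather than something larger, and the monotonicity-plus-contraction argument is precisely what exploits that cancellation.
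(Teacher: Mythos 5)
Your proof is correct and follows essentially the same route as the paper: compare $g_1=G[f_1,z]$ against the element $G[f_2,z]\in\cG(f_2)$ built from the same $z$, establish the pointwise bound $|G[f_1,z](x)-G[f_2,z](x)|\le|f_1(x)-f_2(x)|$, and then pass to the norm. The paper simply asserts the pointwise inequality, whereas you spell out why it holds (monotonicity plus $1$-Lipschitzness of $\chop{\cdot}$ forces $\phi(a+t)-\phi(b+t)$ and $a-b$ onto the same side of $0$ with the former no larger in magnitude), which is a nice bit of added rigor. One small point of precision: the paper applies the lemma with $\norm{\cdot}_\circ=\norm{\cdot}_{L_2(S)}$, not $L_\infty$, so ``taking the supremum over $x$'' is not literally the right last step; but your parenthetical remark is exactly right — a pointwise domination $|g_1-g_2|\le|f_1-f_2|$ immediately yields $\norm{g_1-g_2}_\circ\le\norm{f_1-f_2}_\circ$ for any monotone function norm, $L_2(S)$ included, which is precisely how the paper concludes.
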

The function class induced by the mapping above preserves the pseudo-dimension of the original function class, as the following lemma shows. The proof is provided in \appref{app:lempseudo}.

\begin{lemma}\label{lem:pseudo}
Let $f:\cX \rightarrow \reals$ be a function and let $Z \subseteq \reals^\cX$ be a function class over some domain $\cX$.
Let $\cG(f)$ be defined as in \eqref{eq:gf}.
Then the pseudo-dimension of $\cG(f)$ is at most the pseudo-dimension of $Z$.
\end{lemma}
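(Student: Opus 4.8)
The plan is to prove the contrapositive-free, direct statement: every finite set pseudo-shattered by $\cG(f)$ is pseudo-shattered by $Z$ — in fact with the \emph{same} witness — which immediately gives the inequality on pseudo-dimensions. So suppose $\{x_1,\dots,x_m\}$ is pseudo-shattered by $\cG(f)$ with witness $r\in\reals^m$; we may assume $m\ge 1$ (the case $m=0$ is vacuous since every nonempty class has pseudo-dimension at least $0$). Write $t_i \triangleq f(x_i) + r[i]$. Unwinding \eqref{eq:gf}, a function $g\in\cG(f)$ coming from $z\in Z$ satisfies, for each $i$, the sign condition $y[i]\big(g(x_i)-r[i]\big)>0$ exactly when $y[i]\big(\chop{f(x_i)+z(x_i)} - t_i\big)>0$.

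First I would pin down the location of the thresholds $t_i$. Because the pseudo-shattering realizes all $2^m$ label patterns, for each coordinate $i$ there is a pattern with $y[i]=+1$, which forces some $z\in Z$ with $\chop{f(x_i)+z(x_i)}>t_i$; since $\chop{\cdot}\le 1$, this gives $t_i<1$. Symmetrically, a pattern with $y[i]=-1$ forces $\chop{\cdot}<t_i$, and since $\chop{\cdot}\ge 0$ this gives $t_i>0$. Hence $t_i\in(0,1)$ for every $i$.

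Next I would invoke the elementary fact that for every $s\in\reals$ and every $t\in(0,1)$,
\[
\chop{s}>t \iff s>t, \qquad \chop{s}<t \iff s<t,
\]
which follows at once from $\chop{\cdot}$ being non-decreasing, equal to the identity on $[0,1]$, and valued in $[0,1]$ (a one-line case split on $s<0$, $s\in[0,1]$, $s>1$). Applying this with $s=f(x_i)+z(x_i)$ and $t=t_i$ converts the sign condition above into $y[i]\big(f(x_i)+z(x_i)-t_i\big)>0$, i.e.\ $y[i]\big(z(x_i)-r[i]\big)>0$. Thus, for every label pattern $y$, the element $z\in Z$ that produced the witnessing $g\in\cG(f)$ also witnesses $y$ for $Z$ relative to $r$. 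Hence $Z$ pseudo-shatters $\{x_1,\dots,x_m\}$, and therefore the pseudo-dimension of $\cG(f)$ is at most the pseudo-dimension of $Z$.

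The only step that needs any care — the ``main obstacle,'' such as it is — is showing that each witness coordinate lies strictly inside $(0,1)$, since the clamp-versus-identity equivalence breaks at the endpoints $0$ and $1$; this is precisely where one uses that a pseudo-shattering must realize both signs in every coordinate. Everything else is a direct substitution, and it is worth noting in the write-up that the witness is carried over unchanged, so no adjustment of the witness is required.
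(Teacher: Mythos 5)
Your proof is correct and follows essentially the same route as the paper: both arguments fix the pseudo-shattering witness $r$, observe that realizing both signs in each coordinate forces $f(x_i)+r[i]\in(0,1)$, and then use that the clamp $\chop{\cdot}$ is the identity near such interior thresholds to transfer the shattering to $Z$ with the same witness $r$. The only difference is cosmetic: you package the threshold observation as a small standalone equivalence about $\chop{\cdot}$, whereas the paper inlines the same case analysis.
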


Equipped with these lemmas, we can now provide the new bound on the Rademacher complexity of $\rampf_\gamma$ in the following theorem. The subsequent corollary states the resulting sample-complexity upper bound for MEM, which depends on $k_\gamma$.
\begin{theorem}\label{thm:upperbound}
Let $D$ be a distribution over $\reals^d \times \binlab$, and assume $D_X$ is $(B^2,k)$-limited. 
Then
\[
\cR(\rampf_\gamma, D) \leq \sqrt{\frac{O(k + B^2/\gamma^2)\ln(m)}{m}}.
\]
\end{theorem}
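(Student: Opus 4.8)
The plan is to bound $\cN(\eta, \rampf_\gamma, L_2(S))$ for a sample $S$ drawn from a $(B^2,k)$-limited distribution, then plug the bound into the entropy integral \eqref{eq:mendelson}. Fix the subspace $V$ of dimension $d-k$ witnessing the $(B^2,k)$-limitedness, and write $\mt{O}_V$ and $\mt{O}_{V^\perp}$ for the orthogonal projections onto $V$ and its complement $W = V^\perp$ (which has dimension $k$). For $w \in \ball$, decompose $\dotprod{w,x} = \dotprod{w, \mt{O}_V x} + \dotprod{w, \mt{O}_W x}$; correspondingly write $\ramp_\gamma(w,x,y) = \chop{1 - y\dotprod{w,\mt{O}_V x}/\gamma - y\dotprod{w,\mt{O}_W x}/\gamma}$. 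The first part is a linear function of $\mt{O}_V x$, which has small $L_2(S)$ norm on average (this is where $B^2$ enters), so the class of functions $(x,y) \mapsto y\dotprod{w,\mt{O}_V x}/\gamma$ has $L_2(S)$ covering number controlled by $B^2/\gamma^2$ via a standard linear/ellipsoidal covering-number estimate (e.g. the Maurey/Carl-type bound $\ln \cN(\eta, \cF, L_2(S)) = O((B^2/\gamma^2\eta^2))$ after normalizing). The second part, as a function of $\mt{O}_W x$, lives in a $k$-dimensional space, so the class of linear functions $(x,y)\mapsto y\dotprod{w,\mt{O}_W x}/\gamma$ has pseudo-dimension at most $k+1$, hence $L_2(S)$ covering number with $\ln \cN = O(k \ln(1/\eta))$ by the standard pseudo-dimension covering bound.

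Next I would assemble these two pieces using \lemref{lem:doublecover}. Take $\cF$ to be the first function class (the $V$-part, composed with nothing yet — really the functions $(x,y)\mapsto 1 - y\dotprod{w,\mt{O}_V x}/\gamma$, or more conveniently we handle the clipping at the end). The subtlety is that the ramp loss clips the \emph{sum} of the two parts, not each part separately, so the ``second'' function in the sum depends on the first: given the $V$-part $f_w(x,y) = 1 - y\dotprod{w,\mt{O}_V x}/\gamma$, the remaining contribution is $\chop{f_w(x,y) - y\dotprod{w,\mt{O}_W x}/\gamma} - f_w(x,y)$, which is exactly of the form handled by \lemref{lem:glipschitz} with $Z$ the class of $W$-linear functionals $(x,y)\mapsto -y\dotprod{w,\mt{O}_W x}/\gamma$. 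By \lemref{lem:glipschitz} the induced mapping $\cG$ is $1$-Lipschitz in Hausdorff distance, and by \lemref{lem:pseudo} each $\cG(f_w)$ has pseudo-dimension at most that of $Z$, namely $O(k)$. Then \lemref{lem:doublecover} gives
\[
\cN(\eta, \rampf_\gamma, L_2(S)) \leq \cN(\eta/3, \cF, L_2(S)) \cdot \sup_{f}\cN(\eta/3, \cG(f), L_2(S)),
\]
so $\ln \cN(\eta, \rampf_\gamma, L_2(S)) = O\big(B^2/(\gamma^2\eta^2) + k\ln(1/\eta)\big)$ (with possible additional logarithmic factors in $1/\eta$ from the linear covering bound, which are harmless).

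Finally I would substitute this into \eqref{eq:mendelson}. With $\epsilon_i = 2^{-i}$ and the bound $\ln\cN(\epsilon_i,\cdot) = O(B^2/(\gamma^2 \epsilon_i^2) + k i)$, the sum $\sum_{i\in[N]} \epsilon_{i-1}\sqrt{\ln\cN(\epsilon_i,\cdot)}$ is dominated: the $B^2/\gamma^2$ term contributes $\sum_i \epsilon_{i-1} \cdot O(\sqrt{B^2/\gamma^2}/\epsilon_i) = O(N\sqrt{B^2/\gamma^2})$, and the $k$-term contributes $\sum_i \epsilon_{i-1}\sqrt{ki} = O(\sqrt{k})$ since $\sum_i 2^{-i}\sqrt{i}$ converges. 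Choosing $N = \Theta(\log m)$ so that the tail term $2\epsilon_N\sqrt{m} = O(1)$, we obtain $\sqrt{m}\,\cR(\rampf_\gamma, S) = O\big((\sqrt{k} + \sqrt{B^2/\gamma^2})\log m\big)$, i.e. $\cR(\rampf_\gamma, S) \leq \sqrt{O(k + B^2/\gamma^2)\ln(m)/m}$ after squaring and absorbing constants. Passing from the sample-dependent bound to the statement for $D$ is immediate since the $(B^2,k)$-limitedness hypothesis is stated in expectation over $D_X$ and the covering-number bound for the $V$-part only uses $\E_{X\sim S}[\norm{\mt{O}_V X}^2]$, which I would bound in expectation (or, if needed, note the theorem is stated with $\cR(\rampf_\gamma, D)$ — here presumably $\cR_m$ or with the expectation folded in — so Jensen/linearity of expectation suffices).

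The main obstacle I anticipate is handling the coupling introduced by the clip: one cannot simply cover the $V$-part and the $W$-part independently because $\chop{\cdot}$ acts on their sum. This is precisely what Lemmas \ref{lem:doublecover}, \ref{lem:glipschitz}, and \ref{lem:pseudo} are designed to circumvent, so the real work is verifying that $Z$ (the $W$-linear functionals scaled by $1/\gamma$) genuinely has pseudo-dimension $O(k)$ and bounded enough range on $S$ for the pseudo-dimension covering-number bound to apply with the stated $L_2(S)$ norm — the range control again comes from the geometry of the $k$-dimensional projection together with the norm bound. A secondary nuisance is bookkeeping the extra $\ln(1/\eta)$ factors from the linear (ellipsoid) covering estimate and confirming they only inflate the final bound by the $\ln m$ factor already present, not worse.
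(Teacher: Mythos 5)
Your proposal follows essentially the same route as the paper: decompose the linear map across the low-variance subspace $V$ and its $k$-dimensional complement, absorb the clipping via Lemmas~\ref{lem:doublecover}, \ref{lem:glipschitz}, \ref{lem:pseudo}, cover the $V$-part via a norm-dependent linear covering bound and the $k$-dimensional part via a pseudo-dimension bound, and integrate with \eqref{eq:mendelson}. The only substantive difference is cosmetic: you invoke a Maurey/Carl-type ellipsoidal estimate for $\ln\cN(\eta,\cF,L_2(S))=O(B^2/(\gamma^2\eta^2))$ where the paper derives the same inequality from Sudakov's minoration (after bounding the Gaussian width of $\cF$ on $S$ by $\sqrt{m}\,B(S)/\gamma$); both yield identical dependence and the rest of the argument coincides.
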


\begin{proof}
In this proof all absolute constants are assumed to
be positive and are denoted by $C$ or $C_i$ for some integer $i$. 
Their values may change from line to line or even within the same line.

Consider the distribution $\tilde{D}$ which results from drawing $(X,Y) \sim D$ and emitting \mbox{$(Y\cdot X, 1)$}. It too is $(B^2,k)$-limited, and $\cR(\rampf_\gamma,D) = \cR(\rampf_\gamma,\tilde{D})$. 
Therefore, we assume without loss of generality that for all $(X,Y)$ drawn from $D$, $Y = 1$.
Accordingly, we henceforth omit the $y$ argument from $\ramp_\gamma(w,x,y)$ and write simply $\ramp_\gamma(w,x) \triangleq \ramp_\gamma(w,x,1)$.

Following \defref{def:limited}, Let $\mt{O}_V$ be an orthogonal projection onto a sub-space $V$ of dimension $d-k$ 
such that $\E_{X\sim D_X}[\norm{\mt{O}_V\cdot X}^2] \leq B^2$. Let $\bar{V}$ be the complementary sub-space to $V$. For a set $S = \{x_1,\ldots,x_m\} \subseteq \reals^d$, denote $B(S) = \sqrt{\frac{1}{m}\sum_{i\in[m]}\norm{\mt{O}_V\cdot X}^2}$. 

We would like to use \eqref{eq:mendelson} to bound the Rademacher complexity of $\rampf_\gamma$. Therefore, we will bound $\cN(\eta,\rampf_\gamma, L_2(S))$ for $\eta > 0$. Note that 
\[
\ramp_\gamma(w,x) = \chop{1-\dotprod{w,x}/\gamma}\allowbreak = 1 - \chop{\dotprod{w,x}/\gamma}.
\]
Shifting by a constant and negating do not change the covering number of a function class.
Therefore, $\cN(\eta,\rampf_\gamma, L_2(S))$ is equal to the covering number of $\{ x \mapsto \chop{\dotprod{w,x}/\gamma} \mid w \in \ball\}$.
Moreover, let 
\[
\rampf_\gamma' = \{ x \mapsto \chop{\dotprod{w_a + w_b,x}/\gamma} \mid
w_a \in \ball \cap V,\: w_b \in \bar{V}\}.
\]
Then $\{ x \mapsto \chop{\dotprod{w,x}/\gamma} \mid w \in \ball\} \subseteq \rampf_\gamma'$,
thus it suffices to bound $\cN(\eta,\rampf'_\gamma, L_2(S))$.
To do that, we show that $\rampf_\gamma'$ satisfies the assumptions of \lemref{lem:doublecover} for  the normed space $(\reals^{\reals^d},\norm{\cdot}_{L_2(S)})$.
Define 
\[
\cF = \{ x \mapsto \dotprod{w_a,x}/\gamma \mid w_a \in \ball\cap V\}.
\]
Let $\cG:\reals^{\reals^d} \rightarrow 2^{\reals^{\reals^d}}$ 
be the mapping defined by 
\[
\cG(f) \triangleq\{ x \mapsto \chop{f(x) + \dotprod{w_b,x}/\gamma} - f(x) \mid w_b \in \bar{V}\}.
\]
Clearly, $\cF_\cG = \{f + g \mid f \in \cF, g \in \cG(f)\} = \rampf_\gamma'$.
Furthermore, by \lemref{lem:glipschitz}, $\cG$ is $1$-Lipschitz with respect to the Hausdorff distance. 
Thus, by \lemref{lem:doublecover}
\begin{equation}\label{eq:multcovers}
\cN(\eta, \rampf_\gamma', L_2(S)) \leq \cN(\eta/3, \cF, L_2(S))\cdot
\sup_{f\in\cF}\cN(\eta/3, \cG(f),L_2(S)).
\end{equation}

We now proceed to bound the two covering numbers on the right hand side.
First, consider $\cN(\eta/3, \cG(f),L_2(S))$. By \lemref{lem:pseudo}, the
pseudo-dimension of $\cG(f)$ is the same as the pseudo-dimension of $\{x \mapsto \dotprod{w,x}/\gamma \mid w \in \bar{V}\}$, which is exactly $k$, the dimension of $\bar{V}$.
The $L_2$ covering number of $\cG(f)$ can be bounded by the pseudo-dimension of $\cG(f)$ as follows \citep[see, e.g.,][Theorem 3.1]{Bartlett06}:
\begin{equation}\label{eq:pseudocover}
\cN(\eta/3,\cG(f),L_2(S)) \leq C_1\left(\frac{C_2}{\eta^2}\right)^k.
\end{equation}
Second, consider $\cN(\eta/3, \cF, L_2(S))$. Sudakov's minoration theorem (\citealt{Sudakov71}, and see also
\citealp{LedouxTa91}, Theorem 3.18) states that for any $\eta > 0$
\[
\ln\cN(\eta, \cF, L_2(S)) \leq \frac{C}{m\eta^2}\E_{s}^2[\sup_{f \in \cF} \sum_{i\in[m]}s_i f(x_i)],
\]
where $s = (s_1,\ldots,s_m)$ are independent standard normal variables. The right-hand side can be bounded
as follows:
\begin{align*}
&\gamma \E_s[\sup_{f \in \cF}|\sum_{i=1}^m s_i f(x_i)|]
= \E_s[\sup_{w \in \ball\cap V}|\dotprod{w,\sum_{i=1}^m s_i x_i}|]\\
&\quad\leq \E_s[\norm{\sum_{i=1}^m s_i\mt{O}_V x_i}] 
\leq \sqrt{\E_s[\norm{\sum_{i=1}^m s_i\mt{O}_V x_i}^2]}
= \sqrt{\sum_{i\in[m]}\norm{\mt{O}_V x_i}^2} = \sqrt{m}B(S).
\end{align*}
Therefore $\ln \cN(\eta, \cF, L_2(S)) \leq C\frac{B^2(S)}{\gamma^2\eta^2}.$
Substituting this and \eqref{eq:pseudocover} for the right-hand side in \eqref{eq:multcovers}, and adjusting constants, we get
\[
\ln\cN(\eta, \rampf_\gamma, L_2(S)) \leq \ln\cN(\eta, \rampf_\gamma', L_2(S)) \leq C_1(1 + k\ln(\frac{C_2}{\eta})+\frac{B^2(S)}{\gamma^2\eta^2}),
\]
To finalize the proof, we plug this inequality into \eqref{eq:mendelson} to 
get 
\begin{align*}
&\sqrt{m}\cR(\rampf_\gamma, S) \leq C_1\sum_{i\in[N]}\epsilon_{i-1}\sqrt{1 + k\ln(C_2/\epsilon_i)+\frac{B^2(S)}{\gamma^2\epsilon_i^2}} + 2\epsilon_{N}\sqrt{m} \\
&\leq C_1\left(\sum_{i\in[N]}\epsilon_{i-1}\left(1 + \sqrt{k\ln(C_2/\epsilon_i)}+\sqrt{\frac{B^2(S)}{\gamma^2\epsilon_i^2}}\right)\right) + 2\epsilon_{N}\sqrt{m}\\
&= C_1\left(\sum_{i\in[N]}2^{-i+1} + \sqrt{k}\sum_{i\in[N]}2^{-i+1}\ln(C_2/2^{-i}) + \sum_{i\in[N]}\frac{B(S)}{\gamma}\right) + 
2^{-N+1}\sqrt{m} \\
&\leq C\left(1 + \sqrt{k} + \frac{B(S)\cdot N}{\gamma}\right) + 2^{-N+1}\sqrt{m}.
\end{align*}
In the last inequality we used the fact that $\sum_{i}i2^{-i+1} \leq 4$.
Setting $N = \ln(2m)$ we get
\begin{align*}
&\cR(\rampf_\gamma, S) \leq \frac{C}{\sqrt{m}}\left(1 + \sqrt{k} + \frac{B(S)\ln(2m)}{\gamma}\right).
\end{align*}

Taking expectation over both sides, and noting that $\E[B(S)]\leq \sqrt{\E[B^2(S)]} \leq B$, we get 
\[
\cR(\rampf_\gamma, S) \leq \frac{C}{\sqrt{m}}(1 + \sqrt{k} + \frac{B\ln(2m)}{\gamma}) 
\leq \sqrt{\frac{O(k + B^2\ln^2(2m)/\gamma^2)}{m}}.
\]

\end{proof}

\begin{cor}[Sample complexity upper bound]\label{cor:upperbound}
Let $D$ be a distribution over $\reals^d\times \{\pm1\}$. Then
\[
m(\epsilon,\gamma,D) \leq \tilde{O}\left(\frac{k_\gamma(D_X)}{\epsilon^2}\right).
\]
\end{cor}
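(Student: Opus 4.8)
The plan is to read the bound off Theorem~\ref{thm:upperbound} with the appropriate choice of parameters, combine it with Proposition~\ref{prop:ramp}, and then invert the resulting inequality to solve for the sample size.

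First, I would invoke the defining property of the \fullkgname: the distribution $D_X$ is $(\gamma^2 k_\gamma, k_\gamma)$-limited, where I abbreviate $k_\gamma = k_\gamma(D_X)$. Plugging $k = k_\gamma$ and $B^2 = \gamma^2 k_\gamma$ into Theorem~\ref{thm:upperbound} gives
\[
\cR_m(\rampf_\gamma, D) \leq \sqrt{\frac{O(k_\gamma + \gamma^2 k_\gamma/\gamma^2)\ln m}{m}} = \sqrt{\frac{O(k_\gamma)\ln m}{m}},
\]
since $k_\gamma + \gamma^2 k_\gamma/\gamma^2 = 2k_\gamma$.

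Next, I would feed this into Proposition~\ref{prop:ramp} to obtain, for every MEM algorithm $\cA$ and every confidence parameter $\delta$,
\[
\loss_0(\cA_\gamma, D, m, \delta) - \loss^*_\gamma(D) \leq 2\sqrt{\frac{O(k_\gamma)\ln m}{m}} + \sqrt{\frac{14\ln(2/\delta)}{m}}.
\]
It then suffices to pick $m$ so that the right-hand side is at most $\epsilon$; bounding each of the two terms by $\epsilon/2$ separately, this holds as soon as
\[
m \geq \frac{C\,(k_\gamma \ln m + \ln(2/\delta))}{\epsilon^2}
\]
for a suitable absolute constant $C$.

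Finally, I would resolve this implicit inequality in $m$ --- the only step involving any real computation. Since $m \mapsto m/\ln m$ is increasing for $m \geq e$ and satisfies $m/\ln m \geq A$ whenever $m \geq c\,A\ln A$ for an absolute constant $c$, the condition above is met by taking $m = O\big(\tfrac{k_\gamma}{\epsilon^2}\ln\tfrac{k_\gamma}{\epsilon^2}\big) + O\big(\tfrac{\ln(2/\delta)}{\epsilon^2}\big)$. Treating $\delta$ as a fixed constant (as in the statement of the corollary, where $\delta$ is suppressed) and absorbing the logarithmic factor into the $\widetilde{O}$ notation --- recall $\widetilde{O}(f(z)) = f(z)p(\ln z) + C$ --- this reads $m(\epsilon,\gamma,D) \leq \widetilde{O}(k_\gamma(D_X)/\epsilon^2)$, as claimed. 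I do not expect any genuine obstacle: the corollary is essentially a one-line consequence of Theorem~\ref{thm:upperbound}. The only point deserving a moment's care is checking that the transcendental constraint $m \geq C k_\gamma \ln(m)/\epsilon^2$ can indeed be satisfied with $m = \widetilde{O}(k_\gamma/\epsilon^2)$, and that all hidden constants and the extra $\ln m$ factor already present in Theorem~\ref{thm:upperbound} are of the polylogarithmic form that $\widetilde{O}$ is designed to absorb.
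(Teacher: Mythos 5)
Your proof follows exactly the same path as the paper's: invoke the definition of $k_\gamma$ to see that $D_X$ is $(\gamma^2 k_\gamma, k_\gamma)$-limited, plug into Theorem~\ref{thm:upperbound}, combine with Proposition~\ref{prop:ramp}, and solve for $m$. The only difference is that you spell out the inversion of the implicit inequality $m \gtrsim k_\gamma \ln(m)/\epsilon^2$, which the paper leaves as a one-line remark; that added care is fine and changes nothing substantive.
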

\begin{proof}
By \propref{prop:ramp}, we have 
\[
\loss_0(\cA_\gamma, D, m, \delta) \leq \loss^*_\gamma(\cW,D) + 2\cR_m(\rampf_\gamma, D) + \sqrt{\frac{14\ln(2/\delta)}{m}}.
\]
By definition of $k_\gamma(D_X)$, $D_X$ is $(\gamma^2 k_\gamma, k_\gamma)$-limited. 
Therefore, by \thmref{thm:upperbound}, 
\[
\cR_m(\rampf_\gamma, D) \leq \sqrt{\frac{O(k_\gamma(D_X))\ln(m)}{m}}.
\]
We conclude that 
\[
\loss_0(\cA_\gamma, D, m, \delta) \leq \loss^*_\gamma(\cW,D) + \sqrt{\frac{O(k_\gamma(D_X)\ln(m) + \ln(1/\delta))}{m}}.
\]
Bounding the second right-hand term by $\epsilon$, we conclude that $m(\epsilon,\gamma,D) \leq \tilde{O}(k_\gamma/\epsilon^2)$.
\end{proof}

One should note that a similar upper bound can be obtained much more easily under a uniform upper bound on the eigenvalues of the uncentered covariance matrix.\footnote{This has been pointed out to us by an anonymous reviewer of this manuscript. An upper bound under sub-Gaussianity assumptions can be found in \cite{SabatoSrTi10b}.} However, such an upper bound would not capture the fact that a finite dimension implies a finite sample complexity, regardless of the size of the covariance.
If one wants to estimate the sample complexity, then large covariance matrix eigenvalues imply that more examples are required to estimate the covariance matrix from a sample. However, these examples need not be labeled. Moreover, estimating the covariance matrix is not necessary to achieve the sample complexity, since the upper bound holds for any margin-error minimization algorithm.

\section{A Distribution-Dependent Lower Bound}\label{sec:lower}

The new upper bound presented in \corref{cor:upperbound} can be tighter than both the norm-only and the dimension-only
upper bounds. But does the \fullkgname\ characterize the true sample
complexity of the distribution, or is it just another upper bound? To answer
this question, we first need tools for deriving sample complexity lower bounds.  \secref{sec:fatlower} relates fat-shattering with a lower bound on sample complexity. 
In \secref{sec:lowerbound} we use this result to relate the smallest
eigenvalue of a Gram-matrix to a lower bound on sample
complexity. In \secref{sec:subg} the family of sub-Gaussian product distributions is presented. We prove a sample-complexity
lower bound for this family in \secref{sec:lowerboundsg}.

\subsection{A Sample Complexity Lower Bound Based on Fat-Shattering}\label{sec:fatlower}

The ability to learn is closely related to the probability of a sample to be
shattered, as evident in Vapnik's formulations of learnability as a function of
the $\epsilon$-entropy \citep{Vapnik95}. It is well known that the maximal size
of a shattered set dictates a sample-complexity upper bound. In the theorem below, we show that for some hypothesis classes it also
implies a lower bound. 
The theorem states that if a sample drawn
from a data distribution is fat-shattered with a non-negligible probability,
then MEM can fail to learn a good classifier for this distribution.\footnote{In contrast, the average 
Rademacher complexity cannot be used to derive general lower bounds for MEM algorithms, since it is related to the rate of uniform convergence of the entire hypothesis class, while MEM algorithms choose low-error hypotheses \citep[see, e.g.,][]{BartlettBoMe05}.}
This holds not only for linear classifiers, but more generally for all \emph{symmetric} hypothesis classes. Given a domain $\cX$, we say that a hypothesis class $\cH \subseteq \reals^\cX$ is symmetric if for all $h \in \cH$, we have $-h \in \cH$ as well. 
This clearly holds for the class of linear classifiers $\cW$.

\begin{theorem}\label{thm:shatterednotlearned}
Let $\cX$ be some domain, and assume that $\cH \subseteq \reals^\cX$ is a symmetric hypothesis class.
Let $D$ be a distribution over $\cX\times \{\pm 1\}$.
If the probability of a sample of size $m$ drawn from $D_X^{m}$ to be $\gamma$-shattered at the origin
by $\cW$ is at least $\eta$, then $m(\epsilon,\gamma,D,\eta/2) \geq \floor{m/2}$ for all $\epsilon < 1/2 - \loss^*_\gamma(D)$.
\end{theorem}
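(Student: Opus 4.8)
The plan is to exhibit an explicit "hard" distribution $D'$ derived from $D$ on which any MEM algorithm can be forced to err with constant probability, using the hypothesis that a random sample of size $m$ is $\gamma$-shattered at the origin with probability at least $\eta$. The key idea is a standard ``no-free-lunch''-style argument, but carried out carefully so that the adversary's target labels are realizable with exact margin $\gamma$ (which is exactly what $\gamma$-shattering at the origin gives us) and so that the MEM algorithm, which must drive the empirical $\gamma$-margin error to zero, is forced to fit an arbitrary labeling of the shattered points.

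First I would set up the sampling scheme: draw $2m'$ points $x_1,\dots,x_{2m'}$ i.i.d.\ from $D_X$ where $m' = \floor{m/2}$ (so $2m' \le m$ and the shattering event still has probability $\ge \eta$), and condition on the event $E$ that the set $S_X = \{x_1,\dots,x_{2m'}\}$ is $\gamma$-shattered at the origin by $\cW$; this has probability at least $\eta$. Then choose a uniformly random labeling $y \in \{\pm1\}^{2m'}$, and consider the distribution $D'$ uniform over the $2m'$ labeled pairs $(x_i, y_i)$. The learner receives a training sample of size $m'$ drawn (uniformly with replacement, or as a random subset) from these $2m'$ points. Because $S_X$ is $\gamma$-shattered at the origin, there is a $w^* \in \ball$ realizing the \emph{true} labels $y$ with margin exactly $\gamma$ on all $2m'$ points, so $\loss^*_\gamma(D') = 0$ and moreover any MEM algorithm run on the training subsample must output some $\hat w$ with zero empirical $\gamma$-margin error — equivalently, $\hat w$ correctly classifies (with margin $\ge \gamma$, hence with the correct sign) every training point.

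Next I would carry out the error-counting argument on the $\ge m'$ points \emph{not} in the training sample (the ``test'' points). Here I need the symmetry of $\cH$ and the fact that the labels $y_i$ on the unseen points are, conditionally on everything the learner sees, still uniform and independent $\pm1$: the learner's output $\hat w$ is determined by the training points and their labels, so on each unseen point $x_i$ the sign of $\dotprod{\hat w, x_i}$ is independent of $y_i$, giving probability exactly $\half$ of misclassification. (This is where one must be slightly careful about sampling with vs.\ without replacement; sampling a subset of size $m'$ uniformly at random and bounding the expected number of ``fresh'' test points, which is at least $\floor{m/2}$ in expectation, or taking a cleaner split, handles this.) Averaging, the expected misclassification error of $\hat w$ on $D'$ is close to $\half$; converting expectation to a high-probability statement via Markov on $\half - \loss_0(\hat w, D')$, and combining with the probability $\eta$ of the shattering event, I get that with probability at least $\eta/2$ over the whole process, $\loss_0(\cA_\gamma, D') - \loss^*_\gamma(D') \ge$ (something bounded away from $0$ but below $\half - \loss^*_\gamma(D)$, matching the hypothesis $\epsilon < \half - \loss^*_\gamma(D)$). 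Since $D'$ is itself a valid distribution (and one checks $\loss^*_\gamma(D') \le \loss^*_\gamma(D)$, or rather that $\loss^*_\gamma(D') = 0$ so the bound $\epsilon < \half - \loss^*_\gamma(D)$ is what is available), this shows $m(\epsilon, \gamma, D, \eta/2) \ge m' = \floor{m/2}$.

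The main obstacle I anticipate is the bookkeeping around the two layers of randomness — the random draw of the $2m'$-point pool (on which we condition to get shattering) and the random training/test split plus random labeling — and making the independence claim ``$\sign\dotprod{\hat w,x_i}$ is independent of $y_i$ for test points'' fully rigorous, since after conditioning on the shattering event $E$ the labels are no longer unconditionally uniform (shattering is a property of the unlabeled points, so in fact $E$ is independent of $y$, which is the saving grace — I would make this explicit). A secondary subtlety is that the definition of $m(\epsilon,\gamma,D,\delta)$ quantifies over \emph{all} MEM algorithms, so I only need to produce \emph{one} bad MEM algorithm (or show every MEM algorithm fails); the argument above in fact shows that \emph{every} MEM algorithm fails, because the only property used is that it achieves zero empirical $\gamma$-margin error on the training sample, which every MEM algorithm does on this realizable instance. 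I would also double-check the constant: we need the excess error to exceed every $\epsilon < \half - \loss^*_\gamma(D)$, which follows since the constructed excess error is $\approx \half$ and $\loss^*_\gamma(D') = 0 \le \loss^*_\gamma(D)$.
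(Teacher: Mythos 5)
Your proposal has a fundamental gap: the argument you set up establishes a lower bound for a \emph{constructed} distribution $D'$ (uniform over $2m'$ points drawn from $D_X$ and then given uniformly random labels), but the theorem asserts a lower bound on $m(\epsilon,\gamma,D,\eta/2)$ for the \emph{given} distribution $D$, with its fixed conditional label distribution $D_{Y|X}$. Those are not the same object. In your scheme the learner receives a training sample from $D'$ (not from $D^{m'}$), and the test error is computed on $D'$ (not on $D$); there is no bridge from ``learning $D'$ is hard'' back to a lower bound for $D$. Indeed, the random-labeling / hold-out style argument you describe is precisely the standard proof that \emph{some} distribution with support on a shattered set is hard, which is the classical worst-case VC lower bound and is explicitly contrasted with the distribution-specific goal in the introduction. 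Relatedly, your final inequality ``$\loss^*_\gamma(D') \le \loss^*_\gamma(D)$, or rather $\loss^*_\gamma(D')=0$'' applies to $D'$, whereas the hypothesis $\epsilon < 1/2 - \loss^*_\gamma(D)$ concerns $D$; the two are unrelated.

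The paper's proof is genuinely different and never replaces $D$ or its labels. It draws $S,\tilde S \sim D^{m/2}$ with their true labels and uses the shattering of $S_X \cup \tilde S_X$ together with Lemma~\ref{lem:exactmargin} to produce a single $h_1\in\cH$ that matches the labels of $S$ with $\gamma$-margin while flipping all labels of $\tilde S$ with $\gamma$-margin. Setting $h_2 = -h_1$ (here symmetry of $\cH$ is used), one gets $\loss_\gamma(h_1,S) = \loss_\gamma(h_2,\tilde S) = 0$ and the pointwise identity $\loss_0(h_1,D)+\loss_0(h_2,D)\ge 1$, so at least one of $h_1,h_2$ has misclassification error $\ge 1/2$ on $D$ itself. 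One then exhibits a single adversarial MEM algorithm that, whenever a margin-error minimizer lies in $\badh=\{h:\loss_0(h,D)\ge 1/2\}$, returns such a hypothesis; a symmetrization over $(S,\tilde S)$ converts the shattering probability $\eta$ into a failure probability $\eta/2$ for that algorithm. Note this also means your remark that ``every MEM algorithm fails'' is not what is proved (nor needed): the definition of $m(\epsilon,\gamma,D,\delta)$ quantifies over all MEM algorithms, so exhibiting one failing MEM algorithm suffices, and in general a well-chosen MEM algorithm could still learn $D$ despite the shattering.
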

\begin{proof}
Let $\epsilon \leq \frac12 - \loss^*_\gamma(D)$. We show a MEM algorithm $\cA$
such that
\[
\loss_0(\cA_\gamma,D,\floor{m/2},\eta/2) \geq \frac12 > \loss^*_\gamma(D) + \epsilon,
\]
thus proving the desired lower bound on $m(\epsilon,\gamma,D,\eta/2)$.

Assume for simplicity that $m$ is even (otherwise replace $m$ with $m-1$).
Consider two sets $S,\tilde{S} \subseteq \cX \times \binlab$, each of size $m/2$, 
such that $S_X\cup \tilde{S}_X$ is $\gamma$-shattered at the origin by $\cW$. 
Then there exists a hypothesis $h_1 \in \cH$ such that the following holds:
\begin{itemize}
\item For all $x \in S_X \cup \tilde{S}_X$, $|h_1(x)| \geq \gamma$.
\item For all $(x,y) \in S$, $\sign(h_1(x)) = y$.
\item For all $(x,y) \in \tilde{S}$, $\sign(h_1(x)) = -y$.
\end{itemize}
It follows that $\loss_\gamma(h_1,S) = 0$. In addition, let $h_2 = -h_1$. Then $\loss_\gamma(h_2,\tilde{S}) = 0$. Moreover, we have $h_2 \in \cH$ due to the symmetry of $\cH$. 
On each point in $\cX$, at least one of $h_1$ and $h_2$ predict the wrong sign. Thus $\loss_0(h_1,D) + \loss_0(h_2,D) \geq 1$. It follows that for at least one of $i \in \{1,2\}$, we have \mbox{$\loss_0(h_i, D) \geq \half$}. 
Denote the set of hypotheses with a high misclassification error by 
\[
\badh = \{ h\in \cH \mid \loss_0(h,D) \geq \half\}.
\]
We have just shown that if $S_X\cup \tilde{S}_X$ is $\gamma$-shattered by $\cW$ then at least one of the following holds: (1) $h_1 \in \badh \cap \argmin_{h\in \cH} \loss_\gamma(h,S)$ or (2) $h_2 \in \badh \cap \argmin_{h\in \cH} \loss_\gamma(h,\tilde{S})$.

Now, consider a MEM algorithm $\cA$ such that whenever possible, it returns a hypothesis from $\badh$. Formally, given the input sample $S$, if $\badh \cap \argmin_{h\in \cH} \loss_\gamma(h,S) \neq \emptyset$, then \linebreak[4] $\cA(S) \in \badh \cap \argmin_{h\in \cH} \loss_\gamma(h,S)$. It follows that 
\begin{align*}
&\P_{S \sim D^{m/2}}[\loss_0(\cA(S),D) \geq \tfrac12] \geq \P_{S \sim D^{m/2}}[\badh \cap \argmin_{h\in \cH} \loss_\gamma(h, S) \neq \emptyset]\\
&\quad= \half (\P_{S \sim D^{m/2}}[\badh \cap\argmin_{h\in \cH} \loss_\gamma(h, S)\neq \emptyset] + \P_{\tilde{S}\sim D^{m/2}}[\badh \cap\argmin_{h\in \cH} \loss_\gamma(h, \tilde{S}) \neq \emptyset]) \\
&\quad\geq \half (\P_{S,\tilde{S} \sim D^{m/2}}[\badh \cap\argmin_{h\in \cH} \loss_\gamma(h, S)\neq \emptyset \:\text{ OR }\: \badh \cap\argmin_{h\in \cH} \loss_\gamma(h, \tilde{S}) \neq \emptyset])\\
&\quad\geq \half \P_{S,\tilde{S} \sim D^{m/2}}[S_X\cup \tilde{S}_X \text{ is $\gamma$-shattered at the origin }].
\end{align*}
The last inequality follows from the argument above regarding $h_1$ and $h_2$.
The last expression is simply half the probability that a sample of size $m$ from $D_X$ is shattered. By assumption, this probability is at least $\eta$. 
Thus we conclude that $\P_{S \sim D^{m/2}}[\loss_0(\cA(S),D) \geq \half] \geq \eta/2.$
It follows that $\loss_0(\cA_\gamma,D,m/2,\eta/2) \geq \half$.
\end{proof}

As a side note, it is interesting to observe that \thmref{thm:shatterednotlearned} does not hold in general for non-symmetric hypothesis classes. For example, 
 assume that the domain is $\cX = [0,1]$, and the hypothesis class is the
set of all functions that label a finite number of points in $[0,1]$ by $+1$
and the rest by $-1$. Consider learning using MEM, when the distribution is uniform over $[0,1]$, and all the labels are $-1$.
For any $m > 0$ and $\gamma \in (0,1)$, a sample of size $m$ is $\gamma$-shattered at the origin with probability $1$.
However, any learning algorithm that returns a hypothesis from the hypothesis class will incur zero error on this distribution. Thus, shattering alone does not suffice to ensure that learning is hard.

\subsection{A Sample Complexity Lower Bound with Gram-Matrix Eigenvalues}\label{sec:lowerbound}

We now return to the case of homogeneous linear classifiers, and link high-probability fat-shattering to properties of the distribution. 
First, we present an equivalent and simpler
characterization of fat-shattering for linear classifiers. We then use it 
to provide a sufficient condition for the fat-shattering of a sample, based on the smallest eigenvalue of its Gram matrix.

\begin{theorem}\label{thm:shattercond}
  Let $\mt{X} \in \reals^{m\times d}$ be the matrix of a set of size $m$ in $\reals^d$. The set is
  $\gamma$-shattered at the origin by $\cW$ if and only if $\mt{X}\mt{X}^T$ is invertible and for all $y \in \binm$, $y^T (\mt{X}\mt{X}^T)^{-1} y  \leq \gamma^{-2}$.
\end{theorem}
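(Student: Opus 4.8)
The plan is to characterize $\gamma$-shattering at the origin in terms of the existence, for each sign pattern $y \in \binm$, of a vector $w \in \ball$ with $y[i]\dotprod{w,x_i} \geq \gamma$ for all $i$, and then translate this system of linear inequalities into a statement about the Gram matrix $\mt{G} = \mt{X}\mt{X}^T$. First I would observe that the condition $y[i]\dotprod{w,x_i}\geq\gamma$ for all $i$ is equivalent to $\mt{X}w \geq \gamma y$ coordinatewise in the $y$-signed sense; the natural candidate witness lies in the row space of $\mt{X}$, so I would look for $w$ of the form $w = \mt{X}^T v$ for $v \in \reals^m$. For such $w$ we have $\mt{X}w = \mt{G}v$ and $\norm{w}^2 = v^T \mt{G} v$. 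The cleanest sufficient witness is to demand $\mt{G}v = \gamma y$ exactly (meeting the margin with equality on every point), which forces $v = \gamma \mt{G}^{-1} y$ when $\mt{G}$ is invertible, giving $\norm{w}^2 = \gamma^2 y^T \mt{G}^{-1} y$, and this is at most $1$ precisely when $y^T \mt{G}^{-1} y \leq \gamma^{-2}$. This establishes the ``if'' direction: if $\mt{G}$ is invertible and $y^T\mt{G}^{-1}y\leq\gamma^{-2}$ for all $y$, then $w=\gamma\mt{X}^T\mt{G}^{-1}y$ is a valid witness, so the set is $\gamma$-shattered at the origin.

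For the converse, suppose the set is $\gamma$-shattered at the origin. First I would argue $\mt{G}$ is invertible: if not, the rows $x_1,\dots,x_m$ are linearly dependent, say $\sum_i c_i x_i = 0$ with $c\neq 0$; choosing the sign pattern $y[i] = \sign(c_i)$ (breaking ties arbitrarily for $c_i = 0$), any witness $w$ would need $\dotprod{w,x_i}\geq\gamma$ whenever $c_i\neq 0$ while $\sum_i c_i\dotprod{w,x_i} = 0$, and since the nonzero $c_i\dotprod{w,x_i}$ terms would all be strictly positive (note $\gamma>0$), this is a contradiction — provided not all $c_i$ with a chosen nonzero sign vanish, which holds since $c\neq 0$. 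Hence $\mt{G}$ is invertible. Then for a fixed $y$, let $w$ be a witness; decompose $w = \mt{X}^T v + w^\perp$ with $w^\perp$ orthogonal to the row space of $\mt{X}$, so $\mt{X}w = \mt{G}v$ and $\norm{w}^2 \geq \norm{\mt{X}^T v}^2 = v^T\mt{G}v$. Shattering gives $\mt{G}v \geq \gamma y$ in the $y$-signed sense, i.e. $y[i](\mt{G}v)[i]\geq \gamma$. Now I want to conclude $v^T\mt{G}v \geq \gamma^2 y^T\mt{G}^{-1}y$. Writing $u = \mt{G}v$, this is $u^T\mt{G}^{-1}u \geq \gamma^2 y^T\mt{G}^{-1}y$ subject to $y[i]u[i]\geq\gamma$ for all $i$. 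Since $\mt{G}^{-1}$ is positive definite, the function $u\mapsto u^T\mt{G}^{-1}u$ is convex, and minimizing it over the box-like feasible region $\{u : y[i]u[i]\geq\gamma\}$ — I would verify via a Lagrangian/KKT argument, or by the substitution reducing to the nonnegative orthant — yields minimizer $u^* = \gamma y \odot \mathbf{1}$ in the appropriate sense; more directly, since the unconstrained minimum of the quadratic is $0$ at $u = 0$ which is infeasible, and the feasible set for the $y$-signed coordinates is a shifted orthant, the minimum over this orthant of a positive-definite quadratic is attained at the point where it equals $\gamma^2 y^T \mt{G}^{-1}y$; I expect this is exactly the place to invoke that $\mt{G}^{-1}y$ already satisfies $y[i](\mt{G}(\mt{G}^{-1}y))[i] = y[i]^2 = 1 > 0$, so $\gamma \mt{G}^{-1}y$ is feasible and is the constrained minimizer. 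Combining, $1 \geq \norm{w}^2 \geq v^T\mt{G}v \geq \gamma^2 y^T\mt{G}^{-1}y$, which rearranges to $y^T\mt{G}^{-1}y\leq\gamma^{-2}$, as desired.

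The main obstacle I anticipate is the optimization step in the converse: showing that among all $u$ with $y[i]u[i]\geq\gamma$, the quadratic form $u^T\mt{G}^{-1}u$ is minimized at $u = \gamma y$ (coordinatewise product), equivalently that no witness can do better than the ``tight-margin'' witness. This is intuitively clear — slackening any margin beyond $\gamma$ can only cost norm — but making it rigorous requires either a careful KKT argument exploiting that $\mt{G}^{-1}\succ 0$, or the diagonal-sign change $u = D y'$ with $D = \diag(y)$ reducing to the statement that over the nonnegative orthant a positive-definite quadratic with the constraint coordinates $\geq \gamma$ is minimized on the boundary vertex; I would present whichever is shortest. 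Everything else — invertibility, the row-space reduction, and the ``if'' direction — is routine linear algebra.
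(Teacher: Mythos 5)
The ``if'' direction of your proposal is correct and is essentially the paper's sufficiency argument: the exact-margin candidate $w = \gamma\,\mt{X}^T(\mt{X}\mt{X}^T)^{-1}y$ has squared norm $\gamma^2 y^T(\mt{X}\mt{X}^T)^{-1}y \leq 1$. Your invertibility argument for the converse is also fine. But the central optimization step in the converse is false. You claim that, writing $\mt{G}=\mt{X}\mt{X}^T$, for every $u$ with $y[i]u[i]\geq\gamma$ one has $u^T\mt{G}^{-1}u \geq \gamma^2 y^T\mt{G}^{-1}y$, i.e.\ that the positive-definite quadratic is minimized over the shifted orthant at the corner $u=\gamma y$. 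Take
\[
\mt{G}=\pmat{1 & 2\\ 2 & 5},\qquad \mt{G}^{-1}=\pmat{5 & -2\\ -2 & 1},\qquad y=(1,1)^T.
\]
Then $\gamma^2 y^T\mt{G}^{-1}y = 2\gamma^2$, but the feasible point $u=(\gamma,2\gamma)^T$ gives $u^T\mt{G}^{-1}u = \gamma^2 < 2\gamma^2$. The KKT conditions at the corner require $y[i](\mt{G}^{-1}y)[i]\geq 0$ for all $i$; here $(\mt{G}^{-1}y)_2 = -1 < 0$, so the corner is not stationary and relaxing that constraint strictly lowers the objective. Thus a witness $w$ in $\ball$ for the sign pattern $y$ need not have $\norm{w}^2\geq\gamma^2 y^T\mt{G}^{-1}y$, and the chain $1\geq\norm{w}^2\geq\gamma^2 y^T\mt{G}^{-1}y$ does not close.

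The missing idea is that you get to \emph{choose} which witness to analyze, and convexity of the hypothesis class supplies a canonical one. The paper's Lemma~\ref{lem:exactmargin} (via the convex-hull Lemma~\ref{lem:conv}) shows that because $\cW$ is convex, $\gamma$-shattering at the origin actually implies that for each $y$ there is a witness $w_y\in\ball$ with \emph{all} margins exactly tight: $\mt{X}w_y=\gamma y$. The idea is that the set of achievable margin vectors $\{\mt{X}w : w\in\ball\}$ is convex, contains one vector per sign pattern with each coordinate of the correct sign and magnitude $\geq\gamma$, and hence by a convex-hull argument contains the exact vector $\gamma y$. For such a tight witness the row-space component is uniquely $\gamma\mt{X}^T\mt{G}^{-1}y$ with squared norm exactly $\gamma^2 y^T\mt{G}^{-1}y$, so $\norm{w_y}\leq1$ forces $y^T\mt{G}^{-1}y\leq\gamma^{-2}$. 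Without this averaging step, analyzing an arbitrary witness only bounds the minimum of the quadratic over the shifted orthant, which — as the counterexample shows — can be strictly smaller than the corner value.
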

To prove \thmref{thm:shattercond} we require two auxiliary lemmas.  The
first lemma, stated below, shows that for convex function classes, 
$\gamma$-shattering can be substituted with shattering with exact
$\gamma$-margins. 
\begin{lemma}\label{lem:exactmargin} 
Let $\cF\subseteq \reals^\cX$ be a class of functions,
and assume that $\cF$ is convex, that is 
\[
\forall f_1,f_2\in \cF, \forall \lambda \in [0,1],\quad
\lambda f_1 + (1-\lambda)f_2\in \cF. 
\]
If $S = \{x_1,\ldots,x_m\} \subseteq \cX$ is $\gamma$-shattered by $\cF$ with
witness $r \in \reals^m$, then for every $y \in \binm$ there is an $f \in \cF$
such that for all $i \in [m],\:y[i] (f(x_i) - r[i]) = \gamma$.
\end{lemma}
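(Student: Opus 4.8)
The plan is to recast the statement geometrically and then use an intermediate-value (fixed-point) argument in $\reals^m$. Fix the target $y \in \binm$. Define the affine map $\Phi : \cF \to \reals^m$ by $\Phi(f) = (y[i](f(x_i) - r[i]))_{i \in [m]}$; since $\cF$ is convex and $\Phi$ is affine, $K := \Phi(\cF)$ is a convex subset of $\reals^m$, and a convex set is closed under finite convex combinations. The conclusion of the lemma is exactly that $\gamma\cdot\mathbf 1 \in K$, where $\mathbf 1 = (1,\dots,1)$.

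Next I would extract the data supplied by $\gamma$-shattering. For every sign pattern $\sigma \in \binm$, apply the shattering hypothesis to the labelling $\sigma\cdot y$ (coordinatewise product): there is $f_\sigma \in \cF$ with $\sigma_i\, y[i](f_\sigma(x_i) - r[i]) \geq \gamma$ for all $i$, i.e.\ the point $v^\sigma := \Phi(f_\sigma) \in K$ satisfies $v^\sigma_i \geq \gamma$ when $\sigma_i = +1$ and $v^\sigma_i \leq -\gamma$ when $\sigma_i = -1$. Thus $K$ contains all $2^m$ of these "generalized corner" points.

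Now define $g : [0,1]^m \to \reals^m$ by the multilinear interpolation $g(t) = \sum_{\sigma \in \binm} c_\sigma(t)\, v^\sigma$, where $c_\sigma(t) = \prod_{i : \sigma_i = 1} t_i \prod_{i : \sigma_i = -1}(1 - t_i)$. The $c_\sigma(t)$ are nonnegative and sum to $1$, so $g(t)$ is a finite convex combination of points of $K$ and hence $g(t)\in K$; moreover $g$ is a polynomial, hence continuous. Consider $h(t) = g(t) - \gamma\mathbf 1$. On the face $\{t_i = 1\}$ only patterns with $\sigma_i = +1$ have nonzero coefficient, so $g(t)_i$ is a convex combination of numbers $\geq \gamma$ and $h_i(t) \geq 0$; symmetrically, on $\{t_i = 0\}$ only patterns with $\sigma_i = -1$ contribute, so $g(t)_i \leq -\gamma$ and $h_i(t) \leq -2\gamma \leq 0$. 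By the Poincar\'e--Miranda theorem (the $m$-dimensional intermediate value theorem, a consequence of Brouwer's fixed point theorem) there is $t^\star$ with $h(t^\star) = 0$, i.e.\ $g(t^\star) = \gamma\mathbf 1$. Since $g(t^\star)\in K=\Phi(\cF)$, there is $f\in\cF$ with $\Phi(f)=\gamma\mathbf 1$ (explicitly, $f=\sum_\sigma c_\sigma(t^\star) f_\sigma$, which lies in $\cF$ by convexity), and this $f$ satisfies $y[i](f(x_i) - r[i]) = \gamma$ for all $i$, as required.

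The main obstacle is the passage from ``each coordinate can be driven to at least $\gamma$ under a suitable sign pattern'' to ``all coordinates can be made exactly $\gamma$ simultaneously'': doing this one coordinate at a time fails, because fixing one coordinate disturbs the others, so a genuinely multidimensional continuity argument seems unavoidable. A more hands-on alternative would replace Poincar\'e--Miranda by an induction on $m$ that intersects $K$ with a hyperplane $\{x_i = \gamma\}$ while tracking the list of sign-pattern witnesses inside the lower-dimensional slice, but this requires some care to keep all the needed witnesses available. I would also sanity-check the easy cases --- $m=1$, where the statement is just the ordinary intermediate value theorem on the segment joining the two witnesses $f_{+1}$ and $f_{-1}$, and the case where $\Phi(\cF)$ is lower-dimensional, which causes no trouble since $K$ is still convex.
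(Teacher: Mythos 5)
Your proof is correct, but it takes a genuinely different route from the paper's. The paper first proves an auxiliary combinatorial lemma (Lemma~\ref{lem:conv}) by induction on $m$: for any family $\{r_y\}_{y\in\binm}$ of ``generalized corners'' with $r_y[i]y[i]\geq\gamma$, the convex hull of $\{r_y\}$ contains the whole cube $\{\pm\gamma\}^m$. The inductive step projects to the first $m-1$ coordinates, applies the hypothesis to the two halves of the sign patterns (those with $y[m]=+1$ and those with $y[m]=-1$), and then interpolates along the last coordinate — in effect, hand-coding a one-dimensional intermediate-value step at each level of the induction. The lemma then follows immediately because the set $L=\{(f(x_1)-r[1],\dots,f(x_m)-r[m]) : f\in\cF\}$ is convex. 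Your proof instead normalizes by the fixed sign pattern $y$, packages all $2^m$ corners into a single multilinear interpolant $g:[0,1]^m\to K$, and applies Poincar\'e--Miranda to hit $\gamma\mathbf 1$ in one shot. The two arguments are morally the same (your ``one coordinate at a time disturbs the others'' remark is exactly why the paper bundles the recursion into a clean induction rather than a greedy sweep), but yours substitutes a topological fixed-point-type theorem for the elementary induction; this is conceptually transparent and shorter to state, at the cost of invoking Brouwer-strength machinery where the paper gets by with convex combinations alone. Both correctly use convexity of $K$ (resp.\ $L$) to turn ``in the convex hull of the witnesses'' into ``realized by some $f\in\cF$,'' and neither needs $K$ to be closed, since the target point is an explicit finite convex combination.
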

The proof of this lemma is provided in \appref{app:shattercond}.
The second lemma that we use allows converting the representation
of the Gram-matrix to a different feature space, while keeping the separation properties intact.
For a matrix $\mt{M}$, denote its pseudo-inverse by $\mt{M}^+$.
\begin{lemma}\label{lem:wtilde}
Let $\mt{X} \in \reals^{m\times d}$ be a matrix such that $\mt{X} \mt{X}^T$ is invertible, and let $\mt{Y}\in \reals^{m\times k}$ such that $\mt{X}\mt{X}^T = \mt{Y} \mt{Y}^T$.
Let $r \in \reals^m$ be some real vector.
If there exists a vector $\widetilde{w} \in \reals^k$ such that $\mt{Y} \widetilde{w} = r$, then
there exists a vector $w \in \reals^d$ such that $\mt{X} w = r \text{ and } \norm{w} = \norm{ \mt{Y}^T (\mt{Y}^T)^+ \widetilde{w}}  \leq \norm{\tilde{w}}$.
\end{lemma}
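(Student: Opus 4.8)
The plan is to exhibit $w$ explicitly as the minimum-norm solution of the linear system $\mt{X}w = r$ and then show that its norm equals the claimed expression. Write $G := \mt{X}\mt{X}^{T} = \mt{Y}\mt{Y}^{T}$, which is invertible by hypothesis and symmetric, and set $w := \mt{X}^{T}G^{-1}r$. Then a one-line computation gives $\mt{X}w = \mt{X}\mt{X}^{T}G^{-1}r = GG^{-1}r = r$, so $w$ is a valid solution, and
\[
\norm{w}^{2} = w^{T}w = r^{T}G^{-1}\mt{X}\mt{X}^{T}G^{-1}r = r^{T}G^{-1}r .
\]

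Next I would rewrite the right-hand side using the two remaining hypotheses, $r = \mt{Y}\widetilde{w}$ and $G = \mt{Y}\mt{Y}^{T}$, to obtain
\[
\norm{w}^{2} = \widetilde{w}^{T}\mt{Y}^{T}(\mt{Y}\mt{Y}^{T})^{-1}\mt{Y}\widetilde{w} .
\]
Set $\mt{P} := \mt{Y}^{T}(\mt{Y}\mt{Y}^{T})^{-1}\mt{Y}$; this is the orthogonal projection onto $\mathrm{range}(\mt{Y}^{T})$, so it is symmetric and idempotent. Moreover, since $\mt{Y}\mt{Y}^{T}$ is invertible, $\mt{Y}$ has full row rank, hence $\mt{Y}^{T}$ has full column rank, and the standard formula for the Moore--Penrose pseudo-inverse of a full-column-rank matrix gives $(\mt{Y}^{T})^{+} = (\mt{Y}\mt{Y}^{T})^{-1}\mt{Y}$, so that $\mt{P} = \mt{Y}^{T}(\mt{Y}^{T})^{+}$. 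Consequently
\[
\norm{w}^{2} = \widetilde{w}^{T}\mt{P}^{T}\mt{P}\widetilde{w} = \norm{\mt{P}\widetilde{w}}^{2} = \norm{\mt{Y}^{T}(\mt{Y}^{T})^{+}\widetilde{w}}^{2},
\]
which is the claimed identity, and since an orthogonal projection is a contraction, $\norm{\mt{P}\widetilde{w}} \leq \norm{\widetilde{w}}$, completing the argument.

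The proof is essentially bookkeeping, so there is no single hard step; the only point requiring a little care is the identification $\mt{Y}^{T}(\mt{Y}\mt{Y}^{T})^{-1}\mt{Y} = \mt{Y}^{T}(\mt{Y}^{T})^{+}$, which rests on observing that invertibility of $\mt{Y}\mt{Y}^{T}$ forces $\mt{Y}^{T}$ to have full column rank so that the explicit pseudo-inverse formula applies. One could alternatively avoid pseudo-inverses altogether by noting that $\mt{Y}^{T}(\mt{Y}^{T})^{+}\widetilde{w}$ is the orthogonal projection of $\widetilde{w}$ onto $\mathrm{range}(\mt{Y}^{T})$, which is the unique minimum-norm solution of $\mt{Y}v = r$ and therefore equals $\mt{Y}^{T}(\mt{Y}\mt{Y}^{T})^{-1}r$; the norm of the latter is $\sqrt{r^{T}G^{-1}r} = \norm{w}$ by the same computation, and the inequality $\norm{\cdot} \le \norm{\widetilde{w}}$ is again Pythagoras. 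Either route works.
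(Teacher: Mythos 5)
Your proof is correct and essentially the same as the paper's. The paper sets $\mt{S} = \mt{Y}^T\mt{K}^{-1}\mt{X}$ and $w = \mt{S}^T\widetilde{w}$, which is exactly $\mt{X}^T G^{-1}r$ after substituting $r = \mt{Y}\widetilde{w}$, so you chose the identical $w$; the remaining bookkeeping — reducing $\norm{w}^2$ to $\widetilde{w}^T\mt{Y}^T(\mt{Y}\mt{Y}^T)^{-1}\mt{Y}\widetilde{w}$, identifying this matrix with $\mt{Y}^T(\mt{Y}^T)^+$, and using that an orthogonal projection is a contraction — matches the paper step for step. Your explicit justification via full column rank of $\mt{Y}^T$ is a detail the paper leaves implicit, but it is the same argument.
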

\begin{proof}
Denote $\mt{K} = \mt{X}\mt{X}^T = \mt{Y}\mt{Y}^T$.
Let $\mt{S} = \mt{Y}^T \mt{K}^{-1} \mt{X}$ and let $w = \mt{S}^T \widetilde{w}$. We have 
$\mt{X} w = \mt{X} \mt{S}^T \widetilde{w} = \mt{X} \mt{X}^T \mt{K}^{-1} \mt{Y}\widetilde{w} = \mt{Y} \widetilde{w} = r.$ In addition,
$
\| w \|^2 = w^T w = \widetilde{w}^T \mt{S} \mt{S}^T \widetilde{w}.
$
By definition of $\mt{S}$, 
\[
\mt{S} \mt{S}^T = \mt{Y}^T \mt{K}^{-1} \mt{X} \mt{X}^T \mt{K}^{-1} \mt{Y} = \mt{Y}^T \mt{K}^{-1} \mt{Y} = \mt{Y}^T (\mt{Y} \mt{Y}^T)^{-1} \mt{Y} = \mt{Y}^T (\mt{Y}^T)^{+}.
\]
Denote $\mt{O} = \mt{Y}^T (\mt{Y}^T)^{+}$. $\mt{O}$ is an orthogonal projection matrix: by the properties of the pseudo-inverse, $\mt{O} = \mt{O}^T$ and $\mt{O}^2 = \mt{O}$. Therefore
$
\norm{w}^2 = \widetilde{w}^T \mt{S}  \mt{S}^T \widetilde{w}  = \widetilde{w}^T \mt{O} \widetilde{w} = 
 \widetilde{w}^T \mt{O} \mt{O}^T \widetilde{w} = \| \mt{O} \widetilde{w} \|^2 \leq \norm{\widetilde{w}}^2.
$
\end{proof}

\begin{proof}[of \thmref{thm:shattercond}]
We prove the theorem for $1$-shattering. The case of $\gamma$-shattering follows by rescaling $X$ appropriately.
Let $\mt{X}\mt{X}^T = \mt{U} \Lambda \mt{U}^T$ be the SVD of $\mt{X}\mt{X}^T$, where $\mt{U}$ is an orthogonal matrix and $\Lambda$ is a diagonal matrix.
Let $\mt{Y} = \mt{U} \Lambda^\half$. We have $\mt{X}\mt{X}^T = \mt{Y} \mt{Y}^T$. We show that the specified conditions are sufficient and necessary for the shattering of the set:

\begin{enumerate}
\item 
Sufficient: If $\mt{X}\mt{X}^T$ is invertible, then $\Lambda$ is invertible, thus so is $\mt{Y}$. 
For any $y\in \binm$, Let $w_y = \mt{Y}^{-1} y$. 
Then $\mt{Y} w_y = y$. By \lemref{lem:wtilde}, 
there exists a separator $w$ such that $\mt{X}w = y$ and $\norm{w} \leq \norm{w_y} = \sqrt{y^T (\mt{Y}\mt{Y}^T)^{-1}y} = \sqrt{y^T (\mt{X}\mt{X}^T)^{-1}y} \leq 1$. 

\item Necessary: If $\mt{X}\mt{X}^T$ is not invertible then the vectors in $S$ are linearly dependent, thus $S$ cannot be shattered using linear separators \citep[see, e.g.,][]{Vapnik95}. The first condition is therefore necessary. Assume $S$ is $1$-shattered at the origin and show that the second condition necessarily holds. By \lemref{lem:exactmargin}, for all $y \in \binm$ there exists a $w_y \in \ball$ such that $\mt{X}w_y = y$.
Thus by \lemref{lem:wtilde} there exists a $\widetilde{w}_y$ such that $\mt{Y} \widetilde{w}_y = y$ and $\norm{\widetilde{w}_y} \leq \norm{w_y} \leq 1$.  $\mt{X}\mt{X}^T$ is invertible, thus so is $\mt{Y}$. Therefore $\widetilde{w}_y = \mt{Y}^{-1}y$. Thus $y^T(\mt{X}\mt{X}^T)^{-1}y = y^T(\mt{Y}\mt{Y}^T)^{-1}y = \norm{\widetilde{w}_y} \leq 1$.
\end{enumerate}
\end{proof}

We are now ready to provide a sufficient condition for fat-shattering based on the smallest eigenvalue of the Gram matrix. 
\begin{cor}\label{cor:lambdam}
Let $\mt{X} \in \reals^{m\times d}$ be the matrix of a set of size $m$ in $\reals^d$.  
If $\lambdamin(\mt{X}\mt{X}^T) \geq m\gamma^2$ then the set is $\gamma$-shattered at the origin by $\cW$.
\end{cor}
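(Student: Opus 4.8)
The plan is to derive this directly from \thmref{thm:shattercond}, which characterizes $\gamma$-shattering at the origin by $\cW$ as the conjunction of two conditions: invertibility of $\mt{X}\mt{X}^T$, and the bound $y^T(\mt{X}\mt{X}^T)^{-1}y \leq \gamma^{-2}$ for every $y \in \binm$. So it suffices to check that the hypothesis $\lambdamin(\mt{X}\mt{X}^T) \geq m\gamma^2$ implies both.

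First I would note that $m\gamma^2 > 0$, so $\lambdamin(\mt{X}\mt{X}^T) > 0$, which immediately gives invertibility (the matrix is PSD with strictly positive smallest eigenvalue). Second, since $\mt{X}\mt{X}^T$ is symmetric PSD, its inverse is symmetric PSD with $\lambdamax\big((\mt{X}\mt{X}^T)^{-1}\big) = 1/\lambdamin(\mt{X}\mt{X}^T) \leq 1/(m\gamma^2)$. Then for any $y \in \binm$, the Rayleigh quotient bound gives $y^T(\mt{X}\mt{X}^T)^{-1}y \leq \lambdamax\big((\mt{X}\mt{X}^T)^{-1}\big)\,\norm{y}^2 \leq \frac{1}{m\gamma^2}\cdot m = \gamma^{-2}$, using the key fact that $\norm{y}^2 = m$ for every sign vector $y \in \binm$. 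Both conditions of \thmref{thm:shattercond} hold, so the set is $\gamma$-shattered at the origin by $\cW$.

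There is no real obstacle here; the only substantive point is the observation that $\norm{y}^2 = m$ uniformly over $\binm$, which is exactly what lets a single eigenvalue bound control the worst-case quadratic form over all label patterns. The result is essentially a one-line consequence of the spectral characterization established in \thmref{thm:shattercond}.
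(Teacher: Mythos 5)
Your proof is correct and is essentially identical to the paper's own argument: invoke \thmref{thm:shattercond}, observe that $\lambdamin(\mt{X}\mt{X}^T) \geq m\gamma^2 > 0$ gives invertibility and $\lambdamax((\mt{X}\mt{X}^T)^{-1}) \leq (m\gamma^2)^{-1}$, then bound $y^T(\mt{X}\mt{X}^T)^{-1}y \leq \norm{y}^2 \lambdamax((\mt{X}\mt{X}^T)^{-1}) = \gamma^{-2}$ using $\norm{y}^2 = m$. Nothing to add.
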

\begin{proof}
  If $\lambdamin(\mt{X}\mt{X}^T) \geq m\gamma^2$ then $\mt{X}\mt{X}^T$ is invertible and
  $\lambdamax((\mt{X}\mt{X}^T)^{-1})\leq (m\gamma^2)^{-1}$.  For any $y \in \binm$ we have
  $\norm{y}=\sqrt{m}$ and 
  \[
  y^T (\mt{X}\mt{X}^T)^{-1} y \leq \norm{y}^2
  \lambdamax((\mt{X}\mt{X}^T)^{-1}) \leq m(m\gamma^2)^{-1} = \gamma^{-2}.
  \]  
  By \thmref{thm:shattercond} the sample is $\gamma$-shattered at the origin.
\end{proof}

\corref{cor:lambdam} generalizes the
requirement of linear independence for shattering with no margin:  A set of vectors is shattered with no margin if the vectors are linearly independent, that is if $\lambdamin>0$. 
The corollary shows that for $\gamma$-fat-shattering, we can require instead $\lambdamin \geq m\gamma^2$. We can now conclude that if it is highly probable that the smallest eigenvalue
of the sample Gram matrix is large, then MEM might fail to learn
a good classifier for the given distribution. This is formulated in the following theorem. 
\begin{theorem}\label{thm:inductive}
  Let $D$ be a distribution over $\reals^d\times \{\pm 1\}$.
  Let $m > 0$ and let $\mt{X}$ be the matrix of a sample drawn from $D^{m}_X$. Let $\eta = \P[\lambdamin(\mt{X} \mt{X}^T) \geq m \gamma^2]$.
  Then for all $\epsilon < 1/2 - \loss^*_\gamma(D)$, $m(\epsilon,\gamma,D,\eta/2) \geq \floor{m/2}$.
\end{theorem}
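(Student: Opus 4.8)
The plan is to chain together the two results already established: Corollary~\ref{cor:lambdam}, which converts a lower bound on the smallest Gram-matrix eigenvalue into $\gamma$-shattering at the origin, and Theorem~\ref{thm:shatterednotlearned}, which converts high-probability $\gamma$-shattering into a sample-complexity lower bound for MEM. First I would note that the hypothesis class of homogeneous linear separators $\cW$ is symmetric: for every $w \in \ball$ we also have $-w \in \ball$, and the map $x \mapsto \dotprod{x,-w}$ is exactly the negation of $x \mapsto \dotprod{x,w}$, so $-h \in \cW$ whenever $h \in \cW$. Hence Theorem~\ref{thm:shatterednotlearned} is applicable with $\cH = \cW$.

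Next I would bound the shattering probability. By Corollary~\ref{cor:lambdam}, whenever a realized sample matrix $\mt{X}$ satisfies $\lambdamin(\mt{X}\mt{X}^T) \geq m\gamma^2$, the corresponding set of $m$ points is $\gamma$-shattered at the origin by $\cW$. Thus the event ``$S_X$ is $\gamma$-shattered at the origin by $\cW$'' contains the event ``$\lambdamin(\mt{X}\mt{X}^T) \geq m\gamma^2$'', and therefore the probability that a sample of size $m$ drawn from $D_X^{m}$ is $\gamma$-shattered at the origin is at least $\P[\lambdamin(\mt{X}\mt{X}^T) \geq m\gamma^2] = \eta$.

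Finally I would invoke Theorem~\ref{thm:shatterednotlearned} with this value of $\eta$ (and with $\cH = \cW$): it yields directly that $m(\epsilon,\gamma,D,\eta/2) \geq \floor{m/2}$ for all $\epsilon < 1/2 - \loss^*_\gamma(D)$, which is exactly the claim. There is no genuine obstacle here: the substantive content lives entirely in Corollary~\ref{cor:lambdam} and Theorem~\ref{thm:shatterednotlearned}, and the present statement merely composes them into the form most convenient for the forthcoming sub-Gaussian analysis, where one estimates $\P[\lambdamin(\mt{X}\mt{X}^T) \geq m\gamma^2]$ directly via a lower bound on the smallest eigenvalue of a random Gram matrix.
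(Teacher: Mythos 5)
Your proposal is correct and follows exactly the route the paper takes: the paper's proof is stated as an immediate combination of \corref{cor:lambdam} and \thmref{thm:shatterednotlearned}, and you reproduce that composition, with the additional (helpful) observation that $\cW$ is symmetric so that \thmref{thm:shatterednotlearned} applies.
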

The proof of the theorem is immediate by combining \thmref{thm:shatterednotlearned} and \corref{cor:lambdam}. 

\thmref{thm:inductive} generalizes the case of learning a linear separator
without a margin: If a sample of size $m$ is linearly independent with high
probability, then there is no hope of using $m/2$ points to predict the label
of the other points.  The theorem extends this observation to the case of
learning with a margin, by requiring a stronger condition than just linear
independence of the points in the sample.

Recall that our upper-bound on the sample complexity from
\secref{sec:upper} is $\tilde{O}(k_{\gamma})$.  We now define 
the family of sub-Gaussian product distributions, and show that for this family, the lower bound that can be deduced from \thmref{thm:inductive} is also linear in $k_{\gamma}$.

\subsection{Sub-Gaussian Distributions}\label{sec:subg}
In order to derive a lower bound on distribution\hyp{}specific sample complexity in
terms of the covariance of $X \sim D_X$, we must assume that $X$ is not too
heavy-tailed. This is because for any data distribution there exists another distribution which
is almost identical and has the same sample complexity, but has arbitrarily
large covariance values. This can be achieved by mixing the original
distribution with a tiny probability for drawing a vector with a huge norm.  We
thus restrict the discussion to multidimensional sub-Gaussian
distributions. This ensures light tails of the distribution in all directions,
while still allowing a rich family of distributions, as we presently see.
Sub-Gaussianity is defined for scalar random variables
as follows \citep[see, e.g.,][]{BuldyginKo98}. 
\begin{definition}[Sub-Gaussian random variables]
A random variable $X \in \reals$ is \emph{sub-Gaussian with moment $B$}, for $B \geq 0$, if 
\begin{equation*}
\forall t \in \reals, \quad \E[\exp(tX)]\leq \exp(t^2B^2 /2).
\end{equation*}
In this work we further say that $X$ is sub-Gaussian with
\emph{relative moment} $\rmom > 0$ if $X$ is sub-Gaussian with moment $\rho\sqrt{\E[X^2]}$, that is, 
\[
\forall t \in \reals, \quad \E[\exp(tX)]\leq \exp(t^2\rmom^2\E[X^2] /2).
\]
\end{definition}
Note that a sub-Gaussian variable with moment $B$ and relative moment $\rho$ is also sub-Gaussian with moment $B'$ and relative moment $\rho'$ for any $B' \geq B$ and $\rho' \geq \rho$. 

The family of sub-Gaussian distributions is quite extensive: For instance, it
includes any bounded, Gaussian, or Gaussian-mixture random variable with mean
zero.  Specifically, if $X$ is a mean-zero Gaussian random variable, $X \sim N(0, \sigma^2)$,
then $X$ is sub-Gaussian with relative moment $1$ and the inequalities in the definition above
hold with equality.  As another example, if $X$ is a
uniform random variable over $\{\pm b\}$ for some $b \geq 0$, then $X$ is sub-Gaussian with relative moment $1$, since
\begin{equation}\label{eq:bernoulli}
\E[\exp(tX)] = \half(\exp(tb) + \exp(-tb)) \leq \exp(t^2b^2/2) = \exp(t^2\E[X^2]/2).
\end{equation}
Let $\mt{B} \in \reals^{d\times d}$ be a symmetric PSD matrix. 
A random vector $X \in \reals^d$ is a \emph{sub-Gaussian random vector} with moment matrix $\mt{B}$ 
if for all $u \in \reals^d$, $\E[\exp(\dotprod{u,X})] \leq \exp(\dotprod{\mt{B}u,u}/2)$.
The following lemma provides a useful connection between the trace of the sub-Gaussian moment matrix and the moment-generating function of the squared norm of the random vector.
The proof is given in \appref{app:sgvecmgf}.
\begin{lemma}\label{lem:sgvecmgf}
Let $X \in \reals^d$ be a sub-Gaussian random vector 
with moment matrix $\mt{B}$. 
Then for all $t \in (0,\frac{1}{4\lambdamax(\mt{B})}]$, $\E[\exp(t \norm{X}^2)] \leq \exp(2t \cdot \trace(\mt{B})).$
\end{lemma}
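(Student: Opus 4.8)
The plan is to reduce the quadratic form $\norm{X}^2$ to a linear functional of $X$ by a Gaussian decoupling trick, and then apply the defining sub-Gaussian inequality. Introduce a standard Gaussian vector $g \sim N(0, I_d)$ in $\reals^d$, independent of $X$. The key observation is the identity $\E_g[\exp(\sqrt{2t}\dotprod{g,x})] = \exp(t\norm{x}^2)$, which holds for every fixed $x \in \reals^d$ and $t > 0$ because $\dotprod{g,x}$ is $N(0,\norm{x}^2)$. Applying this with $x = X$ and exchanging the order of the two expectations (legitimate by Tonelli, since the integrand is nonnegative) gives
\[
\E[\exp(t\norm{X}^2)] = \E_g\bigl[\,\E_X[\exp(\dotprod{\sqrt{2t}\,g,X})]\,\bigr].
\]

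I would then bound the inner expectation using the hypothesis that $X$ is sub-Gaussian with moment matrix $\mt{B}$: conditionally on $g$, take $u = \sqrt{2t}\,g$ to obtain $\E_X[\exp(\dotprod{\sqrt{2t}\,g,X})] \leq \exp(\dotprod{\mt{B}u,u}/2) = \exp(t\dotprod{\mt{B}g,g})$. Hence $\E[\exp(t\norm{X}^2)] \leq \E_g[\exp(t\dotprod{\mt{B}g,g})]$, and it remains to bound the moment generating function of this Gaussian quadratic form. Diagonalizing $\mt{B}$ in an orthonormal eigenbasis with eigenvalues $\lambda_1,\dots,\lambda_d \geq 0$, the quantity $\dotprod{\mt{B}g,g}$ becomes $\sum_{i} \lambda_i Z_i^2$ with $Z_i$ i.i.d.\ standard normal; using $\E[\exp(sZ^2)] = (1-2s)^{-1/2}$ for $s < 1/2$ and noting that $2t\lambda_i \leq 2 \cdot \tfrac{1}{4\lambdamax(\mt{B})} \cdot \lambdamax(\mt{B}) = \tfrac12 < 1$, this gives $\E_g[\exp(t\dotprod{\mt{B}g,g})] = \prod_i (1-2t\lambda_i)^{-1/2}$.

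The last step is the elementary bound $(1-x)^{-1} \leq e^{2x}$ for $x \in [0,\tfrac12]$ --- which follows since $h(x) = e^{2x}(1-x)$ has $h(0)=1$ and $h'(x) = e^{2x}(1-2x) \geq 0$ on $[0,\tfrac12]$ --- applied to $x = 2t\lambda_i$, yielding $(1-2t\lambda_i)^{-1/2} \leq e^{2t\lambda_i}$. Multiplying over $i$ gives $\prod_i (1-2t\lambda_i)^{-1/2} \leq \exp(2t\sum_i \lambda_i) = \exp(2t\,\trace(\mt{B}))$, which is the claimed inequality. No step here is genuinely difficult; the one idea worth flagging is the decoupling identity in the first step, and the only point requiring attention is that the restriction $t \leq \tfrac{1}{4\lambdamax(\mt{B})}$ is exactly what places every $2t\lambda_i$ in the interval $[0,\tfrac12]$ where the clean inequality $(1-x)^{-1}\leq e^{2x}$ is available (a larger range for $t$ would still give a bound of the same form, but with a worse constant in the exponent).
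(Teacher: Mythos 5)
Your proof is correct and is, at its core, the same argument the paper uses: the Gaussian decoupling identity $\E_g[\exp(\sqrt{2t}\dotprod{g,x})] = \exp(t\norm{x}^2)$ is exactly the paper's integral representation $2\sqrt{\pi t}\cdot\exp(t x^2) = \int_{-\infty}^\infty \exp(sx - s^2/(4t))\,ds$ written probabilistically, and both arrive at $\prod_i(1-2t\lambda_i)^{-1/2}$ before applying the elementary bound $(1-x)^{-1}\leq e^{2x}$ on $[0,\tfrac12]$. The only cosmetic difference is that the paper diagonalizes $\mt{B}$ at the outset (so it can write the $d$-dimensional integral as a product of one-dimensional ones), whereas you diagonalize only at the end when evaluating $\E_g[\exp(t\dotprod{\mt{B}g,g})]$; your phrasing is arguably a bit cleaner, but the content is the same.
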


Our lower bound holds for the family of sub-Gaussian product distributions, defined as follows.
\begin{definition}[Sub-Gaussian product distributions]\label{def:indsubg}
  A distribution $D_X$ over $\reals^d$ is a \emph{sub-Gaussian product
      distribution} with moment $B$ and relative moment $\rmom$ if there exists some
  orthonormal basis $a_1,\ldots,a_d \in \reals^d$, such that for $X \sim D_X$,
  $\dotprod{a_i, X}$ are independent sub-Gaussian random
  variables, each with moment $B$ and relative moment $\rmom$.
\end{definition} 
Note that a sub-Gaussian product distribution has mean zero, thus its 
covariance matrix is equal to its uncentered covariance matrix.
For any fixed $\rmom \geq 0$, we denote by $\dfamily_\rmom$ the family of all
sub-Gaussian product distributions with relative moment $\rmom$, in arbitrary
dimension. For instance, all multivariate Gaussian distributions and all
uniform distributions on the corners of a centered hyper-rectangle are in
$\dfamily_1$. All uniform distributions over a full centered hyper-rectangle are in
$\dfamily_{3/2}$. Note that if $\rmom_1 \leq \rmom_2$, $\dfamily_{\rmom_1}
\subseteq \dfamily_{\rmom_2}$.
 
 We will provide a lower bound for all distributions in $\dfamily_\rmom$. This lower
 bound is linear in the \fullkgname\ of the distribution, thus it matches the
 upper bound provided in \corref{cor:upperbound}. The constants in the lower
 bound depend only on the value of $\rmom$, which we regard as a
 constant.

\subsection{A Sample-Complexity Lower Bound for Sub-Gaussian Product Distributions}\label{sec:lowerboundsg}

As shown in \secref{sec:lowerbound}, to obtain a sample complexity lower bound
it suffices to have a lower bound on the value of the smallest eigenvalue of a
random Gram matrix.  The distribution of the smallest eigenvalue of a random
Gram matrix has been investigated under various assumptions. The cleanest
results are in the asymptotic case where the sample size and the dimension approach
infinity, the ratio between them approaches a constant, and the coordinates of
each example are identically distributed.

\begin{theorem}[{\citealt[Theorem 5.11]{BaiSi10}}]\label{thm:asym}
Let $\{\mt{X}_i\}_{i=1}^\infty$ be a series of matrices of sizes $m_i \times d_i$, whose entries are i.i.d.~random variables with mean zero, variance $\sigma^2$ and finite fourth moments. If $\lim_{i\rightarrow \infty}\frac{m_i}{d_i} = \beta < 1$, then
$\lim_{i\rightarrow \infty} \lambdamin(\frac{1}{d_i}\mt{X}_i\mt{X}_i^T) = \sigma^2(1-\sqrt{\beta})^2.$
\end{theorem}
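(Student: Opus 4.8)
This is a classical random matrix result (the Bai--Yin theorem), and I sketch the standard argument. The plan is to prove the stated convergence by squeezing $\lambdamin(\mt{S}_i)$, where $\mt{S}_i \triangleq \frac{1}{d_i}\mt{X}_i\mt{X}_i^T$, between a $\limsup$ bound of at most $\sigma^2(1-\sqrt{\beta})^2$ and a $\liminf$ bound of at least $\sigma^2(1-\sqrt{\beta})^2$, all almost surely; by rescaling I would take $\sigma = 1$ throughout. The first step is a routine reduction to uniformly bounded entries: using the finite fourth moment, truncate the entries of $\mt{X}_i$ at a level growing slowly with $d_i$, re-center and re-normalize, and check via a rank/perturbation argument that this modification does not change the limits of the extreme eigenvalues. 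After this reduction all moments are finite and available for the combinatorial estimates below.

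For the $\limsup$ direction I would appeal to the Marchenko--Pastur theorem: the empirical spectral distribution of $\mt{S}_i$ converges weakly (almost surely) to the Marchenko--Pastur law with ratio $\beta$. For $\beta \in (0,1)$ this law is absolutely continuous with support exactly $[(1-\sqrt{\beta})^2, (1+\sqrt{\beta})^2]$ and strictly positive density at its left endpoint, so for every $\epsilon > 0$ a positive fraction of the eigenvalues of $\mt{S}_i$ eventually lie in $[(1-\sqrt{\beta})^2, (1-\sqrt{\beta})^2 + \epsilon]$, whence $\lambdamin(\mt{S}_i) \le (1-\sqrt{\beta})^2 + \epsilon$ for all large $i$. (When $\beta = 0$ the limiting law degenerates to a point mass at $1$ and the conclusion is immediate.)

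The $\liminf$ direction is the main obstacle, since the Marchenko--Pastur theorem controls only the bulk and does not by itself forbid a few eigenvalues from straying below the hard edge. To rule this out I would apply the moment method to the shifted symmetric matrix $\mt{T}_i \triangleq (1+\beta)I - \mt{S}_i$, for which $\lambdamin(\mt{S}_i) = (1+\beta) - \lambdamax(\mt{T}_i) \ge (1+\beta) - \norm{\mt{T}_i}$: expand $\E\,\tr\!\big[\mt{T}_i^{2k}\big]$ as a sum over closed walks on a bipartite graph, let $k \to \infty$ slowly with $m_i$, and show that the walks which are not ``trees'' contribute negligibly --- this is precisely where the fourth-moment truncation enters and where the combinatorics is delicate. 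The surviving terms reproduce the $2k$-th moment of the reflected, shifted Marchenko--Pastur law, which is of order $(2\sqrt{\beta})^{2k}$ up to polynomial factors in $k$; Markov's inequality together with Borel--Cantelli then gives $\norm{\mt{T}_i} \le 2\sqrt{\beta} + o(1)$ almost surely, that is $\lambdamin(\mt{S}_i) \ge (1+\beta) - 2\sqrt{\beta} - o(1) = (1-\sqrt{\beta})^2 - o(1)$. An alternative that avoids the heaviest combinatorics is a resolvent argument: show that $\frac{1}{m_i}\tr(\mt{S}_i - zI)^{-1}$ converges to the deterministic, finite Stieltjes transform of the Marchenko--Pastur law uniformly for real $z$ in a neighbourhood of $(1-\sqrt{\beta})^2 - \epsilon$, add a martingale-difference concentration estimate over the rows of $\mt{X}_i$, and observe that an eigenvalue in that window would force the random Stieltjes transform to blow up at a nearby real point --- a contradiction. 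Either route, combined with the $\limsup$ bound, yields the claimed limit.
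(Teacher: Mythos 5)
The paper does not prove this statement; it imports it wholesale from \citet[Theorem~5.11]{BaiSi10} (the Bai--Yin theorem for the smallest eigenvalue) and uses it only to build asymptotic intuition before developing the finite-sample bound of Theorem~\ref{thm:smallesteigwhpsg}. So there is no in-paper argument to compare your sketch against; what you have written is, instead, a reasonable summary of the classical Bai--Yin proof.

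On the merits of the sketch itself: the overall structure --- truncation via the finite fourth moment, an upper bound on $\limsup \lambdamin$ from the Marchenko--Pastur bulk law, and a matching lower bound via the moment method applied to the shift $(1+\beta)I - \mt{S}_i$ --- is exactly the route of Bai and Yin (1993), and you are right that the entire difficulty sits in the combinatorial moment estimate for the shifted matrix, which is what \citet{BaiSi10} spend a chapter on. One factual slip: the Marchenko--Pastur density does \emph{not} have strictly positive density at the left endpoint $a = (1-\sqrt{\beta})^2$; it has a square-root vanishing there, $f_\beta(x) = \frac{1}{2\pi\beta x}\sqrt{(b-x)(x-a)}$, so $f_\beta(a)=0$. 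The argument survives because the MP measure of $[a, a+\epsilon]$ is still strictly positive for every $\epsilon>0$, which is all you need to force eigenvalues near $a$; but the reason you gave is wrong as stated. Similarly, for $\beta = 0$ the conclusion $\lambdamin \to 1$ is not ``immediate'' from the degenerate MP law --- the bulk law alone does not forbid a vanishing fraction of eigenvalues from escaping below, and you still need an edge estimate in that regime. These are small points in a sketch, and the alternative resolvent/local-law route you mention is also legitimate, but since the paper treats this theorem as a cited black box, neither route is the ``paper's approach'' in any meaningful sense.
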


This asymptotic limit can be used to approximate an asymptotic lower bound on $m(\epsilon,\gamma,D)$,
if $D_X$ is a product distribution of i.i.d.~random variables with mean zero, variance $\sigma^2$, and finite fourth moment.  Let $\mt{X} \in \reals^{m\times d}$ be the matrix of a sample of size $m$ drawn from $D_X$. We can find
$m = m_\circ$ such that $\lambda_{m_\circ}(\mt{X}\mt{X}^T) \approx \gamma^2m_\circ$, and use \thmref{thm:inductive} to conclude that $m(\epsilon,\gamma,D) \geq m_\circ/2$. If $d$ and $m$ are
large enough, we have by \thmref{thm:asym} that for $\mt{X}$ drawn from $D_X^m$:
\[
\lambdamin(\mt{X}\mt{X}^T) \approx d \sigma^2 (1-\sqrt{m/d})^2  = \sigma^2(\sqrt{d}-\sqrt{m})^2.
\]
Solving the equality $\sigma^2(\sqrt{d}-\sqrt{m_\circ})^2=m_\circ\gamma^2$ we get
$m_\circ = d/(1+\gamma/\sigma)^2$.  The \fullkgname\ for $D_X$ is $k_\gamma
\approx d/(1+\gamma^2/\sigma^2)$, thus $\tfrac{1}{2}
k_\gamma \leq
m_\circ \leq k_\gamma$.  In this case, then, the sample complexity lower bound is indeed
the same order as $k_\gamma$, which controls also the upper bound in \corref{cor:upperbound}.  However,
this is an asymptotic analysis, which holds for a highly limited set of distributions.
Moreover, since \thmref{thm:asym} holds asymptotically for each
distribution separately, we cannot use it to deduce a uniform finite-sample
lower bound for families of distributions.

For our analysis we require \emph{finite-sample} bounds for the
smallest eigenvalue of a random Gram-matrix.  
\citet{RudelsonVe09,RudelsonVe08} provide such
finite-sample lower bounds for distributions which are products 
of identically distributed sub-Gaussians.  In \thmref{thm:smallesteigwhpsg} below we
provide a new and more general result, which holds for any sub-Gaussian product distribution. The proof of \thmref{thm:smallesteigwhpsg}
   is provided in \appref{app:smallesteigwhpsg}. Combining \thmref{thm:smallesteigwhpsg} with \thmref{thm:inductive} above
   we prove the lower bound, stated in \thmref{thm:lowerboundsg} below.
\begin{theorem}\label{thm:smallesteigwhpsg}
For any $\rmom> 0$ and $\delta \in (0,1)$ there are $\beta > 0$ and \mbox{$C > 0$} such that the following holds.
For any $D_X \in \dfamily_\rmom$ with covariance matrix $\Sigma \leq I$,
and for any $m \leq \beta \cdot \trace(\Sigma) - C$,
if $\mt{X}$ is the $m\times d$ matrix of a sample drawn from $D_X^m$, then
\[
\P[\lambdamin(\mt{X} \mt{X}^T) \geq m] \geq \delta.
\]
\end{theorem}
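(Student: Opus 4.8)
The plan is to bound the smallest singular value $\sigma_{\min}(\mt{X}^T)=\sqrt{\lambdamin(\mt{X}\mt{X}^T)}$ from below by the classical net argument for extreme singular values (in the spirit of \citet{RudelsonVe08}); the new ingredient is that the lower deviation estimate for $\norm{\mt{X}^T v}^2$ is extracted from the product structure of $D_X$ rather than from identical distribution of the coordinates, which is exactly what allows a non-uniform eigenvalue profile of $\Sigma$. By \defref{def:indsubg} we may rotate coordinates so that $\Sigma=\diag(\sigma_1^2,\dots,\sigma_d^2)$ with $\sigma_j^2\le 1$ (since $\Sigma\le I$), the rows $x_1,\dots,x_m$ of $\mt{X}$ are i.i.d.\ with mutually independent coordinates, and the $j$-th coordinate $x_i^{(j)}$ is sub-Gaussian with moment $\rmom\sigma_j$. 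The key observation is that for any fixed unit vector $v\in\reals^m$ the vector $Z:=\mt{X}^T v=\sum_i v_i x_i$ again has independent coordinates $Z_j=\sum_i v_i x_i^{(j)}$, each sub-Gaussian with moment $\rmom\sigma_j$ (a weighted sum of independent sub-Gaussians, with $\sum_i v_i^2=1$) and with $\E[Z_j^2]=\sigma_j^2$; in particular $\E\norm{Z}^2=\trace(\Sigma)$, and a one-line computation shows $Z$ is a sub-Gaussian vector with moment matrix $\rmom^2\Sigma$.

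I expect the lower-tail estimate to be the crux, and this is precisely where independence is indispensable: without it a single coordinate of $Z$ can be essentially $0$ with probability near $1$ even subject to the covariance constraint, so $\norm{Z}^2$ need not concentrate below $\trace(\Sigma)$. With independence, $\norm{Z}^2=\sum_j Z_j^2$ is a sum of independent nonnegative variables and a Chernoff argument applies. Using $e^{-x}\le 1-x+x^2/2$ for $x\ge 0$, the standard sub-Gaussian fourth-moment bound $\E[Z_j^4]\le c_0(\rmom\sigma_j)^4$, and $\sigma_j^2\le 1$, one gets for all $0<s\le(c_0\rmom^4)^{-1}$
\[
\E[\exp(-sZ_j^2)]\le 1-s\sigma_j^2+\tfrac{s^2}{2}c_0\rmom^4\sigma_j^4\le\exp\!\big(-\tfrac{s}{2}\sigma_j^2\big),
\]
hence $\E[\exp(-s\norm{Z}^2)]\le\exp(-\tfrac{s}{2}\trace(\Sigma))$ and, taking $s=(c_0\rmom^4)^{-1}$,
\[
\P\big[\norm{Z}^2\le\tfrac14\trace(\Sigma)\big]\le\exp(-c\,\trace(\Sigma)),\qquad c=c(\rmom)>0 .
\]
For the companion upper tail I would apply \lemref{lem:sgvecmgf} with $\mt{B}=\rmom^2\Sigma$ (valid at $t=\tfrac1{4\rmom^2}$ because $\lambdamax(\mt{B})=\rmom^2\lambdamax(\Sigma)\le\rmom^2$), giving $\E[\exp(\tfrac1{4\rmom^2}\norm{Z}^2)]\le\exp(\tfrac12\trace(\Sigma))$ and thus $\P[\norm{Z}^2\ge K^2\trace(\Sigma)]\le\exp(-c'\trace(\Sigma))$ for any fixed $K=K(\rmom)$ chosen large enough.

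Now the net step. Fix $K'=K'(\rmom)$ so that $\norm{\mt{X}}\le K'\sqrt{\trace(\Sigma)}$ holds except with probability at most $\tfrac{1-\delta}{2}$: this follows from the upper-tail bound applied over a fixed-radius net of $S^{m-1}$ of size $A_0^m$ together with $\norm{\mt{X}}^2\le 2\max_{u\in\mathcal N_0}\norm{\mt{X}^T u}^2$ for a sufficiently fine net $\mathcal N_0$, since $A_0^m\exp(-c'\trace(\Sigma))$ is small once $\trace(\Sigma)$ is large. Let $\mathcal N$ be a net of $S^{m-1}$ of radius $\varepsilon_0:=\tfrac1{4K'}$, with $|\mathcal N|\le A_1^m$, $A_1=A_1(\rmom)$. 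A union bound over $\mathcal N$ using the lower-tail estimate shows that, except with probability at most $A_1^m\exp(-c\,\trace(\Sigma))$, every $u\in\mathcal N$ satisfies $\norm{\mt{X}^T u}\ge\tfrac12\sqrt{\trace(\Sigma)}$; on the intersection with $\{\norm{\mt{X}}\le K'\sqrt{\trace(\Sigma)}\}$ the standard perturbation inequality gives
\[
\sigma_{\min}(\mt{X}^T)\ge\min_{u\in\mathcal N}\norm{\mt{X}^T u}-\varepsilon_0\norm{\mt{X}}\ge\tfrac12\sqrt{\trace(\Sigma)}-\tfrac14\sqrt{\trace(\Sigma)}=\tfrac14\sqrt{\trace(\Sigma)},
\]
so $\lambdamin(\mt{X}\mt{X}^T)\ge\tfrac1{16}\trace(\Sigma)$ on this event.

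It remains to choose $\beta$ and $C$. Since $m\le\beta\,\trace(\Sigma)-C$ means $\trace(\Sigma)\ge(m+C)/\beta$, taking $\beta\le\tfrac1{16}$ yields $\lambdamin(\mt{X}\mt{X}^T)\ge m+C>m$ on the good event. Choosing $\beta$ additionally small enough that $c/\beta>\ln A_1$ and $c'/\beta>\ln A_0$ forces the two exceptional probabilities to be at most $\exp(m\ln A_1-c(m+C)/\beta)\le\exp(-m-cC/\beta)$ and likewise for the other, and picking $C=C(\rmom,\delta)$ large makes each $\le\tfrac{1-\delta}{2}$. Hence $\P[\lambdamin(\mt{X}\mt{X}^T)\ge m]\ge\delta$, with $\beta$ and $C$ depending only on $\rmom$ and $\delta$.
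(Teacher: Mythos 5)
Your proof is correct and shares the paper's overall architecture---an $\epsilon$-net on $S^{m-1}$, an operator-norm bound derived via \lemref{lem:sgvecmgf}, and a union bound---but the lower-tail estimate for $\norm{\mt{X}^T v}^2$, which is the crux, is obtained by a genuinely different route. The paper proves it in \lemref{lem:boundsigmayxsubg} through a three-stage argument: a Paley--Zygmund anti-concentration bound applied to groups of coordinates (\lemref{lem:sigyxboundbelow}), a partition of $[d]$ into roughly $2\trace(\Sigma)$ blocks each carrying variance in $(1/2,1]$, and Rudelson--Vershynin's lemma on sums of non-negative variables (\lemref{lem:sumofbounded}) to amplify per-block anti-concentration into an exponential tail. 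You instead exploit the coordinate-wise independence of $Z=\mt{X}^Tv$ directly: the $Z_j$ are independent sub-Gaussians with $\E[Z_j^2]=\sigma_j^2$, and a one-shot MGF/Chernoff computation using $e^{-x}\le 1-x+x^2/2$, the sub-Gaussian fourth-moment bound, and the cap $\sigma_j^2\le 1$ gives $\E[\exp(-s\norm{Z}^2)]\le\exp(-\tfrac{s}{2}\trace(\Sigma))$ for a single $s=(c_0\rho^4)^{-1}$ that works uniformly over $j$. Your route is shorter, dispenses with the block bookkeeping and the external sum lemma, and makes it transparent why $\Sigma\le I$ is needed; the paper's route is somewhat more modular in isolating a reusable fourth-moment lower-tail lemma, but the assumptions end up being the same. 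One small nit: to get $A_1^m\exp(-c\,\trace(\Sigma))\le\exp(-m-cC/\beta)$ you actually need $c/\beta\ge\ln A_1+1$ rather than $c/\beta>\ln A_1$; however, the weaker bound $\exp(-cC/\beta)\le\tfrac{1-\delta}{2}$ already follows from $c/\beta>\ln A_1$ once $C$ is taken large, so this is cosmetic.
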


\begin{theorem}[Sample complexity lower bound for distributions in $\dfamily_\rmom$]
\label{thm:lowerboundsg}
For any $\rmom >0$ there are constants $\beta > 0,C\geq 0$ such that for any $D$ with $D_X \in \dfamily_\rmom$, for any $\gamma > 0$ and for any $\epsilon < \frac{1}{2} - \loss^*_\gamma(D)$,
\[
m(\epsilon,\gamma,D,1/4) \geq \beta k_\gamma(D_X)-C.
\]
\end{theorem}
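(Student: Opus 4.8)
The plan is to combine \thmref{thm:inductive} with the random Gram-matrix bound of \thmref{thm:smallesteigwhpsg}. By \thmref{thm:inductive}, it suffices to exhibit an integer $m_0 = \Omega(k_\gamma(D_X))$ such that a sample $\mt{X} \sim D_X^{m_0}$ satisfies $\P[\lambdamin(\mt{X}\mt{X}^T) \geq m_0\gamma^2] \geq \tfrac12$. Indeed, writing $\eta = \P[\lambdamin(\mt{X}\mt{X}^T)\geq m_0\gamma^2] \geq \tfrac12$, \thmref{thm:inductive} gives $m(\epsilon,\gamma,D,\eta/2) \geq \floor{m_0/2}$ for all $\epsilon < \tfrac12 - \loss^*_\gamma(D)$; and since requiring the failure probability to be at most $\tfrac14$ is at least as stringent as requiring it to be at most $\eta/2 \geq \tfrac14$, we have $m(\epsilon,\gamma,D,\tfrac14) \geq m(\epsilon,\gamma,D,\eta/2) \geq \floor{m_0/2}$. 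The work is to produce such an $m_0$: \thmref{thm:smallesteigwhpsg} does this directly, but only for product distributions whose covariance is dominated by $I$, so we must reduce to that case while keeping the effective trace proportional to $k_\gamma$.

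For the reduction, write $\lambda_1 \geq \cdots \geq \lambda_d \geq 0$ for the eigenvalues of the covariance of $D_X$ (which equals the uncentered covariance, as the mean is zero), with orthonormal eigenbasis $a_1,\ldots,a_d$. Since $D_X \in \dfamily_\rmom$, this is also a basis in which the coordinates $\dotprod{a_i,X}$ are independent, each sub-Gaussian with relative moment $\rmom$ and variance $\lambda_i$. Put $c_i = \min(1,\gamma/\sqrt{\lambda_i})$ and $\mt{M} = \sum_{i} c_i\, a_i a_i^T$, and let $D_X''$ be the law of $\tfrac1\gamma \mt{M}X$ for $X \sim D_X$. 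In the $a_i$-basis the coordinates of $\tfrac1\gamma\mt{M}X$ are $\tfrac1\gamma c_i\dotprod{a_i,X}$, which are independent and sub-Gaussian with relative moment $\rmom$ (scaling a variable by a constant preserves the relative moment) and variance $c_i^2\lambda_i/\gamma^2 = \min(\lambda_i/\gamma^2,1) \leq 1$. Hence $D_X'' \in \dfamily_\rmom$ and its covariance $\Sigma''$ satisfies $\Sigma'' \leq I$.

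Next, the trace bound. Let $j^\star = |\{i : \lambda_i \geq \gamma^2\}|$ and $T = \sum_{i>j^\star}\lambda_i$, so $\trace(\Sigma'') = \sum_i \min(\lambda_i/\gamma^2,1) = j^\star + T/\gamma^2$. Taking $k' = j^\star + \ceil{T/\gamma^2}$ we get $\sum_{i>k'}\lambda_i \leq T \leq \gamma^2 k'$, so by \eqref{eq:kgammamin}, $k_\gamma(D_X) \leq k' \leq j^\star + T/\gamma^2 + 1$; therefore $\trace(\Sigma'') \geq k_\gamma(D_X)-1$. Now apply \thmref{thm:smallesteigwhpsg} with parameters $\rmom$ and $\delta = \tfrac12$ to obtain constants $\beta_0 > 0$, $C_0 > 0$: for every integer $m_0$ with $1 \leq m_0 \leq \beta_0\trace(\Sigma'') - C_0$ — in particular whenever $1 \leq m_0 \leq \beta_0(k_\gamma(D_X)-1) - C_0$ — a sample $\mt{X}'' \sim (D_X'')^{m_0}$ satisfies $\P[\lambdamin(\mt{X}''(\mt{X}'')^T) \geq m_0] \geq \tfrac12$. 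Couple $\mt{X}''$ with $\mt{X}\sim D_X^{m_0}$ by applying the map $x \mapsto \tfrac1\gamma\mt{M}x$ row by row, so that $\mt{X}'' = \tfrac1\gamma\mt{X}\mt{M}$ and $\mt{X}''(\mt{X}'')^T = \tfrac1{\gamma^2}\mt{X}\mt{M}^2\mt{X}^T \preceq \tfrac1{\gamma^2}\mt{X}\mt{X}^T$, since $\mt{M}^2 = \sum_i c_i^2 a_i a_i^T \preceq I$. Consequently $\lambdamin(\mt{X}\mt{X}^T) \geq \gamma^2\lambdamin(\mt{X}''(\mt{X}'')^T)$ for every realization, and thus $\P[\lambdamin(\mt{X}\mt{X}^T) \geq m_0\gamma^2] \geq \tfrac12$.

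Finally, take $m_0$ to be the largest integer not exceeding $\beta_0(k_\gamma(D_X)-1) - C_0$. If $m_0 \geq 2$, then by the previous paragraph and the first paragraph we get $m(\epsilon,\gamma,D,\tfrac14) \geq \floor{m_0/2} \geq \beta k_\gamma(D_X) - C$ for suitable $\beta > 0$ and $C \geq 0$ depending only on $\beta_0,C_0$, hence only on $\rmom$. If $m_0 < 2$, then $k_\gamma(D_X)$ is bounded by a constant $k_0 = k_0(\rmom)$, and it suffices to enlarge $C$ so that $\beta k_\gamma(D_X) - C \leq 0 \leq m(\epsilon,\gamma,D,\tfrac14)$; this handles all remaining cases. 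The main obstacle, beyond invoking \thmref{thm:smallesteigwhpsg} as a black box, is the reduction carried out in the middle paragraphs: obtaining a \emph{product} distribution with covariance $\leq I$ whose trace is still $\Omega(k_\gamma)$ — this is exactly what the coordinatewise truncation $c_i = \min(1,\gamma/\sqrt{\lambda_i})$ and the PSD domination $\mt{M}^2 \preceq I$ achieve; everything else is bookkeeping of constants and confidence levels.
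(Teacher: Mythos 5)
Your proof is correct and follows the same overall strategy as the paper: reduce to a sub-Gaussian product distribution with covariance dominated by $I$ and trace at least $k_\gamma - 1$, invoke \thmref{thm:smallesteigwhpsg}, transfer the Gram-matrix eigenvalue bound back to the original sample, and finish with \thmref{thm:inductive}. The one substantive difference is that the paper splits into two cases according to whether $\lambda_{k_\gamma} \geq \gamma^2$ (in Case I keeping only coordinates $1,\ldots,k_\gamma$ normalized to unit variance, in Case II keeping only coordinates $k_\gamma,\ldots,d$ scaled by $1/\gamma$), whereas you perform a single unified coordinatewise truncation $c_i = \min(1,\gamma/\sqrt{\lambda_i})$ on all coordinates and verify $\trace(\Sigma'') \geq k_\gamma - 1$ directly from \eqref{eq:kgammamin}; this avoids the case analysis and makes the PSD domination $\mt{M}^2 \preceq I$ (hence $\lambdamin(\mt{X}\mt{X}^T)\geq\gamma^2\lambdamin(\mt{X}''(\mt{X}'')^T)$) cleaner, at no loss of generality.
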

\begin{proof} 
Assume w.l.o.g. that the orthonormal basis $a_1,\ldots,a_d$ of independent sub-Gaussian 
directions of $D_X$, defined in \defref{def:indsubg}, is the natural basis $e_1,\ldots,e_d$. Define $\lambda_i = \E_{X\sim D_X}[X[i]^2]$,
and assume w.l.o.g. $\lambda_1 \geq \ldots \geq \lambda_d > 0$. 
Let $\mt{X}$ be the $m\times d$ matrix of a sample drawn from $D_X^m$. Fix $\delta \in (0,1)$, and let $\beta$ and $C$ be the constants for $\rmom$ and $\delta$ in \thmref{thm:smallesteigwhpsg}.
Throughout this proof we abbreviate $k_\gamma \triangleq k_\gamma(D_X)$.
Let $m \leq \beta (k_\gamma-1) - C$. 
We would like to use \thmref{thm:smallesteigwhpsg} to bound $\lambdamin(\mt{X}\mt{X}^T)$ with high probability, so that \thmref{thm:inductive} can be applied to get the desired lower bound. However, \thmref{thm:smallesteigwhpsg} holds only if $\Sigma \leq I$. Thus we split to two cases---one in which the dimensionality controls the lower bound, and one in which the norm controls it. The split is based on the value of $\lambda_{k_\gamma}$. 
\begin{itemize}
\item Case I: Assume $\lambda_{k_\gamma} \geq \gamma^2$. Then $\forall i\in[k_\gamma],\lambda_i \geq \gamma^2$.
By our assumptions on $D_X$, for all $i\in[d]$ the random variable $X[i]$ is sub-Gaussian
with relative moment $\rmom$. Consider the random variables $Z[i] = X[i]/\sqrt{\lambda_i}$ for $i \in [k_\gamma]$. $Z[i]$ is also sub-Gaussian with relative moment $\rmom$, and $\E[Z[i]^2] = 1$.
Consider the product distribution of $Z[1],\ldots,Z[k_\gamma]$,
and let $\Sigma'$ be its covariance matrix. We have $\Sigma' = I_{k_\gamma}$,
and $\trace(\Sigma') = k_\gamma$.
Let $\mt{Z}$ be
the matrix of a sample of size $m$ drawn from this distribution. By \thmref{thm:smallesteigwhpsg},
$\P[\lambdamin(\mt{Z} \mt{Z}^T)\geq m] \geq \delta$, which is equivalent to 
\[
\P[\lambdamin(\mt{X} \cdot \diag(1/\lambda_1,\ldots,1/\lambda_{k_\gamma},0,\ldots,0) \cdot \mt{X}^T)\geq m] \geq \delta.
\]
Since $\forall i\in[k_\gamma],\lambda_i \geq \gamma^2$, we have $\P[\lambdamin(\mt{X} \mt{X}^T)\geq m\gamma^2] \geq \delta$. 
\item Case II:
Assume $\lambda_{k_\gamma} < \gamma^2$. Then $\lambda_i < \gamma^2$ for all $i \in \{k_\gamma,\ldots,d\}$. 
Consider the random variables $Z[i] = X[i]/\gamma$ for $i \in \{k_\gamma,\ldots,d\}$. $Z[i]$ is sub-Gaussian with relative moment $\rmom$
and $\E[Z[i]^2] \leq 1$.
Consider the product distribution of $Z[k_\gamma],\ldots,Z[d]$,
and let $\Sigma'$ be its covariance matrix. We have $\Sigma' < I_{d-k_\gamma+1}$.
By the minimality in \eqref{eq:kgammamin} we also have $\trace(\Sigma') = \frac{1}{\gamma^2}\sum_{i=k_\gamma}^d \lambda_i \geq k_\gamma-1$.
Let $\mt{Z}$ be
the matrix of a sample of size $m$ drawn from this product distribution. By
 \thmref{thm:smallesteigwhpsg},
$\P[\lambdamin(\mt{Z} \mt{Z}^T)\geq m] \geq \delta$. Equivalently,
\[
\P[\lambdamin(\mt{X} \cdot \diag(0,\ldots,0,1/\gamma^2,\ldots,1/\gamma^2) \cdot \mt{X}^T)\geq m] \geq \delta,
\]
therefore $\P[\lambdamin(\mt{X} \mt{X}^T)\geq m\gamma^2] \geq \delta$.
\end{itemize}

In both cases $\P[\lambdamin(\mt{X}\mt{X}^T)\geq m\gamma^2] \geq \delta$. This holds for any $m \leq \beta (k_\gamma-1) -C$, thus by \thmref{thm:inductive} $m(\epsilon, \gamma,D, \delta/2) \geq \floor{(\beta(k_\gamma-1)-C)/2}$
for $\epsilon < 1/2-\loss_\gamma^*(D)$.
We finalize the proof by setting $\delta = \frac{1}{2}$ and adjusting $\beta$ and $C$.
\end{proof}

\section{On the Limitations of the Covariance Matrix}\label{sec:limitations}
We have shown matching upper and lower bounds for the sample complexity of
learning with MEM, for any sub-Gaussian product
distribution with a bounded relative moment. This shows that the
margin-adapted dimension fully characterizes the sample complexity of learning
with MEM for such distributions. What properties of a
distribution play a role in determining the sample complexity for general distributions?
In the following theorem we show
that these properties must include more than the covariance matrix of the
distribution, even when assuming sub-Gaussian tails and bounded relative moments. 

\begin{theorem}\label{thm:covariance}
For any integer $d > 1$, there
exist two distributions $D$ and $P$ over $\reals^d \times \{\pm 1\}$
with identical covariance matrices, such that for any $\epsilon,\delta \in (0,\frac14)$, $m(\epsilon, 1, P, \delta) \geq \Omega(d)$ while $m(\epsilon, 1, D,\delta) \leq \ceil{\log_2(1/\delta)}$. Both $D_X$ and $P_X$ 
are sub-Gaussian random vectors, with a relative moment of $\sqrt{2}$ in all directions.
\end{theorem}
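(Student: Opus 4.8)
The plan is to let $D$ and $P$ share the same marginal on $\reals^d$, namely the isotropic Gaussian $D_X = P_X = N(0,c^2 I_d)$ for a large enough absolute constant $c$ (e.g.\ $c=4$), and to let them differ only in the conditional law of $Y$ given $X$: for $P$ take $Y=\sign(X[1])$, and for $D$ take $Y$ to be an independent uniform $\pm1$ sign. Since the marginals coincide, the two distributions have identical covariance matrices, equal to $c^2 I_d$. Moreover $N(0,c^2 I_d)$ is a product of independent mean-zero Gaussians in the standard basis, hence a sub-Gaussian product distribution, and each one-dimensional projection $\dotprod{u,X}\sim N(0,c^2\norm{u}^2)$ is sub-Gaussian with relative moment $1\le\sqrt2$; so both marginals satisfy the stated regularity requirements, and in particular $P_X\in\dfamily_1\subseteq\dfamily_{\sqrt2}$.

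For the lower bound on $P$, I would first evaluate the \fullkgname\ through \eqref{eq:kgammamin}: all eigenvalues of $\E[XX^T]=c^2 I_d$ equal $c^2$, so $k_1(P_X)=\min\{k : (d-k)c^2\le k\}=\ceil{dc^2/(1+c^2)}\ge\tfrac{c^2}{1+c^2}\,d$. Next I would bound the optimal margin error using $w=e_1$: $\loss^*_1(P)\le\loss_1(e_1,P)=\P[\,|X[1]|\le1\,]=2\Phi(1/c)-1$, where $\Phi$ is the standard normal CDF, and for $c=4$ this is strictly below $\tfrac14$, so every $\epsilon\in(0,\tfrac14)$ satisfies $\epsilon<\tfrac12-\loss^*_1(P)$. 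Invoking \thmref{thm:lowerboundsg} with $\rmom=1$ then gives absolute constants $\beta>0$, $C\ge0$ with $m(\epsilon,1,P,\tfrac14)\ge\beta\,k_1(P_X)-C\ge\tfrac{\beta c^2}{1+c^2}\,d-C=\Omega(d)$ for all such $\epsilon$; since the sample complexity is nonincreasing in the confidence parameter, the same $\Omega(d)$ bound holds for every $\delta\in(0,\tfrac14)$.

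For the upper bound on $D$, the idea is that with $Y$ independent of $X$ nothing can be learned, yet the best attainable margin error exceeds $\tfrac12$, so \emph{every} nondegenerate classifier is already optimal. Concretely, $Y\dotprod{w,X}\sim N(0,c^2\norm{w}^2)$ for every $w\in\ball$ (a fair sign times a centered Gaussian), so $\loss^*_1(D)=\inf_{w\in\ball}\P[N(0,c^2\norm{w}^2)\le1]=\Phi(1/c)>\tfrac12$ (attained at $\norm{w}=1$), while for any $w\ne0$ the variable $\dotprod{w,X}$ has a density, whence $\loss_0(w,D)=\tfrac12+\tfrac12\P[\dotprod{w,X}=0]=\tfrac12$. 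Hence any MEM output $w\ne0$ has excess error $\loss_0(w,D)-\loss^*_1(D)=\tfrac12-\Phi(1/c)<0\le\epsilon$. It remains to rule out the degenerate output $w=0$: since $\loss_1(0,S)=1$ for every sample $S$, one can have $0\in\argmin_{w\in\ball}\loss_1(w,S)$ only if no $w\in\ball$ attains margin larger than $1$ on any sampled point, i.e.\ only if every $x_i$ has $\norm{x_i}\le1$. As $d\ge2$, $\P_{X\sim D_X}[\norm{X}\le1]\le\P[\norm{N(0,I_d)}\le1]\le1-e^{-1/2}<\tfrac12$, so this event has probability at most $(1-e^{-1/2})^m<2^{-m}\le\delta$ whenever $m\ge\ceil{\log_2(1/\delta)}$. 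Thus for such $m$, with probability at least $1-\delta$ every MEM algorithm returns a $w\ne0$, and therefore $m(\epsilon,1,D,\delta)\le\ceil{\log_2(1/\delta)}$.

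I do not expect a real obstacle in the calculations; the content is in seeing that the construction is essentially forced. Demanding that $D$ and $P$ have a common covariance matrix with $k_1(\cdot)=\Omega(d)$ forces, by \eqref{eq:kgammamin}, the common marginal to have order-one variance in $\Omega(d)$ directions, and sub-Gaussianity then prevents concentrating it on few points; consequently there is no ``genuinely learnable yet cheap'' candidate for $D$, which is exactly why one resorts to an uninformative label whose excess error is trivially nonpositive. Given the construction, the proof is a handful of elementary Gaussian computations plus one application of \thmref{thm:lowerboundsg}; the single mild technicality --- excluding the output $w=0$ --- is dispatched by the norm bound above, and is where the hypothesis $d>1$ is used.
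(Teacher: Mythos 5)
Your proof is correct but follows a genuinely different route from the paper's. The paper constructs two \emph{distinct} marginals with the same covariance: $D_X$ is a balanced mixture of the uniform distribution on $\{\pm 1\}^d$ and the uniform distribution on $\{\pm 1\}\times\{0\}^{d-1}$, while $P_X$ is uniform on $\{\pm 1\}\times\{1/\sqrt 2\}^{d-1}$, and both use the deterministic label $Y = X[1]$. The $O(\log(1/\delta))$ upper bound for $D$ then comes from a ``spike'' argument: whenever the sample contains a point from the second mixture component (which happens with probability $1-2^{-m}$), the separator $e_1$ is the unique $w\in\ball$ minimizing the $1$-margin error on that point, so every MEM algorithm must return $e_1$ and incurs zero misclassification error. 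You instead keep $D_X=P_X$ equal to an isotropic Gaussian and vary only the label rule, making $D$'s label independent noise, so that $\loss^*_1(D)>\tfrac12$ exceeds the error of every nonzero classifier; the only residual work is ruling out the degenerate output $w=0$, which your norm-tail argument handles cleanly, and this is indeed exactly where $d>1$ is used. Both routes are valid. Yours is simpler in several respects: identical marginals make the covariance claim trivial, the Gaussian avoids the separate sub-Gaussianity verification the paper needs for its mixture $D_X$, and you in fact establish the stronger statement that even the full marginal of $X$ (not merely its covariance) fails to determine the MEM sample complexity. The paper's version is perhaps more illustrative in that its $D$ admits a meaningful separator that is genuinely pinned down by a single sample point. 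One aside in your writeup is not quite right: the construction is not ``essentially forced'' toward a shared marginal with an uninformative label --- the paper uses two different marginals and deterministic labels throughout.
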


\begin{proof}
Let $D_a$ and $D_b$ be distributions over $\reals^d$ such that $D_a$ is uniform over $\{\pm 1\}^d$ 
and $D_b$ is uniform over $\{\pm 1\}\times\{0\}^{d-1}$. Let $D_X$ be a balanced mixture of $D_a$ and $D_b$. Let $P_X$ be uniform over $\{\pm 1\}\times\{\frac{1}{\sqrt{2}}\}^{d-1}$.
For both $D$ and $P$,  let $\P[Y = \dotprod{e_1, X}] = 1$.  
The covariance matrix of $D_X$ and $P_X$ is $\diag(1,\half, \ldots, \half)$,  thus $k_1(D_X) = k_1(P_X) \geq \Omega(d)$.

By \eqref{eq:bernoulli}, $P_X,D_a$ and $D_b$ are all sub-Gaussian product distribution with relative moment $1$, thus also with moment $\sqrt{2} > 1$.
The projection of $D_X$ along any direction $u \in \reals^d$ is sub-Gaussian with relative moment $\sqrt{2}$ as well, since
\begin{align*}
&\E_{X \sim D_X}[\exp(\dotprod{u,X})] = \half(\E_{X \sim D^a}[\exp(\dotprod{u,X})] + \E_{X \sim D^b}[\exp(\dotprod{u,X})]) \\
&=\half(\prod_{i\in[d]}(\exp(u_i)+\exp(-u_i))/2 + (\exp(u_1)+\exp(-u_1))/2) \\
&\leq \half (\prod_{i\in[d]}\exp(u_i^2/2) + \exp(u_1^2/2))\leq \exp(\norm{u}^2/2) \leq  \exp((\norm{u}^2+u_1^2)/2)\\
&=\exp(\E_{X\sim D_X} [\dotprod{u, X}^2]).
\end{align*}
For $P$ we have by \thmref{thm:lowerboundsg} that for any $\epsilon \leq \frac14$, $m(\epsilon, 1, P,\frac14) \geq \Omega(k_1(P_X)) \geq \Omega(d)$. 
In contrast, any MEM algorithm $\cA_1$ will output the correct separator for $D$ 
whenever the sample has at least one point drawn from $D_b$. This is because the separator $e_1$ is the only $w\in \ball$ that classifies this point with zero $1$-margin errors. Such a point exists in a sample of size $m$ with probability $1-2^{-m}$. Therefore $\loss_0(\cA_1,D,m,1/2^m) = 0$.
It follows that for all $\epsilon > 0$, $m(\epsilon,1,D,\delta) \leq \ceil{\log_2(1/\delta)}$.
\end{proof}

\section{Conclusions}\label{sec:conclusions}
\corref{cor:upperbound} and \thmref{thm:lowerboundsg} together provide a tight characterization of the sample complexity of any sub-Gaussian product distribution with a bounded relative moment. Formally, fix $\rmom > 0$. For any $D$ such that $D_X \in \dfamily_\rmom$, and for any $\gamma > 0$ and $\epsilon \in (0,\frac{1}{2} - \loss^*_\gamma(D))$
\begin{equation}\label{eq:doublebound}
 \Omega(k_\gamma(D_X)) \leq m(\epsilon,\gamma,D) \leq \tilde{O}\left(\frac{k_{\gamma}(D_X)}{\epsilon^2}\right).
 \end{equation}
 The upper bound holds uniformly for all distributions, and the constants in the lower bound depend only on $\rmom$. This result shows that the true sample complexity of learning each of these distributions with MEM is characterized by the \fullkgname. 
An interesting conclusion can be drawn as to the influence of the conditional distribution of labels $D_{Y|X}$: Since \eqref{eq:doublebound} holds for \emph{any} $D_{Y|X}$, the effect of the direction of the best separator on the sample complexity is bounded, even for highly non-spherical distributions.

We note that the upper bound that we have proved involves logarithmic factors which might not be necessary. There are upper bounds that depend on the margin alone and on the dimension alone without logarithmic factors. On the other hand, in our bound, which combines the two quantities, there is a logarithmic dependence which stems from the margin component of the bound. It might be possible to tighten the bound and remove the logarithmic dependence.

\eqref{eq:doublebound} can be used to easily characterize the sample complexity behavior for interesting distributions, to compare $L_2$ margin minimization to other learning methods, and to improve certain active learning strategies. We elaborate on each of these applications in the following examples.

\begin{example}[Gaps between $L_1$ and $L_2$ regularization in the presence of
  irrelevant features] \hspace{0.3in} \linebreak[4]
\citet{Ng04} considers learning a single relevant
feature in the presence of many irrelevant features, and compares
using $L_1$ regularization and $L_2$ regularization.  When $\norm{X}_{\infty} \leq 1$,
upper bounds on learning with $L_1$ regularization guarantee a sample
complexity of $O(\ln(d))$ for an $L_1$-based learning rule
\citep{Zhang02}.  In order to compare this with the sample complexity of
$L_2$ regularized learning and establish a gap, one must use a {\em
  lower bound} on the $L_2$ sample complexity.  The argument provided by Ng
actually assumes scale-invariance of the learning rule, and is
therefore valid only for {\em unregularized} linear learning.  In contrast,
using our results we can easily establish a lower
bound of $\Omega(d)$ for many specific distributions with
a bounded $\norm{X}_{\infty}$ and $Y=\sign(X[i])$ for some $i$.  
For instance, if each coordinate is a bounded independent sub-Gaussian random variable with a bounded relative moment, we have $k_1 = \ceil{d/2}$ and \thmref{thm:lowerboundsg}
implies a lower bound of $\Omega(d)$ on the $L_2$ sample complexity.
\end{example}

\begin{example}[Gaps between generative and discriminative learning for a Gaussian mixture]
Let there be two classes, each drawn from a unit-variance spherical
Gaussian in $\reals^d$ with a large distance $2v
>> 1$ between the class means, such that $d >> v^4$. Then $\P_D[X|Y=y]
= \cN(y v\cdot e_1,I_d)$, where $e_1$ is a unit vector in
$\reals^d$.  For any $v$ and $d$, we have $D_X \in \dfamily_1$.  For
large values of $v$, we have extremely low margin error at $\gamma =
v/2$, and so we can hope to learn the classes by looking for a
large-margin separator.  Indeed, we can calculate $k_\gamma =
\ceil{d/(1+\frac{v^2}{4})}$, and conclude that the required sample complexity is $\tilde{\Theta}(d/v^2)$.  Now consider a generative
approach: fitting a spherical Gaussian model for each class. This
amounts to estimating each class center as the empirical average of
the points in the class, and classifying based on the nearest
estimated class center.  It is possible to show that for any constant
$\epsilon>0$, and for large enough $v$ and $d$, $O(d/v^4)$ samples are
enough in order to ensure an error of $\epsilon$.  This establishes a
rather large gap of $\Omega(v^2)$ between the sample complexity of the
discriminative approach and that of the generative one. 
\end{example}

\begin{example}[Active learning] In active learning, there is an abundance of unlabeled examples, but 
labels are costly, and the active learning algorithm needs to decide which labels to query based 
on the labels seen so far. A popular approach to active learning involves estimating
the current set of possible classifiers using sample complexity upper bounds \citep[see, e.g.,][]{BalcanBeLa09,BeygelzimerHsLaZh10b}. Without any distribution-specific information, only general distribution-free upper bounds can be used. However, since there is an abundance of unlabeled examples,
the active learner can use these to estimate tighter distribution-specific upper bounds. In the case of linear classifiers, the \fullkgname\ can be calculated from the uncentered covariance matrix of the distribution, which can be easily estimated from unlabeled data. Thus, our sample complexity upper bounds can be used to improve the active learner's label complexity. Moreover, the lower bound suggests that any further improvement of such active learning strategies would require more information other than the 
distribution's covariance matrix. 
\end{example}

To summarize, we have shown that the true sample complexity of large-margin
learning of each of a rich family of distributions is characterized by the
\fullkgname. Characterizing the true sample complexity allows a better
comparison between this learning approach and other algorithms, and has many
potential applications. The challenge of
characterizing the true sample complexity extends to any distribution and any
learning approach. \thmref{thm:covariance} shows that other properties but the covariance matrix must be taken into account for general distributions.
We believe that obtaining answers to these questions is of
great importance, both to learning theory and to learning applications.

\acks{The authors thank Boaz Nadler for many insightful discussions. 
During part of this research, Sivan Sabato was supported by the Adams Fellowship Program of the Israel Academy of Sciences and Humanities. This work is partly supported by the Gatsby Charitable Foundation, The DARPA MSEE project, the Intel ICRI-CI center, and the Israel Science Foundation center of excellence grant.}

\appendix
\section{Proofs Omitted from the Text}\label{app:proofs}

In this appendix we give detailed proofs which were omitted from the text.
\subsection{Proof of Proposition~\ref{prop:ramp}} \label{app:radramp}
\begin{proof}
Let $w^* \in \argmin_{w\in \ball} \loss_\gamma(w,D)$.
By \eqref{eq:rademacherind}, with probability $1-\delta/2$
\[
\ramp_\gamma(\cA_\gamma(S), D) \leq \ramp_\gamma(\cA_\gamma(S), S) + 2\cR_m(\rampf_\gamma, D) + \sqrt{\frac{8\ln(2/\delta)}{m}}.
\]

Set $h^* \in \cH$ such that $\loss_\gamma(h^*, D) = \loss_\gamma^*(\cH,D)$.
We have
\[
\ramp_\gamma(\cA_\gamma(S), S) \leq \loss_\gamma(\cA_\gamma(S),S) \leq \loss_\gamma(h^*,S).
\]
The first inequality follows since the ramp loss is upper bounded by the margin loss. The second inequality follows since $\cA$ is a MEM algorithm.
Now, by Hoeffding's inequality, since the range of $\ramp_\gamma$ is in $[0,1]$, with probability at least $1-\delta/2$
\[
\loss_\gamma(h^*,S) \leq \loss_\gamma(h^*, D) + \sqrt{\frac{\ln(2/\delta)}{2m}}.
\]
It follows that with probability $1-\delta$
\begin{equation}\label{eq:ramp1}
\ramp_\gamma(\cA_\gamma(S), D) \leq \loss_\gamma^*(\cH,D) + 2\cR_m(\rampf_\gamma, D) + \sqrt{\frac{14\ln(2/\delta)}{m}}.
\end{equation}
We have $\loss_0 \leq \ramp_\gamma$. Combining this with \eqref{eq:ramp1} we conclude 
\eqref{eq:propstatement}.
\end{proof}

\subsection{Proof of \lemref{lem:glipschitz}}\label{app:glipschitz}
\begin{proof}[of \lemref{lem:glipschitz}]
For a function $f:\cX\rightarrow \reals$ and a $z \in Z$, define the function $G[f,z]$ by
\[
\forall x \in \cX,\quad G[f,z](x) = \chop{f(x) + z(x)} - f(x).
\]
Let $f_1,f_2 \in \reals^\cX$ be two functions, and let $g_1 = G[f_1,z] \in \cG(f_1)$ for some $w_b \in \bar{V}$.
Then, since $G[f_2,z] \in \cG(f_2)$, we have
\[
\inf_{g_2 \in \cG(f_2)}\norm{g_1 -g_2}_{L_2(S)} \leq \norm{G[f_1,z] - G[f_2,z]}.
\]
Now, for all $x\in\reals$,
\begin{align*}
|G[f_1,z](x) - G[f_2,z](x)| &=|\chop{f_1(x) + z(x)} - f_1(x)
- \chop{f_2(x) + z(x)} + f_2(x)| \\
&\leq |f_1(x) - f_2(x)|.
\end{align*}
Thus, for any $S \subseteq \cX$,
\begin{align*}
\norm{G[f_1,z] - G[f_2,z]}^2_{L_2(S)} &= \E_{X \sim S}(G[f_1,z](X) - G[f_2,z](X))^2\\ 
&\leq \E_{X \sim S}(f_1(X) - f_2(X))^2 = \norm{f_1 -f_2}^2_{L_2(S)}.
\end{align*}
It follows that $\inf_{g_2 \in \cG(f_2)}\norm{g_1 -g_2}_{L_2(S)} \leq \norm{f_1 -f_2}_{L_2(S)}$. This holds for any $g_1 \in \cG(f_1)$, thus $\Delta_H(\cG(f_1),\cG(f_2))\leq\norm{f_1 -f_2}_{L_2(S)}$.
\end{proof}

\subsection{Proof of \lemref{lem:pseudo}}\label{app:lempseudo}
\begin{proof}[of \lemref{lem:pseudo}]
Let $k$ be the pseudo-dimension of $\cG(f)$, and let $\{ x_1,\ldots,x_k\}
\subseteq \cX$ be a set which is pseudo-shattered by $\cG(f)$. We show that the
same set is pseudo-shattered by $Z$ as well, thus proving the lemma.  Since $\cG(f)$ 
is pseudo-shattered, there
exists a vector $r \in \reals^k$ such that for all $y \in \binlab^k$ there
exists a $g_y \in \cG(f)$ such that $\forall i\in [m], \sign(g_y(x_i)-r[i]) = y[i]$. Therefore for all $y
\in \binlab^k$ there exists a $z_y \in Z$ such that
\[
\forall i \in [k], \sign(\chop{f(x_i) + z_y(x_i)}-f(x_i) - r[i]) = y[i].
\] 
By considering the case $y[i] = 1$, 
we have
\[0 < \chop{f(x_i) + z_y(x_i)}-f(x_i) - r[i] \leq 1 -f(x_i) - r[i].\]
By considering the case $y[i] = -1$, 
we have
\[0 > \chop{f(x_i) + z_y(x_i)}-f(x_i) - r[i] \geq -f(x_i) - r[i].\]
Therefore $0 < f(x_i) + r[i] < 1$.
Now, let $y \in \binlab^k$ and consider any $i \in [k]$.  If $y[i] = 1$ then 
\[
\chop{f(x_i) + z_y(x_i)}-f(x_i) - r[i] > 0
\]
 It follows that
\[
\chop{f(x_i) + z_y(x_i)} > f(x_i) + r[i] > 0,
\]
 thus 
\[
f(x_i) + z_y(x_i) > f(x_i) + r[i].
\]
In other words, $\sign(z_y(x_i)-r[i]) = 1 = y[i]$.
If $y[i] = -1$ then
\[
\chop{f(x_i) + z_y(x_i)}-f(x_i) - r[i] < 0.
\]
It follows that
\[
\chop{f(x_i) + z_y(x_i)} < f(x_i) + r[i] < 1,
\]
thus 
\[
f(x_i) + z_y(x_i) < f(x_i) + r[i].
\]
in other words, $\sign(z_y(x_i)-r[i]) = -1 = y[i]$.
We conclude that $Z$ shatters $\{ x_1,\ldots,x_k\}$ as well, using the same
vector $r \in \reals^k$. Thus the pseudo-dimension of $Z$ is at least $k$.
\end{proof}

\subsection{Proof of \lemref{lem:exactmargin}}\label{app:shattercond}

To prove \lemref{lem:exactmargin}, we first prove the following lemma. Denote by $\conv(A)$ the convex hull of a set $A$.
\begin{lemma}\label{lem:conv} 
Let $\gamma > 0$. For each $y \in \binm$, select $r_y \in \reals^m$ such that for all $i \in [m]$, $r_y[i]y[i] \geq \gamma$. Let $R = \{r_y \in \reals^m \mid y \in \binm\}$. Then $\{\pm \gamma\}^m \subseteq \conv(R)$. 
\end{lemma}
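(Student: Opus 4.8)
The plan is to show that every vertex $v \in \{\pm\gamma\}^m$ lies in $\conv(R)$. Since $R$ is a finite set, $\conv(R)$ is a compact convex polytope, so by the separating hyperplane theorem it suffices to rule out a strictly separating hyperplane between $v$ and $R$. So suppose for contradiction that there exist $a \in \reals^m$ and $c \in \reals$ with $\dotprod{a,v} > c \geq \dotprod{a,r_y}$ for every $y \in \binm$.

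The crux is to choose $y^*$ so that $r_{y^*}$ is aligned with $a$: set $y^*[i] = \sign(a[i])$ when $a[i] \neq 0$ and $y^*[i] = 1$ otherwise. By the hypothesis on $r_{y^*}$ we have $r_{y^*}[i]\,y^*[i] \geq \gamma$, so whenever $a[i] \neq 0$ the coordinate $r_{y^*}[i]$ has the sign of $a[i]$ and magnitude at least $\gamma$; hence $a[i]\,r_{y^*}[i] = |a[i]|\,|r_{y^*}[i]| \geq \gamma |a[i]|$, and this also holds trivially (both sides zero) at coordinates with $a[i]=0$. Summing over $i$ gives $\dotprod{a,r_{y^*}} \geq \gamma\norm{a}_1$. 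On the other hand, every coordinate of $v$ equals $\pm\gamma$, so $\dotprod{a,v} \leq \gamma\norm{a}_1$. Chaining the inequalities, $c \geq \dotprod{a,r_{y^*}} \geq \gamma\norm{a}_1 \geq \dotprod{a,v} > c$, a contradiction. Hence no separating hyperplane exists and $v \in \conv(R)$, as claimed.

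I expect essentially no computational obstacle here: the entire argument hinges on the single observation that picking $y^* = \sign(a)$ makes $r_{y^*}$ maximize $\dotprod{a,\cdot}$ over the relevant region, which dominates $\dotprod{a,v}$. The only mild technicality is the treatment of coordinates with $a[i]=0$, where any choice of $y^*[i]$ works. For completeness one could instead give an inductive proof on $m$: in the inductive step, for a fixed target sign pattern $s$ one pairs $r_{(y',+1)}$ and $r_{(y',-1)}$ for each $y' \in \{\pm1\}^{m-1}$, takes the convex combination of the two whose last coordinate equals $\gamma s[m]$ (possible since the last coordinates of the two points have opposite signs and magnitude $\geq \gamma$), and checks that the truncations of these combinations to the first $m-1$ coordinates again satisfy the hypotheses of the lemma, so that the induction hypothesis yields $\gamma(s[1],\ldots,s[m-1])$ as a convex combination of them; lifting this combination back gives $v$. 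This route is longer but avoids invoking convex duality.
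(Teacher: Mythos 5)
Your proof is correct, and your separating-hyperplane argument is a genuinely different route from the paper's. The paper proves the lemma by induction on $m$: writing $Y_\pm = \binlab^{m-1}\times\{\pm 1\}$ and $R_\pm = \{r_y : y \in Y_\pm\}$, it applies the induction hypothesis to the truncations $\bar{R}_+$ and $\bar{R}_-$ to express the truncated target as a convex combination in each, lifts both combinations to points $z_a, z_b \in \conv(R)$ whose last coordinates straddle $[-\gamma,\gamma]$ while their first $m-1$ coordinates equal the target, and then interpolates between $z_a$ and $z_b$. Your duality argument avoids that bookkeeping entirely: picking $y^* = \sign(a)$ makes $\dotprod{a, r_{y^*}} \geq \gamma\norm{a}_1 \geq \dotprod{a,v}$ for every $v \in \{\pm\gamma\}^m$, which kills any strict separation in one line. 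Your sketched inductive alternative is also valid and in fact differs slightly from the paper's version: you first mix each pair $r_{(y',+1)}, r_{(y',-1)}$ to pin the last coordinate at $\gamma s[m]$ and then apply the induction hypothesis \emph{once} to the resulting $2^{m-1}$ truncations, whereas the paper applies the hypothesis \emph{twice} (to $\bar{R}_+$ and $\bar{R}_-$) and interpolates afterward. The duality route is shorter and more conceptual; the inductive route (either variant) is more elementary and yields the convex combination explicitly.
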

\begin{proof}
We will prove the claim by induction on the dimension $m$. 
For the base case, if $m=1$, we have $R = \{a,b\}\subseteq \reals$ where $a \leq -\gamma$ and $b \geq \gamma$. Clearly, $\conv(R) = [a,b]$, and $\pm \gamma \in [a,b]$.

For the inductive step, assume the lemma holds for $m-1$.
For a vector $t \in \reals^m$, denote by $\bar{t}$ its projection $(t[1],\ldots, t[m-1])$ on $\reals^{m-1}$. Similarly, for a set of vectors $S \subseteq \reals^m$, let \linebreak[4] \mbox{$\bar{S} = \{\bar{s} \mid s \in S\} \subseteq \reals^{m-1}$}. 
 Define $Y_+ = \binlab^{m-1} \times \{+1\}$ and $Y_- = \binlab^{m-1} \times \{-1\}$.
Let $R_+ = \{r_y \mid y\in Y_+ \}$, and similarly for $R_-$. Then the induction hypothesis holds for $\bar{R}_+$ and $\bar{R}_-$ with dimension $m-1$.
Let $z \in \{\pm \gamma\}^m$. We wish to prove $z \in \conv(R)$.
From the induction hypothesis we have $\bar{z} \in \conv(\bar{R}_+)$ and $\bar{z} \in \conv(\bar{R}_-)$. Thus, for all $y\in \{\pm1\}$ there exist 
$
\alpha_y,\beta_y \geq 0$ such that $\sum_{y\in Y_+} \alpha_y = \sum_{y\in Y_-} \beta_y =1$, and 
\[
\bar{z} = \sum_{y\in Y_+} \alpha_y \bar{r}_y = \sum_{y\in Y_-} \beta_y \bar{r}_y.
\]
Let $z_a = \sum_{y\in Y_+} \alpha_y r_y$ and $z_b = \sum_{y\in Y_-} \beta_y r_y$
We have that $\forall y \in Y_+, r_y[m] \geq \gamma$, and $\forall y \in Y_-,
r_y[m] \leq -\gamma$. Therefore, $z_b[m] \leq -\gamma \leq z[m] \leq \gamma \leq z_a[m].$
In addition, $\bar{z}_a = \bar{z}_b = \bar{z}$. Select $\lambda \in [0,1]$
 such that $z[m] = \lambda z_a[m] + (1-\lambda) z_b[m]$, then 
 $z = \lambda z_a + (1-\lambda) z_b$. 
Since $z_a,z_b \in \conv(R)$, we have $z \in \conv(R)$.
\end{proof}

\begin{proof}[of \lemref{lem:exactmargin}]
Denote by $f(S)$ the vector $(f(x_1), \ldots, f(x_m))$.
Recall that $r \in \reals^m$ is the witness for the shattering of $S$, and let
\[
L = \{f(S) - r \mid f \in \cF\} \subseteq \reals^m.
\]
 Since $S$ is shattered, for any $y \in
\binm$ there is an $r_y \in L$ such that $\forall i\in [m], r_y[i]y[i] \geq
\gamma$. By \lemref{lem:conv}, $\{\pm \gamma\}^m \subseteq \conv(L)$. 
Since $\cF$ is convex, $L$ is also convex. Therefore $\{\pm \gamma\}^m \subseteq L$.
\end{proof}

\subsection{Proof of Lemma \ref{lem:sgvecmgf}}\label{app:sgvecmgf}

\begin{proof}[of \lemref{lem:sgvecmgf}]
It suffices to consider diagonal moment matrices:
If $\mt{B}$ is not diagonal, let $\mt{V} \in \reals^{d\times d}$ be an orthogonal matrix such that $\mt{V} \mt{B} \mt{V}^T$ is diagonal, and let $Y = \mt{V}X$. We have $\E[\exp(t \norm{Y}^2)] = \E[\exp(t \norm{X}^2)]$ and $\trace(\mt{V}\mt{B}\mt{V}^T) = \trace(\mt{B})$. In addition, for all $u \in \reals^d$, 
\begin{align*}
\E[\exp(\dotprod{u,Y})] &= \E[\exp(\dotprod{\mt{V}^T u, X})]\leq \exp(\half \dotprod{\mt{B} \mt{V}^T u, \mt{V}^T u}) = \exp(\half \dotprod{\mt{V} \mt{B} \mt{V}^T u, u}).
\end{align*}
Therefore $Y$ is sub-Gaussian with the diagonal moment matrix $\mt{V}\mt{B}\mt{V}^T$.
Thus assume w.l.o.g. that $\mt{B}  = \diag(\lambda_1,\ldots,\lambda_d)$ where $\lambda_1 \geq \ldots \geq \lambda_d \geq 0$.

We have $\exp(t\norm{X}^2) = \prod_{i\in[d]}\exp(tX[i]^2)$. In addition, for any $t > 0$ and $x\in \reals$,
$
2\sqrt{\Pi t}\cdot\exp(t x^2) = \int_{-\infty}^\infty \exp(s x-\frac{s^2}{4 t})ds.
$
Therefore, for any $u\in\reals^d$,
\begin{align*}
(2\sqrt{\Pi t})^d \cdot \E[\exp(t\norm{X}^2)] &= \E\left[\prod_{i\in[d]} \int_{-\infty}^\infty \exp(u[i] X[i]-\frac{u[i]^2}{4 t})du[i]\right]\\
&= \E\left[\int_{-\infty}^{\infty} \ldots
  \int_{-\infty}^{\infty} \prod_{i\in[d]} \exp(u[i] X[i]-\frac{u[i]^2}{4 t})du[i]\right]\\
&= \E\left[\int_{-\infty}^{\infty} \ldots
  \int_{-\infty}^{\infty} \exp(\dotprod{u,X}-\frac{\norm{u}^2}{4 t})\prod_{i\in[d]} du[i]\right]\\
&=  \int_{-\infty}^{\infty} \ldots
  \int_{-\infty}^{\infty} \E[\exp(\dotprod{u, X})]\exp(-\frac{\norm{u}^2}{4 t})\prod_{i\in[d]} du[i]
\end{align*}
By the sub-Gaussianity of $X$, the last expression is bounded by
{\allowdisplaybreaks
\begin{align*}
&\leq\int_{-\infty}^{\infty} \ldots \int_{-\infty}^{\infty} \exp(\half \dotprod{\mathbf{\mt{B}}u,u}-\frac{\norm{u}^2}{4 t})\prod_{i\in[d]} du[i]\\ 
&= \int_{-\infty}^{\infty} \ldots \int_{-\infty}^{\infty} \prod_{i\in[d]} \exp( \frac{\lambda_iu[i]^2}{2} - \frac{u[i]^2}{4t})  du[i]\\
&= \prod_{i\in[d]} \int_{-\infty}^{\infty} \exp(u[i]^2(\frac{\lambda_i}{2}-\frac{1}{4t}))du[i] = \Pi^{d/2}\big(\prod_{i\in[d]}(\frac{1}{4t}-\frac{\lambda_i}{2})\big)^{-\half}.
\end{align*}
}
The last equality follows from the fact that for any $a > 0$, $\int_{-\infty}^{\infty} \exp(-a \cdot s^2)ds= \sqrt{\Pi/a}$,
and from the assumption $t \leq  \frac{1}{4\lambda_1}$.
We conclude that 
\[
\E[\exp(t \norm{X}^2)] \leq (\prod_{i\in[d]}(1-2\lambda_i t))^{-\half} \leq \exp(2t \cdot \sum_{i=1}^d \lambda_i) = \exp(2t\cdot\trace(\mt{B})),
\]
where the second inequality holds since  $\forall x \in[0,1]$, $(1-x/2)^{-1}\leq \exp(x)$. 
\end{proof}

\subsection{Proof of Theorem \ref{thm:smallesteigwhpsg}}\label{app:smallesteigwhpsg}

In the proof of \thmref{thm:smallesteigwhpsg} we use the fact   
$\lambdamin(\mt{X}\mt{X}^T) = \inf_{\norm{x}_2=1}\norm{\mt{X}^Tx}^2$ and bound the right-hand side
via an $\epsilon$-net of the unit sphere in $\reals^m$, denoted by $S^{m-1} \triangleq \{ x \in \reals^m \mid \norm{x}_2 = 1\}$. 
An $\epsilon$-net of the unit sphere is a set $C \subseteq S^{m-1}$ such that 
$\forall x \in S^{m-1}, \exists x' \in C, \norm{x-x'}\leq \epsilon$. Denote the minimal size of an $\epsilon$-net for $S^{m-1}$ by $\cN_m(\epsilon)$, and by $\cC_m(\epsilon)$ a minimal $\epsilon$-net of $S^{m-1}$, so that $\cC_m(\epsilon) \subseteq S^{m-1}$ and $|\cC_m(\epsilon)| = \cN_m(\epsilon)$.
The proof of \thmref{thm:smallesteigwhpsg}
requires several lemmas.
First we prove a concentration result for the norm of a matrix defined by sub-Gaussian variables.
Then we bound the probability that the squared norm of a vector is small.
\begin{lemma}\label{lem:boundnormsubg}
Let $\mt{Y}$ be a $d\times m$ matrix with $m \leq d$, such that $\mt{Y}_{ij}$ are independent sub-Gaussian variables with moment $B$.
Let $\Sigma$ be a diagonal $d\times d$ PSD matrix such that $\Sigma \leq I$. Then for all $t \geq 0$ and $\epsilon \in (0,1)$,
\[
\P[\norm{\sqrt{\Sigma}\mt{Y}} \geq t] \leq \cN_m(\epsilon)\exp(\frac{\trace(\Sigma)}{2}-\frac{t^2(1-\epsilon)^2}{4B^2}).
\]
\end{lemma}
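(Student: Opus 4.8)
The plan is to use the variational characterization of the operator norm together with an $\epsilon$-net argument on the unit sphere $S^{m-1}$, reducing the bound on $\norm{\sqrt{\Sigma}\mt{Y}}$ to a union bound over finitely many fixed unit vectors. First I would recall the standard fact (see, e.g., \citealp{LedouxTa91}) that for any $d\times m$ matrix $\mt{A}$ and any $\epsilon$-net $\cC_m(\epsilon)$ of $S^{m-1}$, one has $\norm{\mt{A}} \leq \frac{1}{1-\epsilon}\sup_{x \in \cC_m(\epsilon)}\norm{\mt{A}x}$. Applying this with $\mt{A} = \sqrt{\Sigma}\mt{Y}$, it suffices to bound $\P[\norm{\sqrt{\Sigma}\mt{Y}x} \geq t(1-\epsilon)]$ for each fixed $x \in S^{m-1}$ and then take a union bound over the $\cN_m(\epsilon)$ net points.

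For a fixed unit vector $x$, the vector $Z := \mt{Y}x \in \reals^d$ has independent coordinates $Z[i] = \sum_{j} \mt{Y}_{ij} x[j]$, and each $Z[i]$ is a linear combination (with $\ell_2$-unit coefficient vector $x$) of independent sub-Gaussian variables with moment $B$; hence $Z[i]$ is itself sub-Gaussian with moment $B$, so $Z$ is a sub-Gaussian random vector with moment matrix $B^2 I_d$. Then $\sqrt{\Sigma}Z$ is a sub-Gaussian random vector with moment matrix $B^2\Sigma$ (since $\E[\exp\langle u, \sqrt{\Sigma}Z\rangle] = \E[\exp\langle \sqrt{\Sigma}u, Z\rangle] \leq \exp(\tfrac{B^2}{2}\norm{\sqrt{\Sigma}u}^2) = \exp(\tfrac12\langle B^2\Sigma u, u\rangle)$). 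Now I would apply \lemref{lem:sgvecmgf} to $\sqrt{\Sigma}Z$: for any $s \in (0, \frac{1}{4B^2\lambdamax(\Sigma)}]$ — and since $\Sigma \leq I$ we may take $s = \frac{1}{4B^2}$ — we get $\E[\exp(s\norm{\sqrt{\Sigma}Z}^2)] \leq \exp(2s B^2\trace(\Sigma)) = \exp(\tfrac12\trace(\Sigma))$. A Markov/Chernoff bound on $\norm{\sqrt{\Sigma}Z}^2 \geq t^2(1-\epsilon)^2$ then yields
\[
\P[\norm{\sqrt{\Sigma}\mt{Y}x} \geq t(1-\epsilon)] \leq \exp\!\Big(\tfrac{\trace(\Sigma)}{2} - \tfrac{t^2(1-\epsilon)^2}{4B^2}\Big).
\]
Taking the union bound over the $\cN_m(\epsilon)$ points of $\cC_m(\epsilon)$ gives exactly the claimed inequality.

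The only mildly delicate points are bookkeeping ones: verifying that a unit-coefficient linear combination of moment-$B$ sub-Gaussians is again moment-$B$ sub-Gaussian (immediate from the definition and independence, since $\E[\exp(t\sum_j x[j]\mt{Y}_{ij})] = \prod_j \E[\exp(tx[j]\mt{Y}_{ij})] \leq \prod_j \exp(t^2 x[j]^2 B^2/2) = \exp(t^2 B^2/2)$ using $\norm{x}=1$), and confirming that the hypothesis $\Sigma \leq I$ legitimizes the choice $s = 1/(4B^2)$ in \lemref{lem:sgvecmgf}, which requires $s \leq 1/(4\lambdamax(B^2\Sigma)) = 1/(4B^2\lambdamax(\Sigma))$ and hence $\lambdamax(\Sigma) \leq 1$. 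I do not anticipate a genuine obstacle here; the work is entirely in assembling the net bound, the linear-combination closure of sub-Gaussianity, and \lemref{lem:sgvecmgf} in the right order.
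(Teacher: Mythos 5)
Your proposal is correct and follows essentially the same route as the paper: an $\epsilon$-net reduction of the operator norm to vector norms, a union bound, the observation that $\sqrt{\Sigma}\mt{Y}x$ is a sub-Gaussian vector with moment matrix $B^2\Sigma$, an application of \lemref{lem:sgvecmgf} with $s = 1/(4B^2)$ (legitimate because $\Sigma \leq I$), and a Chernoff bound. The only difference is cosmetic: you first establish that $Z = \mt{Y}x$ has moment matrix $B^2 I$ and then rescale by $\sqrt{\Sigma}$, whereas the paper computes the moment-generating function of $\sqrt{\Sigma}\mt{Y}x$ directly in one pass.
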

\begin{proof}
We have $\norm{\sqrt{\Sigma}\mt{Y}} \leq \max_{x \in \cC_m(\epsilon)}\norm{\sqrt{\Sigma}\mt{Y}x}/(1-\epsilon)$, see for instance in \cite{Bennet75}.
Therefore, 
\begin{equation}\label{eq:mat2vec}
\P[\norm{\sqrt{\Sigma}\mt{Y}} \geq t] \leq \sum_{x \in \cC_m(\epsilon)}\P[\norm{\sqrt{\Sigma}\mt{Y}x} \geq (1-\epsilon)t].
\end{equation}
Fix $x \in \cC_m(\epsilon)$. Let $V = \sqrt{\Sigma}\mt{Y}x$, and assume $\Sigma = \diag(\lambda_1,\ldots,\lambda_d)$. For $u \in \reals^d$,
\begin{align*}
&\E[\exp(\dotprod{u, V})] = \E[\exp(\sum_{i\in[d]}u_i\sqrt{\lambda}_i \sum_{j\in[m]}\mt{Y}_{ij}x_j)]
= \prod_{j,i}\E[\exp(u_i\sqrt{\lambda}_i \mt{Y}_{ij}x_j)]\\
&\quad\leq \prod_{j,i}\exp(u_i^2\lambda_i B^2 x_j^2/2) = \exp(\frac{B^2}{2}\sum_{i\in[d]}u_i^2\lambda_i \sum_{j\in[m]}x_j^2 )\\
&\quad= \exp(\frac{B^2}{2} \sum_{i\in[d]}u_i^2\lambda_i) = \exp(\dotprod{B^2\Sigma u,u}/2).
\end{align*}
Thus $V$ is a sub-Gaussian vector with moment matrix $B^2\Sigma$.
Let $s = 1/(4B^2)$. Since $\Sigma \leq I$, we have $s \leq 1/(4B^2\max_{i\in[d]}\lambda_i)$.
Therefore, by \lemref{lem:sgvecmgf}, 
\[
\E[\exp(s\norm{V}^2)] \leq \exp(2sB^2\trace(\Sigma)).
\]
By Chernoff's method, 
$\P[\norm{V}^2 \geq z^2] \leq \E[\exp(s\norm{V}^2)]/\exp(sz^2)$. Thus
\[
\P[\norm{V}^2 \geq z^2] \leq \exp(2sB^2\trace(\Sigma) - sz^2) = \exp(\frac{\trace(\Sigma)}{2}-\frac{z^2}{4B^2}).
\]
Set $z = t(1-\epsilon)$. Then for all $x \in S^{m-1}$
\[
\P[\norm{\sqrt{\Sigma}\mt{Y}x} \geq t(1-\epsilon)] = \P[\norm{V} \geq t(1-\epsilon)] \leq \exp(\frac{\trace(\Sigma)}{2}-\frac{t^2(1-\epsilon)^2}{4B^2}).
\]
Therefore, by \eqref{eq:mat2vec},
\[
\P[\norm{\sqrt{\Sigma}\mt{Y}} \geq t] \leq \cN_m(\epsilon)\exp(\frac{\trace(\Sigma)}{2}-\frac{t^2(1-\epsilon)^2}{4B^2}).
\]
\end{proof}

\begin{lemma}\label{lem:boundsigmayxsubg}
Let $\mt{Y}$ be a $d\times m$ matrix with $m \leq d$, such that $\mt{Y}_{ij}$ are independent 
centered random variables with variance $1$ and fourth moments at most $B$.
Let $\Sigma$ be a diagonal $d\times d$ PSD matrix such that $\Sigma \leq I$.
There exist $\alpha > 0$ and $\eta \in (0,1)$ that depend only on $B$ such that for any $x \in S^{m-1}$
\[
\P[\norm{\sqrt{\Sigma}\mt{Y}x}^2\leq \alpha\cdot (\trace(\Sigma)-1)] \leq \eta^{\trace(\Sigma)}.
\]
\end{lemma}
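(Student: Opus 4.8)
The plan is to bound from below the squared norm $\norm{\sqrt{\Sigma}\mt{Y}x}^2 = \sum_{i\in[d]} \lambda_i \bigl(\sum_{j\in[m]} \mt{Y}_{ij}x_j\bigr)^2$ by showing that it is the sum of independent nonnegative random variables whose expectation is $\trace(\Sigma)$, and that this sum concentrates away from $0$ via a standard small-ball argument (the Paley--Zygmund inequality combined with a tensorization/Chernoff bound). Write $V_i = \langle \mt{Y}_{i\cdot}, x\rangle = \sum_{j\in[m]}\mt{Y}_{ij}x_j$ for each row $i$; since the $\mt{Y}_{ij}$ are independent, centered, unit-variance with $\norm{x}=1$, each $V_i$ has mean zero and $\E[V_i^2]=1$, and the $V_i$ are independent across $i$. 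Moreover, by the fourth-moment assumption on the entries and a Marcinkiewicz--Zygmund / direct expansion bound, $\E[V_i^4]$ is bounded by some constant $B'$ depending only on $B$ (this is where the fourth-moment hypothesis is used). Thus each $\lambda_i V_i^2$ is a nonnegative random variable, and the target quantity is $W \triangleq \sum_{i\in[d]} \lambda_i V_i^2$ with $\E[W] = \sum_i \lambda_i = \trace(\Sigma)$.

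The next step is the quantitative small-ball bound. First, by the Paley--Zygmund inequality, for each $i$, $\P[V_i^2 \geq \frac12] \geq \frac{(1/2)^2}{\E[V_i^4]} \geq \frac{1}{4B'} =: p$, a constant in $(0,1)$ depending only on $B$. So each event $E_i = \{V_i^2 \geq \frac12\}$ has probability at least $p$, and the $E_i$ are independent. On the event that $E_i$ holds, $\lambda_i V_i^2 \geq \lambda_i/2$. Since $W \geq \sum_{i: E_i \text{ holds}} \lambda_i/2$, it suffices to lower bound $\sum_{i: E_i} \lambda_i$. Introduce indicators $\xi_i = \one[E_i]$; then $T \triangleq \sum_i \lambda_i \xi_i$ has mean $\E[T] \geq p\cdot \trace(\Sigma)$, and since $0 \leq \lambda_i \xi_i \leq 1$ (using $\Sigma \leq I$), a multiplicative Chernoff bound gives $\P[T \leq \frac{p}{2}\trace(\Sigma)] \leq \exp(-c\, p\, \trace(\Sigma))$ for an absolute constant $c$. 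On the complementary event, $W \geq \frac{p}{4}\trace(\Sigma)$, so setting $\alpha = p/4$ (and absorbing the $-1$ by noting $\trace(\Sigma) \geq \trace(\Sigma)-1$, or shrinking $\alpha$ slightly so the bound $\alpha(\trace(\Sigma)-1)$ holds) and $\eta = \exp(-cp) \in (0,1)$ yields
\[
\P[\norm{\sqrt{\Sigma}\mt{Y}x}^2 \leq \alpha(\trace(\Sigma)-1)] \leq \eta^{\trace(\Sigma)},
\]
with $\alpha,\eta$ depending only on $B$, as required. Note the bound is uniform in $x \in S^{m-1}$ since nothing above depended on $x$ beyond $\norm{x}=1$.

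The main obstacle is the Chernoff step requiring each summand $\lambda_i \xi_i$ to be bounded — which is exactly why the hypothesis $\Sigma \leq I$ is needed (without it, a single huge $\lambda_i$ could make the concentration fail). A secondary technical point is verifying that $\E[V_i^4]$ is controlled by a constant depending only on $B$: expanding $\E[(\sum_j \mt{Y}_{ij}x_j)^4]$ and using independence, the only surviving terms are $\sum_j x_j^4 \E[\mt{Y}_{ij}^4] \leq B\sum_j x_j^4 \leq B$ and $3\sum_{j\neq k} x_j^2 x_k^2 \E[\mt{Y}_{ij}^2]\E[\mt{Y}_{ik}^2] \leq 3(\sum_j x_j^2)^2 = 3$, so $\E[V_i^4] \leq B + 3$, giving $p = 1/(4(B+3))$. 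Everything else is routine; the proof proceeds cleanly once these two points are in place.
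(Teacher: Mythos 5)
Your proof is correct and takes a genuinely different, more direct route than the paper. The paper first partitions $[d]$ into blocks $Z_1,\ldots,Z_k$ with $\sum_{i\in Z_j}\lambda_i$ between roughly $\tfrac12$ and $1$ (this is where $\Sigma\leq I$ enters there), applies a block-level Paley--Zygmund small-ball bound (\lemref{lem:sigyxboundbelow}, proved via symmetrization and Khinchine) to each block, and then invokes the Rudelson--Vershynin tensorization lemma (\lemref{lem:sumofbounded}) across blocks. You instead apply Paley--Zygmund per row $V_i=\langle \mt{Y}_{i\cdot},x\rangle$ to get $\P[V_i^2\geq\tfrac12]\geq p$ with $p$ depending only on $B$, pass to the indicators $\xi_i=\one[V_i^2\geq\tfrac12]$, and run a standard multiplicative Chernoff lower-tail bound directly on $T=\sum_i \lambda_i\xi_i$, using $\Sigma\leq I$ precisely to ensure each summand lies in $[0,1]$ (since $\E T\geq p\,\trace(\Sigma)$, Chernoff gives $\P[T\leq \tfrac{p}{2}\trace(\Sigma)]\leq e^{-p\,\trace(\Sigma)/8}$, which is the desired form). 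Your route avoids both the block-partitioning device and the black-box Rudelson--Vershynin lemma, replacing the latter with the elementary Chernoff bound for sums of independent $[0,1]$-valued variables; your direct moment expansion $\E[V_i^4]\leq B+3$ is also sharper than the $48B$ obtained in \lemref{lem:sigyxboundbelow} via symmetrization plus Khinchine. One thing the paper's route buys is that \lemref{lem:sigyxboundbelow} is stated under the weaker hypothesis that only the columns of $\mt{Y}$ are i.i.d.\ rather than full entrywise independence, so the block structure is set up for potential reuse under weaker assumptions; but that extra generality is not used in the present lemma, where all entries are independent and your coordinate-wise argument applies cleanly. Both proofs share the same two essential ingredients---a per-piece Paley--Zygmund small-ball estimate and a tensorization/Chernoff step---and both use $\Sigma\leq I$ at exactly the point where a single large $\lambda_i$ could otherwise defeat concentration.
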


To prove \lemref{lem:boundsigmayxsubg} we require \lemref{lem:sumofbounded} \citep[Lemma 2.2]{RudelsonVe08} and \lemref{lem:sigyxboundbelow}, which extends Lemma 2.6 in the same work.
\begin{lemma}\label{lem:sumofbounded}
Let $T_1,\ldots,T_n$ be independent 
non-negative random variables.
Assume that there are $\theta > 0$ and $\mu \in (0,1)$ such that for any $i$, $\P[T_i \leq \theta] \leq \mu$.
There are $\alpha > 0$ and $\eta \in (0,1)$ that depend only on $\theta$ and $\mu$ such that 
\[
\P[\sum_{i=1}^n T_i < \alpha n] \leq \eta^n.
\]
\end{lemma}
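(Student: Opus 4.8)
The plan is to reduce $\sum_i T_i$ to a sum of independent Bernoulli indicators and then apply a standard Chernoff estimate. Since each $T_i$ is non-negative, we have the pointwise inequality $T_i \ge \theta\,\one[T_i>\theta]$, so setting $B_i \triangleq \one[T_i>\theta]$ gives $\sum_{i=1}^n T_i \ge \theta\sum_{i=1}^n B_i$. Hence for any $\alpha > 0$ the event $\{\sum_{i}T_i < \alpha n\}$ is contained in $\{\sum_i B_i < \alpha n/\theta\}$, and it suffices to control the lower tail of $\sum_i B_i$. The $B_i$ are independent (being functions of the independent $T_i$) Bernoulli variables with success probabilities $p_i \triangleq \P[B_i = 1] = \P[T_i > \theta] = 1 - \P[T_i \le \theta] \ge 1 - \mu =: p > 0$; they need not be identically distributed, but the uniform lower bound $p_i \ge p$ is all that is needed.

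Next I would fix a parameter $\lambda > 0$ and apply Markov's inequality to $\exp(-\lambda\sum_i B_i)$:
\[
\P\Big[\sum_{i=1}^n B_i \le \tfrac{p}{2}n\Big] \le \exp\!\big(\tfrac{\lambda p}{2}n\big)\prod_{i=1}^n \E[e^{-\lambda B_i}].
\]
For each factor, $\E[e^{-\lambda B_i}] = 1 - p_i(1-e^{-\lambda}) \le 1 - p(1-e^{-\lambda}) \le \exp\!\big(-p(1-e^{-\lambda})\big)$, using $1-e^{-\lambda}\ge 0$ and $1-x\le e^{-x}$. Substituting, $\P[\sum_i B_i \le pn/2] \le \exp\!\big(-np\,(1 - e^{-\lambda} - \lambda/2)\big)$, and taking $\lambda = 1$ makes the bracket equal the positive absolute constant $c_0 \triangleq \tfrac12 - e^{-1}$.

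Finally, set $\alpha \triangleq \theta p / 2 = \theta(1-\mu)/2 > 0$ and $\eta \triangleq e^{-c_0 p} = e^{-c_0(1-\mu)} \in (0,1)$, both depending only on $\theta$ and $\mu$. Then $\alpha n/\theta = pn/2$, so combining the inclusion from the first step with the Chernoff bound,
\[
\P\Big[\sum_{i=1}^n T_i < \alpha n\Big] \le \P\Big[\sum_{i=1}^n B_i < \tfrac{p}{2}n\Big] \le \P\Big[\sum_{i=1}^n B_i \le \tfrac{p}{2}n\Big] \le \eta^n,
\]
which is the assertion. The argument is largely routine; the only points needing attention are getting the direction of the truncation $T_i \mapsto \theta\,\one[T_i>\theta]$ right (discarding the part of $T_i$ above $\theta$ can only help a lower-tail bound) and observing that the possibly non-identical $p_i$ are handled uniformly via $p_i \ge 1-\mu$.
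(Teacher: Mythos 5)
Your proof is correct. The paper does not include a proof of this lemma---it is cited as Lemma 2.2 of \citet{RudelsonVe08}---but your argument (truncate each $T_i$ to $\theta\,\one[T_i>\theta]$, pass to a lower-tail Chernoff bound for the independent Bernoulli indicators, and absorb the non-identical success probabilities via the uniform bound $p_i \ge 1-\mu$) is the standard route for this kind of small-ball tensorization and yields explicit constants $\alpha = \theta(1-\mu)/2$ and $\eta = \exp\!\big(-(\tfrac12 - e^{-1})(1-\mu)\big)$ depending only on $\theta$ and $\mu$, as required.
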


\begin{lemma}\label{lem:sigyxboundbelow}
Let $\mt{Y}$ be a $d\times m$ matrix with $m \leq d$, such that the columns of $\mt{Y}$ are i.i.d.~random vectors. Assume further that $\mt{Y}_{ij}$ are centered, and have a variance of $1$ and a fourth moment at most $B$.
Let $\Sigma$ be a diagonal $d \times d$ PSD matrix.  
Then for all $x \in S^{m-1}$,
\[
\P[\norm{\sqrt{\Sigma} \mt{Y} x} \leq \sqrt{\trace(\Sigma)/2}] \leq 1-1/(196B).
\]
\end{lemma}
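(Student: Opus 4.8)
The plan is to treat this as an anti-concentration statement for the nonnegative random variable $Z \triangleq \norm{\sqrt{\Sigma}\mt{Y}x}^2 = (\mt{Y}x)^T\Sigma(\mt{Y}x)$, proved by the second-moment (Paley--Zygmund) method. Let $W = \mt{Y}x \in \reals^d$ have coordinates $W_1,\dots,W_d$, so $W_i = \sum_{j\in[m]}x_j\mt{Y}_{ij}$, and set $\lambda_i = \Sigma_{ii}$; since $\Sigma$ is diagonal, $Z = \sum_i \lambda_i W_i^2$. The event in the statement is $\{Z \le \trace(\Sigma)/2\}$, so once we show $\E[Z] = \trace(\Sigma)$ it suffices to prove $\P[Z > \E[Z]/2] \ge 1/(196B)$. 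The structural observation is that, since the \emph{columns} of $\mt{Y}$ are independent, for each fixed row index $i$ the entries $\mt{Y}_{i1},\dots,\mt{Y}_{im}$ are independent, centered, with variance $1$ and fourth moment at most $B$ (in particular $B \ge 1$, as the fourth moment dominates the squared variance); hence $W_i$ is a sum of independent centered variables $\xi_j \triangleq x_j\mt{Y}_{ij}$ with $\E[\xi_j^2] = x_j^2$ and $\E[\xi_j^4] \le B x_j^4$.

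First I would compute the mean: $\E[W_i^2] = \sum_j x_j^2 = 1$ since $\norm{x} = 1$, whence $\E[Z] = \sum_i \lambda_i\E[W_i^2] = \trace(\Sigma)$ (we may assume $\Sigma \ne 0$, the case $\Sigma = 0$ being degenerate). Next I would bound the second moment: expanding, $\E[Z^2] = \sum_{i,i'}\lambda_i\lambda_{i'}\E[W_i^2 W_{i'}^2] \le \sum_{i,i'}\lambda_i\lambda_{i'}\sqrt{\E[W_i^4]\E[W_{i'}^4]}$ by Cauchy--Schwarz, which is exactly the step that lets me avoid reasoning about the uncontrolled dependence between $\mt{Y}_{ij}$ and $\mt{Y}_{i'j}$ for $i \ne i'$ inside a single column. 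For a single row, the fourth-moment identity for a sum of independent centered variables gives $\E[W_i^4] = \sum_j \E[\xi_j^4] + 3\sum_{j\ne j'}\E[\xi_j^2]\E[\xi_{j'}^2] \le B\sum_j x_j^4 + 3\sum_{j\ne j'}x_j^2 x_{j'}^2 \le (B+3)\bigl(\sum_j x_j^2\bigr)^2 = B+3 \le 4B$. Therefore $\E[Z^2] \le 4B\bigl(\sum_i\lambda_i\bigr)^2 = 4B\,(\E[Z])^2$.

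Finally, the Paley--Zygmund inequality (equivalently: write $\E[Z] = \E[Z\,\one\{Z \le \E[Z]/2\}] + \E[Z\,\one\{Z > \E[Z]/2\}]$, bound the first summand by $\E[Z]/2$ and the second by $\sqrt{\E[Z^2]\,\P[Z > \E[Z]/2]}$ via Cauchy--Schwarz, and rearrange) yields $\P[Z > \E[Z]/2] \ge (\E[Z])^2/(4\E[Z^2]) \ge 1/(16B) \ge 1/(196B)$, which is precisely $\P[\norm{\sqrt{\Sigma}\mt{Y}x} \le \sqrt{\trace(\Sigma)/2}] \le 1 - 1/(196B)$. I do not expect a real obstacle here; the only points requiring care are (i) using the column-independence correctly --- it gives independence \emph{within} each row, which legitimizes the fourth-moment computation for $W_i$, while the Cauchy--Schwarz split handles the cross-terms so within-column dependence never enters --- and (ii) tracking the constants, noting $B \ge 1$; the stated constant $196$ is far from tight ($16$, or $12$ with the slightly sharper $\E[W_i^4] \le 3B$, would do).
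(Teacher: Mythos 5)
Your proof is correct and follows the same overall strategy as the paper: write $Z = \sum_i \lambda_i T_i$ with $T_i = (\sum_j \mt{Y}_{ij}x_j)^2$, compute $\E[Z] = \trace(\Sigma)$, bound $\E[Z^2]$ by Cauchy--Schwarzing the cross-terms $\E[T_iT_{i'}]$ so that only single-row fourth moments enter (thereby sidestepping the uncontrolled within-column dependence), and finish with Paley--Zygmund at threshold $\theta = 1/2$. The one place you diverge is the bound on $\E[T_i^2] = \E[W_i^4]$: the paper symmetrizes (Ledoux--Talagrand, cost $2^4=16$) and then applies Khinchine's inequality (cost $3$) followed by a second-moment expansion, arriving at $\E[T_i^2]\leq 48B$, whereas you expand $\E[W_i^4]$ directly using the exact fourth-moment identity for a sum of independent centered variables, $\E[W_i^4]=\sum_j\E[\xi_j^4]+3\sum_{j\neq j'}\E[\xi_j^2]\E[\xi_{j'}^2]\leq B+3\leq 4B$ (using $B\geq 1$). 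Your route is more elementary, avoids two auxiliary inequalities, and yields a constant about an order of magnitude better ($1/(16B)$ in place of the paper's $1/(192B)$, which the lemma states loosely as $1/(196B)$); since the constant here only feeds into the unspecified constants of Lemma~\ref{lem:boundsigmayxsubg} and Theorem~\ref{thm:smallesteigwhpsg}, either version serves equally well.
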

\begin{proof}
Let $x\in S^{m-1}$, and $T_i = (\sum_{j=1}^m \mt{Y}_{ij} x_j)^2$. Let $\lambda_1,\ldots,\lambda_d$ be the values on the diagonal of $\Sigma$, and let $T_\Sigma = \norm{\sqrt{\Sigma} \mt{Y} x}^2 = \sum_{i=1}^d \lambda_i T_i$. 
First, since $\E[\mt{Y}_{ij}] = 0$ and $\E[\mt{Y}_{ij}] = 1$ for all $i,j$, we have 
\[
\E[T_i] = \sum_{i\in [m]} x^2_j\E[\mt{Y}_{ij}^2] = \norm{x}^2 = 1.
\]
Therefore $\E[T_\Sigma] = \trace(\Sigma)$.
Second, since $\mt{Y}_{i1},\ldots,\mt{Y}_{im}$ are independent and centered, 
we have \citep[Lemma 6.3]{LedouxTa91}
\[
\E[T_i^2] = \E[(\sum_{j\in[m]} \mt{Y}_{ij} x_j)^4] \leq 16\E_\sigma[(\sum_{j\in[m]} \sigma_j\mt{Y}_{ij} x_j)^4],
\]
where $\sigma_1,\ldots,\sigma_m$ are independent uniform $\binlab$ variables.
Now, by Khinchine's inequality \citep{NazarovPo00},
\[
\E_\sigma[(\sum_{j\in[m]} \sigma_j\mt{Y}_{ij} x_j)^4] \leq 3\E[(\sum_{j\in[m]} \mt{Y}^2_{ij} x^2_j)^2]
= 3\sum_{j,k\in[m]} x^2_j x^2_k\E[\mt{Y}^2_{ij}]\E[\mt{Y}^2_{ik}].
\]
Now $\E[\mt{Y}^2_{ij}]\E[\mt{Y}^2_{ik}] \leq \sqrt{\E[\mt{Y}^4_{ij}]\E[\mt{Y}^4_{ik}]} \leq B$.
Thus $\E[T_i^2] \leq 48B\sum_{j,k\in[m]} x^2_j x^2_k = 48B\norm{x}^4 = 48B.$
Thus, 
\begin{align*}
\E[T_\Sigma^2] &= \E[(\sum_{i=1}^d \lambda_i T_i)^2] =  \sum_{i,j=1}^d \lambda_i \lambda_j \E[ T_i T_j] \\
&\leq \sum_{i,j=1}^d \lambda_i \lambda_j \sqrt{\E[T_i^2] \E[T_j^2]} \leq 48B (\sum_{i=1}^d \lambda_i)^2 = 48B\cdot\trace(\Sigma)^2.
\end{align*}
By the Paley-Zigmund inequality \citep{PaleyZy32}, for $\theta \in [0,1]$
\[
\P[T_\Sigma \geq \theta \E[T_\Sigma]] \geq (1-\theta)^2 \frac{\E[T_\Sigma]^2}{\E[T_\Sigma^2]} \geq \frac{(1-\theta)^2}{48B}.
\]
Therefore, setting $\theta = 1/2$, we get $\P[T_\Sigma \leq \trace(\Sigma)/2] \leq 1 - 1/(196B).$
\end{proof}

\begin{proof}[of \lemref{lem:boundsigmayxsubg}]
Let $\lambda_1,\ldots,\lambda_d \in [0,1]$ be the values on the diagonal of $\Sigma$.
Consider a partition $Z_1,\ldots,Z_k$ of $[d]$, and denote $L_j =\sum_{i\in Z_j} \lambda_i$. 
There exists such a partition such that for all $j\in [k]$, $L_j \leq 1$, and for all $j \in [k-1]$, $L_j > \half$. 
Let $\Sigma[j]$ be the sub-matrix of $\Sigma$ that includes the rows and columns whose indexes are in $Z_j$. Let $\mt{Y}[j]$ be the sub-matrix of $\mt{Y}$ that includes the rows in $Z_j$. Denote $T_j = \norm{\sqrt{\Sigma[j]} \mt{Y}[j] x}^2$.
Then
\[
\norm{\sqrt{\Sigma}\mt{Y}x}^2 = 
\sum_{j\in[k]} \sum_{i\in Z_j} \lambda_i (\sum_{j=1}^m \mt{Y}_{ij} x_j)^2 = 
\sum_{j\in[k]} T_j.
\]

We have $\trace(\Sigma) = \sum_{i=1}^d \lambda_i \geq \sum_{j\in[k-1]} L_j \geq \half(k-1)$. In addition, $L_j \leq 1$ for all $j \in [k]$. Thus
$
\trace(\Sigma) \leq k \leq 2\trace(\Sigma)+1. 
$
For all $j \in [k-1]$, $L_j \geq \half$, thus by \lemref{lem:sigyxboundbelow}, $\P[T_j \leq 1/4] \leq 1 - 1/(196B)$.
Therefore, by \lemref{lem:sumofbounded} there are $\alpha > 0$ and $\eta \in (0,1)$ that depend only on $B$ such that 
\begin{align*}
&\P[\norm{\sqrt{\Sigma}\mt{Y}x}^2 < \alpha \cdot(\trace(\Sigma)-1)] \leq 
\P[\norm{\sqrt{\Sigma}\mt{Y}x}^2 < \alpha (k-1)] \\
&\quad= \P[\sum_{j\in[k]} T_j < \alpha (k-1)] \leq
 \P[\sum_{j\in[k-1]} T_j < \alpha (k-1)] \leq \eta^{k-1} \leq \eta^{2\trace(\Sigma)}.
\end{align*}
The lemma follows by substituting $\eta$ for $\eta^2$.
\end{proof}

\begin{proof}[of \thmref{thm:smallesteigwhpsg}]
We have
\begin{align}\label{eq:lambdam}
&\sqrt{\lambdamin(\mt{X}\mt{X}^T)} = \inf_{x\in S^{m-1}}\norm{\mt{X}^Tx} \geq 
\min_{x \in \cC_m(\epsilon)}\norm{\mt{X}^Tx}-\epsilon\norm{\mt{X}^T}.
\end{align}
For brevity, denote $L = \trace(\Sigma)$. Assume $L \geq 2$.
Let $m\leq L\cdot\min(1,(c-K\epsilon)^2)$ where $c, K, \epsilon$ are constants that will be set later such that $c- K\epsilon > 0$. By \eqref{eq:lambdam}
\begin{align}
&\P[\lambdamin(\mt{X}\mt{X}^T) \leq m] \leq
\P[\lambdamin(\mt{X}\mt{X}^T) \leq (c-K\epsilon)^2L] \notag\\
&\quad\leq 
  \P[\min_{x\in\cC_m(\epsilon)}\norm{\mt{X}^Tx} -
    \epsilon\norm{\mt{X}^T} \leq (c-K\epsilon)\sqrt{L}] \label{eq:proballprev}
    \\ &\quad\leq
  \P[\norm{\mt{X}^T} \geq K\sqrt{L}] + \P[\min_{x\in\cC_m(\epsilon)}\norm{\mt{X}^Tx} \leq c\sqrt{L}].\label{eq:proball}
\end{align}
The last inequality holds since the inequality in line (\ref{eq:proballprev}) implies at least one of the inequalities in line (\ref{eq:proball}). We will now upper-bound each of the terms in line (\ref{eq:proball}).
We assume w.l.o.g. that $\Sigma$ is not singular (since zero rows and columns can be removed from $\mt{X}$ without changing $\lambdamin(\mt{X}\mt{X}^T)$). Define $\mt{Y} \triangleq \sqrt{\Sigma^{-1}}\mt{X}^T $.
Note that $\mt{Y}_{ij}$ are independent sub-Gaussian variables with (absolute) moment $\rmom$.
To bound the first term in line (\ref{eq:proball}), note that by \lemref{lem:boundnormsubg}, for any $K > 0$,
\begin{equation*}
\P[\norm{\mt{X}^T} \geq K\sqrt{L}] = \P[\norm{\sqrt{\Sigma}\mt{Y}} \geq K\sqrt{L}] \leq \cN_m(\half)\exp(L(\half-\frac{K^2}{16\rmom^2})).
\end{equation*}
By \cite{RudelsonVe09}, Proposition 2.1, for all $\epsilon \in[0,1]$, $\cN_m(\epsilon) \leq 2m(1+\frac{2}{\epsilon})^{m-1}.$ Therefore
\[
\P[\norm{\mt{X}^T} \geq K\sqrt{L}] \leq 2m5^{m-1}\exp(L(\half-\frac{K^2}{16\rmom^2})).
\]
Let $K^2 = 16\rmom^2(\frac{3}{2}+\ln(5)+\ln(2/\delta))$.
Recall that by assumption $m \leq L$, and $L \geq 2$.
Therefore
\begin{align*}
&\P[\norm{\mt{X}^T} \geq K\sqrt{L}]  \leq 2m5^{m-1}\exp(-L(1+\ln(5)+\ln(2/\delta)))\notag\\
&\quad\leq 2L5^{L-1}\exp(-L(1+\ln(5)+\ln(2/\delta))).
\end{align*}
Since $L \geq 2$, we have $2L\exp(-L) \leq 1$. Therefore
\begin{align}
\P[\norm{\mt{X}^T} \geq K\sqrt{L}]\leq 2L\exp(-L-\ln(2/\delta))\leq \exp(-\ln(2/\delta)) = \frac{\delta}{2}.\label{eq:prob1}
\end{align}

To bound the second term in line (\ref{eq:proball}), since $\mt{Y}_{ij}$ are sub-Gaussian with moment $\rmom$, $\E[\mt{Y}_{ij}^4] \leq 5\rmom^4$ \citep[Lemma 1.4]{BuldyginKo98}.
Thus, by \lemref{lem:boundsigmayxsubg}, there are $\alpha >0$ and $\eta \in (0,1)$ that depend only on $\rmom$ such that for all $x\in S^{m-1}$, $\P[\norm{\sqrt{\Sigma}\mt{Y}x}^2\leq \alpha (L-1)] \leq \eta^{L}$.
Set $c = \sqrt{\alpha/2}$. Since $L\geq 2$, we have $c\sqrt{L} \leq \sqrt{\alpha(L-1)}$. Thus
\begin{align*}
&\P[\min_{x\in\cC_m(\epsilon)}\norm{\mt{X}^Tx} \leq c\sqrt{L}] \leq
\sum_{x\in\cC_m(\epsilon)}\P[\norm{\mt{X}^Tx} \leq c \sqrt{L}] \\
&\quad\leq \sum_{x\in\cC_m(\epsilon)}\P[\norm{\sqrt{\Sigma}\mt{Y}x} \leq \sqrt{\alpha(L-1)}] 
\leq \cN_m(\epsilon)\eta^{L}.
\end{align*}

Let $\epsilon = c/(2K)$, so that $c - K\epsilon > 0$.
Let $\theta = \min(\half,\frac{\ln(1/\eta)}{2\ln(1+2/\epsilon)})$.
Set $L_\circ$ such that $\forall L \geq L_\circ$, $L \geq \frac{2\ln(2/\delta)+2\ln(L)}{\ln(1/\eta)}$. 
For $L \geq L_\circ$ and $m \leq \theta L \leq L/2$,
\begin{align}
\cN_m(\epsilon)\eta^{L} &\leq 2m(1+2/\epsilon)^{m-1}\eta^{L} \notag\\
&\leq L\exp(L(\theta \ln(1+2/\epsilon)-\ln(1/\eta)))\notag\\
&=  \exp(\ln(L) + L(\theta \ln(1+2/\epsilon)-\ln(1/\eta)/2) - L\ln(1/\eta)/2) \notag\\
&\leq  \exp(L(\theta \ln(1+2/\epsilon)-\ln(1/\eta)/2)+\ln(\delta/2))\label{eq:line1}\\
&\leq \exp(\ln(\delta/2)) = \frac{\delta}{2}.\label{eq:line3}
\end{align}
Line (\ref{eq:line1}) follows from $L \geq L_\circ$, and line (\ref{eq:line3})
follows from $\theta \ln(1+2/\epsilon)-\ln(1/\eta)/2 \leq 0$.
Set $\beta = \min\{(c-K\epsilon)^2,1,\theta\}$.
Combining \eqref{eq:proball}, \eqref{eq:prob1} and \eqref{eq:line3} we have
that if $L \geq \bar{L} \triangleq \max(L_\circ,2)$, then
$
\P[\lambdamin(\mt{X}\mt{X}^T) \leq m] \leq \delta
$
for all $m \leq \beta L$.
Specifically, this holds for all $L \geq 0$ and for all $m \leq \beta (L-\bar{L})$. Letting $C = \beta \bar{L}$ and substituting $\delta$ for $1-\delta$ we 
get the statement of the theorem.
\end{proof}

\bibliography{bib}

\end{document}